\documentclass{article}

\PassOptionsToPackage{square,sort&compress,numbers}{natbib}
\usepackage[final]{neurips_2022}

\usepackage[utf8]{inputenc} 
\usepackage[T1]{fontenc}    
\usepackage{hyperref}       
\usepackage{url}            
\usepackage{booktabs}       
\usepackage{amsfonts}       
\usepackage{nicefrac}       
\usepackage{microtype}      
\usepackage{xcolor}         

\usepackage{amsmath}
\usepackage{amssymb}
\usepackage{mathtools}
\usepackage{amsthm}

\usepackage{graphicx}
\usepackage{bbm}
\usepackage{booktabs}
\usepackage{mathtools}
\usepackage[inline]{enumitem}
\usepackage[ruled,vlined,nofillcomment]{algorithm2e}
\usepackage{multirow}
\usepackage{wrapfig}
\usepackage{booktabs}
\usepackage{listings}
\usepackage{color}
\usepackage{comment}
\usepackage{nccmath}
\usepackage{float}

\theoremstyle{plain}
\newtheorem{theorem}{Theorem}[section]
\newtheorem{proposition}[theorem]{Proposition}
\newtheorem{lemma}[theorem]{Lemma}

\theoremstyle{definition}

\newtheorem{assumption}[theorem]{Assumption}
\theoremstyle{remark}

\title{Disentangling the Predictive Variance of \\ Deep Ensembles
through the Neural Tangent Kernel}

%

\author{%
  Seijin Kobayashi\\
  Department of Computer Science\\
  ETH Z\"{u}rich\\
  \texttt{seijink@ethz.ch}\\
    \And
  Pau Vilimelis Aceituno\\
  Institute of Neuroinformatics\\
  University of Z\"{u}rich \& ETH Z\"{u}rich\\
  \texttt{pau@ini.ethz.ch}\\
  \And
  Johannes von Oswald\\
  Department of Computer Science\\
  ETH Z\"{u}rich\\
  \texttt{voswaldj@ethz.ch} \\
}

\begin{document}

\maketitle
\vspace{-0.4cm}

\begin{abstract}
Identifying unfamiliar inputs, also known as out-of-distribution (OOD) detection, is a crucial property of any decision making process. A simple and empirically validated technique is based on deep ensembles where the variance of predictions over different neural networks acts as a substitute for input uncertainty. Nevertheless, a theoretical understanding of the inductive biases leading to the performance of deep ensemble's uncertainty estimation is missing. To improve our description of their behavior, we study deep ensembles with large layer widths operating in simplified linear training regimes, in which the functions trained with gradient descent can be described by the neural tangent kernel. We identify two sources of noise, each inducing a distinct inductive bias in the predictive variance at initialization. We further show theoretically and empirically that both noise sources affect the predictive variance of non-linear deep ensembles in toy models and realistic settings after training. Finally, we propose practical ways to eliminate part of these noise sources leading to significant changes and improved OOD detection in trained deep ensembles.
\end{abstract}

\section{Introduction}
 Modern artificial intelligence uses intricate deep neural networks to process data, make predictions and take actions. One of the crucial steps toward allowing these agents to act in the real world is to incorporate a reliable mechanism for estimating uncertainty -- in particular when human lives are at risk \citep{Leibig084210, Carvalho2016PredictiveCU}. Although the ongoing success of deep learning is remarkable, the increasing data, model and training algorithm complexity make a thorough understanding of their inner workings increasingly difficult. This applies when trying to understand when and why a system is certain or uncertain about a given output and is therefore the topic of numerous publications \cite{ovadia2019trust, liang_enhancing_2018, nalisnick2018do, hendrycks_baseline_2017, lee_simple_2018, ren_likelihood_2019, late_phase_2021, wen_batchensemble_2020}. 
 
 Principled mechanisms for uncertainty quantification would rely on Bayesian inference with an appropriate prior. This has led to the development of (approximate) Bayesian inference methods for deep neural networks \cite{mackay_practical_1992,welling_bayesian_2011, 10.5555/3044805.3045035, pmlr-v37-blundell15,  gal_dropout_2016}. 
Simply aggregating an ensemble of models \cite{lakshminarayanan_simple_2017} and using the disagreement of their predictions as a substitute for uncertainty has gained popularity. However, the theoretical justification of deep ensembles remains a matter of debate, see \citet{wilson2020bayesian}. Although a link between Bayesian inference and deep ensembles can be obtained, see \cite{he2020bayesian, d'angelo2021repulsive}, an understanding of the widely adopted \textit{standard} deep ensemble and it's predictive distribution is still missing \cite{kobayashi2021reversed, random_pert}. Note that even for principled Bayesian approaches there is no valid theoretical or practical OOD guarantee without a proper definition of out-of-distribution data \cite{DBLP:journals/corr/abs-2110-06020}.  
 
One avenue to simplify the analyses of deep neural networks that gained a lot of attention in recent years is to increase the layer width to infinity \cite{lee2018deep,  jacot2018neural} or to very large values \cite{lee_wide, chizat2020lazy}. In the former regime, an intriguing equivalence of infinitely wide deep networks at initialization and Gaussian processes allows for exact Bayesian inference and therefore principled uncertainty estimation. 
Although it is not possible to generally derive a Bayesian posterior for \textit{trained} infinite or finite layer width networks, the resulting model predictions can be expressed analytically by kernels. Given this favorable mathematical description, the question of how powerful and similar these models are compared to their arguably black-box counterparts arises, with e.g. moderate width, complex optimizers and training stochasticity \cite{arora2019exact, lee_wide, DBLP:conf/nips/FortDPK0G20, NEURIPS2020_ad086f59, NEURIPS2020_a9df2255, yu2020enhanced, Geiger_2020_2}.

In this paper, we leverage this tractable description of trained neural networks and take a first step towards understanding the predictive distribution of neural networks ensembles with large but finite width. Building on top of the various studies mentioned, we do so by studying the case where these networks can be described by a kernel and study the effect of two distinct noise sources stemming from the network initialization: The noise in the functional initialization of the network and the initialization noise of the gradient, which affects the training and therefore the kernel.
As we will show, these noise sources will affect the predictive distributions differently and influence the network's generalization on in- and out-of-distribution data.

Our contributions are the following: 
\begin{itemize}
    \item We provide a first order approximation of the predictive variance of an ensemble of linearly trained, finite-width neural networks. We identify interpretable terms in the refined variance description, originating from 2 distinct noise sources, and further provide their analytical expression for single layer neural networks with ReLU non-linearities. 
    
    \item We show theoretically that under mild assumptions these refined variance terms survive nonlinear training for sufficiently large width, and therefore contribute to the predictive variance of non-linearly trained deep ensembles. Crucially, our result suggests that any finer description of the predictive variance of a linearized ensemble can be erased by nonlinear training.
    
    \item We conduct empirical studies validating our theoretical results, and investigate how the different variance terms influence generalization on in - and out-of-distribution. We highlight the practical implications of our theory by proposing simple methods to isolate noise sources in realistic settings which can lead to improved OOD detection.\footnote{Source code for all experiments: \url{github.com/seijin-kobayashi/disentangle-predvar}}
    
\end{itemize}

\section{Neural network ensembles and their relations to kernels}

Let $f_\theta = f( \cdot, \theta): \mathbb{R}^{h_0} \rightarrow \mathbb{R}^{h_L}$ denote a neural network parameterized by the weights $\theta \in \mathbb{R}^n$. The weights consist of weight matrices and bias vectors $\{(W_l, b_l)\}^L_{l=1}$ describing the following feed-forward computation beginning with the input data $x^0$:
\begin{equation}
\label{eq:nn_def}
    \begin{split}
    z^{l+1} &= \frac{\sigma_w}{\sqrt{h_l}}W^{l+1}x^{l} + b^{l+1} \text{ with } x^{l+1} = \phi(z^{l+1}). \end{split}
\end{equation}
Here $h_l$ is the dimension of the vector $x^l$ and $\phi$ is a pointwise non-linearity such as the softplus $\log(1+e^x)$ or Rectified Linear Unit i.e. $\max(0, x)$ (ReLU)  \cite{Hahnloser2000DigitalSA}. We follow \citet{jacot2018neural} and use $\sigma_w=\sqrt{2}$ to control the standard deviation of the initialised weights $W^l_{ij}, b^l_{i} \sim \mathcal{N}(0,1)$.

Given a set of $N$ datapoints $\mathcal{X} = (x_i)_{0 \leq i \leq N} \in \mathbb{R}^{N \times h_0}$ and targets $\mathcal{Y}=(y_i)_{0 \leq i \leq N} \in \mathbb{R}^{N \times h_L}$, we consider regression problems with the goal of finding $\theta^*$ which minimizes the mean squared error (MSE) loss $\mathcal{L}(\theta) = \frac{1}{2}\sum_{i=0}^N\|f(x_i, \theta)-y_i\|_2^2$. For ease of notation, we denote by $f(\mathcal{X},\theta)\in \mathbb{R}^{N \cdot h_L}$ the vectorized evaluation of $f$ on each datapoint and $\mathcal{Y} \in \mathbb{R}^{N \cdot h_L}$ the target vector for the entire dataset. As the widths of the hidden layers grow towards infinity, the distribution of outputs at initialization $f(x, \theta_0)$ converges to a multivariate gaussian distribution due to the Central Limit Theorem \cite{lee2018deep}. The resulting function can then accurately be described as a zero-mean Gaussian process, coined Neural Network Gaussian Process (NNGP), where the covariance of a pair of output neurons $i,j$ for data $x$ and $x'$ is given by the kernel 

\begin{equation}
    \mathcal{K}(x, x')^{i,j} = \lim_{h \rightarrow \infty} \mathbb{E}[f^i(x, \theta_0)f^j(x', \theta_0)]
\end{equation}

with $h=\min(h_1,...,h_{L-1})$. This equivalence can be used to analytically compute the Bayesian posterior of infinitely wide Bayesian neural networks \cite{BNN}.

On the other hand infinite width models \textit{trained} via gradient descent (GD) can be described by the Neural Tangent Kernel (NTK). Given $\theta$, the NTK $\Theta_\theta$ of $f_{\theta}$ is a matrix in $\mathbb{R}^{N \cdot h_L} \times \mathbb{R}^{N \cdot h_L}$
with the $(i, j)$-entry given as the following dot product 
\begin{equation}
    \langle\nabla_\theta f (x_i, \theta),\nabla_\theta f (x_j,\theta) \rangle
\end{equation}
where we consider without loss of generality the output dimension of $f$ to be $h_L=1$ for ease of notation. Furthermore, we denote $\Theta_\theta(\mathcal{X}, \mathcal{X}) \coloneqq \nabla_\theta f(\mathcal{X},\theta) \nabla_\theta f(\mathcal{X},\theta)^T$ the matrix and $\Theta_\theta(x', \mathcal{X}) \coloneqq \nabla_\theta f(x',\theta)\nabla_\theta f(\mathcal{X},\theta)^T $ the vector form of the NTK while highlighting the dependencies on different datapoints.

\citet{lee_wide} showed that for sufficiently wide networks under common parametrizations, the gradient descent dynamics of the model with 
a sufficiently small learning rate behaves closely to its linearly trained counterpart, i.e. its first-order Taylor expansion in parameter space. In this gradient flow regime, after training on the mean squared error converges, we can rewrite the predictions of the linearly trained models in the following closed-form:

\begin{align}
\label{eq:linear_formula}
    f^{\text{lin}}(x) = &f(x, \theta_0) + \mathcal{Q}_{\theta_0}(x, \mathcal{X})(\mathcal{Y}-f(\mathcal{X}, \theta_0))
\end{align}

where $\mathcal{Q}_{\theta_0}(x, \mathcal{X}) \coloneqq \Theta_{\theta_0}(x, \mathcal{X})\Theta_{\theta_0}(\mathcal{X}, \mathcal{X})^{-1}$ with $\Theta_{\theta_0}$ the NTK at initialization, i.e. of $f(., \theta_0)$.
The linearization error throughout training $\sup_{t\geq0}\|f_t^{\text{lin}}(x)-f_t(x)\|$ is further shown to decrease with the width of the network, bounded by $\mathcal{O}(h^{-\frac{1}{2}})$. Note that one can also linearize the dynamics without increasing the width of a neural network but by simply changing its output scaling \cite{chizat2020lazy}.

When moving from finite to the infinite width limit the training of a multilayer perceptron (MLP) can again be described with the NTK, which now converges to a deterministic kernel $\Theta_{\infty}$ \cite{jacot2018neural}, a result which extends to convolutional neural networks \cite{arora2019exact} and other common architectures \cite{yang2020tensor, yang2021tensor}. A fully trained neural network model can then be expressed as

\begin{equation}
    f_{\infty}(x) = f(x, \theta_0) + \Theta{_\infty}(x, \mathcal{X})\Theta{_\infty}(\mathcal{X}, \mathcal{X})^{-1}(\mathcal{Y}-f(\mathcal{X}, \theta_0)).
\end{equation}

where $f(\{\mathcal{X},x\}, \theta_0)\sim \mathcal{N}(0, \mathcal{K}(\{\mathcal{X},x\},\{\mathcal{X},x\}))$.

\subsection{Predictive distribution of \textit{linearly trained} deep ensembles}
\label{refine_predvar}

In this Section, we study in detail the predictive distribution of ensembles of linearly trained models, i.e. the distribution of $f^{lin}(x)$ given $x$ over random initializations $\theta_0$. In particular, for a given data $x$, we are interested in the mean $\mathbb{E}[f(x)]$ and variance  $\mathbb{V}[f(x)]$ of trained models over random initialization. The former is typically used for the prediction of a deep ensemble, while the latter is used for estimating model or epistemic uncertainty utilized e.g. for OOD detection or exploration. \\
To start, we describe the simpler case of the infinite width limit and a deterministic NTK, which allows us to compute the mean and variance of the solutions found by training easily:
\begin{equation}
\begin{split}
\label{eq:infinite_predvar}
\mathbb{E}[f_\infty(x)] =&  \mathcal{Q}_{\infty}(x, \mathcal{X})\mathcal{Y} ,\\
\mathbb{V}[f_\infty(x)] =&  \mathcal{K}(x,x) + \mathcal{Q}_{\infty}(x, \mathcal{X})\mathcal{K}(\mathcal{X},\mathcal{X}) \mathcal{Q}_{\infty}(x, \mathcal{X})^{T} -2\mathcal{Q}_{\infty}(x, \mathcal{X})\mathcal{K}(\mathcal{X},x) \\
\end{split}
\end{equation}

where we introduced $\mathcal{Q}_{\infty}(x, \mathcal{X}) = \Theta{_\infty}(x, \mathcal{X})\Theta{_\infty}(\mathcal{X}, \mathcal{X})^{-1}$.

For finite width linearly trained networks, the kernel is no longer deterministic, and its stochasticity influences the predictive distribution. Because there is probability mass assigned to the neighborhood of rare events where the NTK kernel matrix is not invertible, the expectation and variance over parameter initialization of the expression in equation \ref{eq:linear_formula} diverges to infinity.

Fortunately, due to the convergence in probability of the empirical NTK to the infinite width counterpart \cite{jacot2018neural}, we know these singularities become rarer and ultimately vanish as the width increases to infinity. Intuitively, we should therefore be able to assign meaningful, finite values to these undefined quantities, which ignores these rare singularities. The delta method \cite{delta_method} in statistics formalizes this intuition, by using Taylor approximation to smooth out the singularities before computing the mean or variance. When the probability mass of the empirical NTK is highly concentrated in a small radius around the limiting NTK, the expression \ref{eq:linear_formula} is roughly linear w.r.t the NTK entries.
Given this observation, we prove (see Appendix \ref{SM:prood_delta}) the following result, and justify that the obtained expression is informative of the \textit{empirical} predictive mean and variance of deep ensembles.
Rewriting equation \ref{eq:linear_formula}  into 
\begin{align}
\begin{split}
\label{eq:split_linear_formula}
    f^{\text{lin}}(x) = & f(x, \theta_0) + \bar{\mathcal{Q}}(x, \mathcal{X})(\mathcal{Y}-f(\mathcal{X}, \theta_0))  \\ &+[\mathcal{Q}_{\theta_0}(x, \mathcal{X})-\bar{\mathcal{Q}}(x, \mathcal{X})](\mathcal{Y}-f(\mathcal{X}, \theta_0))
\end{split}
\end{align}
where $\bar{\mathcal{Q}}(x, \mathcal{X}) =  \bar{\Theta}(x, \mathcal{X})\bar{\Theta}(\mathcal{X},\mathcal{X})^{-1}$ and $\bar{\Theta}= \mathbb{E}(\Theta_{\theta_0})$, we state:
\setcounter{section}{2}
\begin{proposition}\label{left}
For one hidden layer networks parametrized as in equation \ref{eq:nn_def}, given an input $x$ and training data $(\mathcal{X},\mathcal{Y})$, when increasing the hidden layer width $h$, we have the following convergence in distribution over random initialization $\theta_0$:
\begin{equation*}
\label{prop:var_disentangle}
\sqrt{h}[\mathcal{Q}_{\theta_0}(x, \mathcal{X})-\bar{\mathcal{Q}}(x, \mathcal{X})](\mathcal{Y}-f(\mathcal{X}, \theta_0)) \overset{dist.}{\to} Z(x)
\end{equation*}
where Z(x) is the linear combination of 2 Chi-Square distributions, such that 
\begin{align*}
\mathbb{V}(Z(x)) = \lim_{h \to \infty} (h\mathbb{V}^{c}(x)+h\mathbb{V}^{i}(x))
\end{align*}
where
\begin{align*}
\mathbb{V}^{c}(x)= & \mathbb{V}[\Theta_{\theta_0}(x,\mathcal{X})\bar{\Theta}(\mathcal{X},\mathcal{X})^{-1}\mathcal{Y}] +\mathbb{V}[\bar{\mathcal{Q}}(x, \mathcal{X})\Theta_{\theta_0}(\mathcal{X},\mathcal{X})\bar{\Theta}(\mathcal{X},\mathcal{X})^{-1}\mathcal{Y}] \\
&-2\mathbb{C}ov[\bar{\mathcal{Q}}(x, \mathcal{X})\Theta_{\theta_0}(\mathcal{X},\mathcal{X})\bar{\Theta}(\mathcal{X},\mathcal{X})^{-1}\mathcal{Y},\Theta_{\theta_0}(x, \mathcal{X})\bar{\Theta}(\mathcal{X},\mathcal{X})^{-1}\mathcal{Y}], \\
\mathbb{V}^{i}(x) =& \mathbb{V}[\Theta_{\theta_0}(x, \mathcal{X})\bar{\Theta}(\mathcal{X},\mathcal{X})^{-1}f(\mathcal{X}, \theta_0)] +\mathbb{V}[\bar{\mathcal{Q}}(x, \mathcal{X})\Theta_{\theta_0}(\mathcal{X},\mathcal{X})\bar{\Theta}(\mathcal{X},\mathcal{X})^{-1}f(\mathcal{X}, \theta_0)] \\
&-2\mathbb{C}ov[\bar{\mathcal{Q}}(x, \mathcal{X})\Theta_{\theta_0}(\mathcal{X},\mathcal{X})\bar{\Theta}(\mathcal{X},\mathcal{X})^{-1}f(\mathcal{X}, \theta_0),\Theta_{\theta_0}(x, \mathcal{X})\bar{\Theta}(\mathcal{X},\mathcal{X})^{-1}f(\mathcal{X}, \theta_0)].
\end{align*}
\end{proposition}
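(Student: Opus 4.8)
The plan is to combine a joint central limit theorem for the empirical NTK blocks with a first-order delta-method expansion of the map $\Theta \mapsto \Theta(x,\mathcal{X})\Theta(\mathcal{X},\mathcal{X})^{-1}$, and then read off the limit law by polarizing a bilinear form in Gaussians. First I would set up the fluctuation $\Delta\Theta \coloneqq \Theta_{\theta_0}-\bar{\Theta}$ and observe that, for a one-hidden-layer network, every entry of $\Theta_{\theta_0}(x,x')$ is an average over the $h$ hidden units of i.i.d. contributions (a second-layer term $\propto \tfrac{1}{h}\sum_j \phi(z^1_j(x))\phi(z^1_j(x'))$ and a first-layer term $\propto \tfrac{1}{h}\sum_j (W^2_j)^2\phi'(z^1_j(x))\phi'(z^1_j(x'))\langle x,x'\rangle$), and that $f(\mathcal{X},\theta_0)$ is likewise an average over the same units. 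Hence by the multivariate CLT (via the Cram\'er--Wold device) the stacked vector $(\sqrt{h}\,\Delta\Theta(x,\mathcal{X}),\ \sqrt{h}\,\Delta\Theta(\mathcal{X},\mathcal{X}),\ f(\mathcal{X},\theta_0))$ converges in distribution to a centered Gaussian vector $(g_1,g_2,\xi)$ whose covariance is fixed by the single-unit statistics. This is the step that supplies the Gaussianity underlying everything else.

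Next I would apply the delta method to linearize $\mathcal{Q}_{\theta_0}-\bar{\mathcal{Q}}$. The resolvent expansion $(\bar{\Theta}(\mathcal{X},\mathcal{X})+\Delta\Theta(\mathcal{X},\mathcal{X}))^{-1} = \bar{\Theta}(\mathcal{X},\mathcal{X})^{-1} - \bar{\Theta}(\mathcal{X},\mathcal{X})^{-1}\Delta\Theta(\mathcal{X},\mathcal{X})\bar{\Theta}(\mathcal{X},\mathcal{X})^{-1} + O(\|\Delta\Theta\|^2)$ yields, to first order, $\mathcal{Q}_{\theta_0}-\bar{\mathcal{Q}} = [\Delta\Theta(x,\mathcal{X}) - \bar{\mathcal{Q}}\,\Delta\Theta(\mathcal{X},\mathcal{X})]\bar{\Theta}(\mathcal{X},\mathcal{X})^{-1} + R_h$. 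Multiplying by $\sqrt{h}(\mathcal{Y}-f(\mathcal{X},\theta_0))$ and inserting the CLT limit, the leading term converges to $Z(x) = [\,g_1 - \bar{\mathcal{Q}}\,g_2\,]\bar{\Theta}(\mathcal{X},\mathcal{X})^{-1}(\mathcal{Y}-\xi)$. Splitting the factor $\mathcal{Y}-\xi$ reproduces exactly the two groups in the statement: the deterministic $\mathcal{Y}$ part is linear in $(g_1,g_2)$ and, after noting that constant shifts such as $\bar{\mathcal{Q}}\mathcal{Y}$ drop out under $\mathbb{V}$, gives the three terms of $\mathbb{V}^{c}$; the $\xi$ part is bilinear in the Gaussians and gives $\mathbb{V}^{i}$. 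Writing $Z(x)=p^\top(c-q)$ with $p=[g_1-\bar{\mathcal{Q}}g_2]^\top$, $c=\bar{\Theta}(\mathcal{X},\mathcal{X})^{-1}\mathcal{Y}$, $q=\bar{\Theta}(\mathcal{X},\mathcal{X})^{-1}\xi$ jointly Gaussian, the polarization identity $p^\top v = \tfrac14(\|p+v\|^2-\|p-v\|^2)$ exhibits $Z(x)$ as a linear combination of two (generalized) chi-square variables, as claimed.

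For the variance I would then expand $\mathbb{V}(Z) = \mathbb{V}(p^\top c) + \mathbb{V}(p^\top q) - 2\,\mathbb{C}ov(p^\top c, p^\top q)$; the cross term is a third (odd) moment of a jointly Gaussian vector and therefore vanishes, which is precisely why the two noise sources enter additively as $\lim_{h}h(\mathbb{V}^{c}+\mathbb{V}^{i})$ with no interaction term. I expect the main obstacle to be the rigorous justification of the delta method rather than any of these computations: since $\Theta_{\theta_0}(\mathcal{X},\mathcal{X})$ is invertible only with high probability, the exact quantity has no finite moments (as noted in the text), so the remainder $R_h$ cannot be controlled in $L^2$. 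Instead I would control it in probability, using the convergence $\Theta_{\theta_0}(\mathcal{X},\mathcal{X})\to\bar{\Theta}(\mathcal{X},\mathcal{X})$ to confine the spectrum of the inverse to a neighborhood of $\bar{\Theta}(\mathcal{X},\mathcal{X})^{-1}$ on an event of probability tending to one, and show $\sqrt{h}\,R_h(\mathcal{Y}-f(\mathcal{X},\theta_0)) = o_P(1)$, so that Slutsky's theorem transfers the convergence in distribution to the full expression; the stated variance identity is then read off the quadratic-form limit.
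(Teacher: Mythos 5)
Your proposal is correct, and it rests on the same skeleton as the paper's proof in Appendix~\ref{SM:prood_delta}: a CLT for the one-hidden-layer NTK viewed as a normalized sum of $h$ i.i.d.\ per-unit contributions, a first-order delta-method linearization of $\Theta \mapsto \Theta(x,\mathcal{X})\Theta(\mathcal{X},\mathcal{X})^{-1}$ around $\bar{\Theta}$, and the identification of the limit as a dot product of jointly Gaussian vectors, hence (by polarization) a combination of two chi-square variables. Where you genuinely depart from the paper is in how the two noise sources are coupled. The paper routes everything through Lemma~\ref{lemma:delta}, which requires the residual $X_h = \mathcal{Y}-f(\mathcal{X},\theta_0)$ to be \emph{independent} of the kernel fluctuation $Y_h$: independence is what licenses both the joint convergence to a product law and the factorization $\mathbb{E}[\mathcal{G}_1\mathcal{G}_1^T\mathcal{G}_2\mathcal{G}_2^T] = \mathbb{E}[\mathcal{G}_1\mathcal{G}_1^T]\mathbb{E}[\mathcal{G}_2\mathcal{G}_2^T]$ behind the trace formula, whose $\mu\mu^T$ and $\Sigma_1$ parts become $\mathbb{V}^{c}$ and $\mathbb{V}^{i}$; this is precisely the decorrelation caveat stated after the proposition and circumvented in Appendix~\ref{SM:noise_correlation} via the $f^{\text{lin-d}}$ construction. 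You instead run a joint CLT on the stacked vector $(\sqrt{h}\,\Delta\Theta(x,\mathcal{X}),\ \sqrt{h}\,\Delta\Theta(\mathcal{X},\mathcal{X}),\ f(\mathcal{X},\theta_0))$ — legitimate, since all components are normalized sums over the same hidden units — which gives the required joint convergence for free, and you then kill the cross term $\mathbb{C}ov(p^T c,\, p^T q)$ by the vanishing of odd moments of centered jointly Gaussian vectors rather than by independence. That is a real gain in generality: the additive split $\mathbb{V}(Z)=\lim_h(h\mathbb{V}^{c}+h\mathbb{V}^{i})$ emerges without assuming the kernel and functional noises decorrelated (and for the standard parametrization the limiting cross-covariance vanishes anyway, since the per-unit NTK contribution is even in the output weight $W^2_j$ while the per-unit contribution to $f$ is odd). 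Your explicit resolvent expansion with in-probability remainder control plus Slutsky is the delta method unrolled by hand, so nothing is lost there; what the paper's route buys in exchange is a reusable abstract lemma and closed-form trace expressions for the limiting variance. The one detail both treatments leave implicit, and which you should state when writing this up, is the uniform-integrability (or moment-boundedness) step needed to pass from convergence in distribution to the identity between $\mathbb{V}(Z)$ and the limit of the finite-width variances $h\mathbb{V}^{c}+h\mathbb{V}^{i}$.
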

We omit the dependence of $\theta_0$ on the width $h$ for notational simplicity.
While the expectation or variance of equation \ref{eq:linear_formula} for any finite width is undefined, their empirical mean and variance are with high probability indistinguishable from that of the above limiting distribution (see Lemma \ref{lemma_sam}). Note that the above proposition assumes the noise in $\Theta_{\theta_0}$ to be decorrelated from $f(x,\theta_0)$, which can hold true under specific constructions of the network that are of practical interest as we will see in the following (c.f. Appendix \ref{SM:noise_correlation}). 

Given Proposition \ref{prop:var_disentangle}, we now describe the approximate variance of $f^{\text{lin}}(x)$ for $L = 2$, which we can extend to the general $L > 2$ case using an informal argument (see \ref{SM:informal_delta}):

\begin{proposition}
\label{prop:var_f_lin}
Let $f$ be a neural network with identical width of all hidden layers, $h_1=h_2=...=h_{L-1}=h$. We assume $\|\Theta_{\theta_0}-\bar{\Theta}\|^2_F = \mathcal{O}_p(\frac{1}{h})$. Then,
\begin{equation*}
\mathbb{V}[f^{\text{lin}}(x)] \approx \mathbb{V}^{a}(x) +  \mathbb{V}^{c}(x) + \mathbb{V}^{i}(x) +  \mathbb{V}^{cor}(x) + \mathbb{V}^{res}(x)
\end{equation*}
where

\begin{align*}
\mathbb{V}^{a}(x)= &\bar{\mathcal{K}}(x,x)+\bar{\mathcal{Q}}(x, \mathcal{X})\bar{\mathcal{K}}(\mathcal{X},\mathcal{X})\bar{\mathcal{Q}}(x, \mathcal{X})^T -2\bar{\mathcal{Q}}(x, \mathcal{X})\bar{\mathcal{K}}(\mathcal{X},x), \\ 
\mathbb{V}^{cor}(x) =  &2\mathbb{E}\big[[\Theta_{\theta_0}(x, \mathcal{X})-\bar{\mathcal{Q}}(x, \mathcal{X})\Theta_{\theta_0}(\mathcal{X},\mathcal{X})][\bar{\Theta}(\mathcal{X},\mathcal{X})^{-1}\Theta_{\theta_0}(\mathcal{X},\mathcal{X})\bar{\Theta}(\mathcal{X},\mathcal{X})^{-1}]\big]\\ & \cdot [\bar{\mathcal{K}}(\mathcal{X},x) -\bar{\mathcal{K}}(\mathcal{X},\mathcal{X})\bar{\mathcal{Q}}(x, \mathcal{X})^T]
\end{align*}
and $\mathbb{V}^{res}(x)  = \mathcal{O}(h^{-2})$ as well as $\bar{\mathcal{K}}$ the expectation over initializations of the finite width counterpart of the NNGP kernel.
\end{proposition}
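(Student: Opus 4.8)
The plan is to work from the split already introduced in equation \ref{eq:split_linear_formula}, writing $f^{\text{lin}}(x)=A(x)+B(x)$ with averaged part $A(x)=f(x,\theta_0)+\bar{\mathcal{Q}}(x,\mathcal{X})(\mathcal{Y}-f(\mathcal{X},\theta_0))$ and fluctuation part $B(x)=[\mathcal{Q}_{\theta_0}(x,\mathcal{X})-\bar{\mathcal{Q}}(x,\mathcal{X})](\mathcal{Y}-f(\mathcal{X},\theta_0))$. By bilinearity of the variance,
\[
\mathbb{V}[f^{\text{lin}}(x)]=\mathbb{V}[A(x)]+\mathbb{V}[B(x)]+2\,\mathbb{C}ov[A(x),B(x)],
\]
and I would match these three pieces to $\mathbb{V}^{a}(x)$, to $\mathbb{V}^{c}(x)+\mathbb{V}^{i}(x)$, and to $\mathbb{V}^{cor}(x)$ respectively, sweeping everything of higher order in the fluctuation $\Delta:=\Theta_{\theta_0}-\bar{\Theta}$ into $\mathbb{V}^{res}(x)$. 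The term $\mathbb{V}[A(x)]$ is exact and requires no expansion: since $\bar{\mathcal{Q}}(x,\mathcal{X})$ is deterministic and $\bar{\mathcal{Q}}(x,\mathcal{X})\mathcal{Y}$ carries no variance, only the mean-zero initialization $f(x,\theta_0)-\bar{\mathcal{Q}}(x,\mathcal{X})f(\mathcal{X},\theta_0)$ survives, and expanding its second moment with $\mathbb{E}[f(x,\theta_0)f(x',\theta_0)]=\bar{\mathcal{K}}(x,x')$ reproduces $\mathbb{V}^{a}(x)$ exactly.

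For $\mathbb{V}[B(x)]$ I would Taylor-expand the kernel inverse through the Neumann series $(\bar{\Theta}+\Delta)^{-1}=\bar{\Theta}^{-1}-\bar{\Theta}^{-1}\Delta\bar{\Theta}^{-1}+\mathcal{O}_p(h^{-1})$, whose first-order contribution to $\mathcal{Q}_{\theta_0}-\bar{\mathcal{Q}}$ is the row vector $W:=[\Delta(x,\mathcal{X})-\bar{\mathcal{Q}}(x,\mathcal{X})\Delta(\mathcal{X},\mathcal{X})]\bar{\Theta}(\mathcal{X},\mathcal{X})^{-1}$, which is $\mathcal{O}_p(h^{-1/2})$ by the assumption $\|\Theta_{\theta_0}-\bar{\Theta}\|_F^2=\mathcal{O}_p(h^{-1})$. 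Applying $W$ separately to the deterministic $\mathcal{Y}$ and to $f(\mathcal{X},\theta_0)$ and using that $\Delta$ is decorrelated from $f(\cdot,\theta_0)$, the cross term $\mathbb{C}ov[W\mathcal{Y},Wf(\mathcal{X},\theta_0)]$ vanishes (it factors through $\mathbb{E}[f(\cdot,\theta_0)]=0$), leaving precisely $\mathbb{V}^{c}(x)+\mathbb{V}^{i}(x)$. This is exactly the variance of the limiting law $Z(x)$ rescaled by $h^{-1}$, so I would cite Proposition \ref{prop:var_disentangle} here rather than redo the Chi-square computation.

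The cross-covariance is the main obstacle, because its first-order part is identically zero and one is forced to second order. Indeed, $\mathbb{C}ov[A(x),W(\mathcal{Y}-f(\mathcal{X},\theta_0))]$ pairs a mean-zero object linear in $f$ with one linear in the (independent, mean-zero) $\Delta$, so every factorized expectation meets either $\mathbb{E}[f(\cdot,\theta_0)]=0$ or $\mathbb{E}[\Delta]=0$. The surviving contribution comes from the second-order Neumann term
\[
W_2:=-[\Delta(x,\mathcal{X})-\bar{\mathcal{Q}}(x,\mathcal{X})\Delta(\mathcal{X},\mathcal{X})]\bar{\Theta}(\mathcal{X},\mathcal{X})^{-1}\Delta(\mathcal{X},\mathcal{X})\bar{\Theta}(\mathcal{X},\mathcal{X})^{-1},
\]
which is $\mathcal{O}_p(h^{-1})$ but, crucially, has nonzero mean. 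Retaining only $-2\,\mathbb{E}[A(x)\,W_2\,f(\mathcal{X},\theta_0)]$ and factorizing over $\Delta$ and $f$, the $f$-expectation yields $\mathbb{E}[A(x)f(\mathcal{X},\theta_0)]=\bar{\mathcal{K}}(\mathcal{X},x)-\bar{\mathcal{K}}(\mathcal{X},\mathcal{X})\bar{\mathcal{Q}}(x,\mathcal{X})^{T}$ while $\mathbb{E}[-2W_2]$ gives the bracket in $\mathbb{V}^{cor}(x)$, once one uses the identities $\Theta_{\theta_0}(x,\mathcal{X})-\bar{\mathcal{Q}}(x,\mathcal{X})\Theta_{\theta_0}(\mathcal{X},\mathcal{X})=\Delta(x,\mathcal{X})-\bar{\mathcal{Q}}(x,\mathcal{X})\Delta(\mathcal{X},\mathcal{X})$ and $\bar{\Theta}(\mathcal{X},\mathcal{X})^{-1}\Theta_{\theta_0}(\mathcal{X},\mathcal{X})\bar{\Theta}(\mathcal{X},\mathcal{X})^{-1}=\bar{\Theta}(\mathcal{X},\mathcal{X})^{-1}+\bar{\Theta}(\mathcal{X},\mathcal{X})^{-1}\Delta(\mathcal{X},\mathcal{X})\bar{\Theta}(\mathcal{X},\mathcal{X})^{-1}$. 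Together these reproduce $\mathbb{V}^{cor}(x)$.

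Finally I would collect the discarded pieces into $\mathbb{V}^{res}(x)$ and check it is $\mathcal{O}(h^{-2})$: these are the squared and higher Neumann terms in $\mathbb{V}[B(x)]$ and the $W$--$W_2$ cross terms in the covariance, all of which involve a third central moment of $\Delta$ and hence scale as $\mathcal{O}(h^{-2})$ under the stated assumption. The technical crux is to license the Neumann truncation, i.e. to show $\|\bar{\Theta}(\mathcal{X},\mathcal{X})^{-1}\Delta(\mathcal{X},\mathcal{X})\|<1$ with high probability and that the neglected remainder is genuinely negligible in expectation -- exactly the delta-method control underlying Proposition \ref{prop:var_disentangle}; the informal extension to $L>2$ then follows by applying the same order counting layer-wise (Appendix \ref{SM:informal_delta}).
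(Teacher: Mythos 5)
Your proof is correct and follows essentially the same route as the paper's own argument (Appendix \ref{SM:informal_delta}): a second-order perturbative expansion of $\mathcal{Q}_{\theta_0}$ around $\bar{\mathcal{Q}}$ in the fluctuation $\Delta = \Theta_{\theta_0}-\bar{\Theta}$, with $\mathbb{V}^a$ coming from the zeroth-order term, $\mathbb{V}^c+\mathbb{V}^i$ from the variance of the first-order term using the kernel/function decorrelation, and $\mathbb{V}^{cor}$ from pairing the mean-zero functional noise with the nonzero mean of the second-order term, exactly as in the paper. The only (minor) technical difference is that where you truncate a Neumann series and must separately justify $\|\bar{\Theta}(\mathcal{X},\mathcal{X})^{-1}\Delta(\mathcal{X},\mathcal{X})\|<1$ with high probability, the paper instead uses the exact identity $M^{-1}=\bar{M}^{-1}+\bar{M}^{-1}(\bar{M}-M)\bar{M}^{-1}+\bar{M}^{-1}\big[(\bar{M}-M)\bar{M}^{-1}\big]^2\bar{M}M^{-1}$, whose last term is precisely the $o_p(\|\Delta\|_F^2)$ remainder, thereby sidestepping the series-convergence issue you flag as the technical crux.
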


Several observations can be made: First, the above expression only involves the first and second moments of the empirical, finite width NTK, as well as the first moment of the NNGP kernel. These terms can be analytically computed in some settings. We provide in Appendix \ref{ana_rect_angle} some of the moments for the special case of a  1-hidden layer ReLU network, and show the analytical expression correspond to empirical findings. 

Second, the decomposition demonstrates the interplay of 2 distinct noise sources in the predictive variance: 
\begin{itemize}
\item $\mathbb{V}^a$ is the variance associated to the expression in the first line of equation \ref{eq:split_linear_formula}. Intuitively, it is the finite width counterpart of the predictive variance of the infinite width model (equation \ref{eq:infinite_predvar}), as it assumes the NTK is deterministic. The variance stems entirely from the functional noise at initialization and converges to the infinite width predictive variance as the width increases.
\item $\mathbb{V}^c$ and  $\mathbb{V}^i$ stem from the second line of equation \ref{eq:split_linear_formula}. $\mathbb{V}^c$ is a first-order approximation of the predictive variance of a linearly trained network with \textit{pure kernel noise}, without functional noise i.e. $\mathbb{V}^c \approx \mathbb{V}[\mathcal{Q}_{\theta_0}(x, \mathcal{X})\mathcal{Y}]$. On the other hand, $\mathbb{V}^i $ depends on the interplay between the 2 noises, and can be identified as the predictive variance of a deep ensemble with a deterministic NTK $\bar{\Theta}$ and a new functional prior $g(x)=\Theta_{\theta_0}(x,\mathcal{X})\bar{\Theta}(\mathcal{X},\mathcal{X})^{-1}f(\mathcal{X}, \theta_0)$. Intuitively, this new functional prior can be seen as a data-specific inductive bias on the NTK formulation of the predictive variance (see Appendix \ref{SM:custom_init} for more details).
\item $\mathbb{V}^{cor}$ is a covariance term between the 2 terms in equation \ref{eq:split_linear_formula} and also contains the correlation terms between $\Theta_{\theta_0}$ and $f(x,\theta_0)$. In general, its analytical expression is challenging to obtain as it requires the 4th moments of the finite width NNGP kernel fluctuation. Here, we provide its expression under the same simplifying assumption that the noise in $\Theta_{\theta_0}$ is decorrelated from $f(x,\theta_0)$. We therefore do not attempt to describe it in general, and focus in our empirical Section on the terms that are tractable and can be easily isolated for practical purposes. 
\end{itemize}

Each of $\mathbb{V}^c, \mathbb{V}^i,\mathbb{V}^{cor}$ decay in $\mathcal{O}(h^{-1})$, which, together with $\mathbb{V}^a$, provide a first-order approximation of the predictive variance of $f^{lin}(x)$. Note that $\mathbb{V}^a$ and $\mathbb{V}^c$ are of particular interest, as removing either the kernel or the functional noise at initialization will collapse the predictive variance of the trained ensemble to either one of these 2 terms.

\subsection{Predictive distribution of \textit{standard} deep ensemble of large width}

 \begin{figure*}
\centering
\begin{minipage}{.25\textwidth}
  \centering
  \begin{center}
    \includegraphics[width=1.\textwidth]{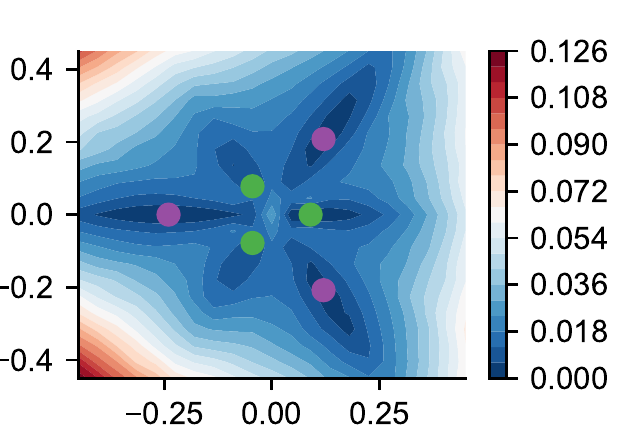}
  \end{center}
  \vspace{-10pt}
\end{minipage}
\hspace{-5pt}
\begin{minipage}{.25\textwidth}
  \centering
  \begin{center}
    \includegraphics[width=1.\textwidth]{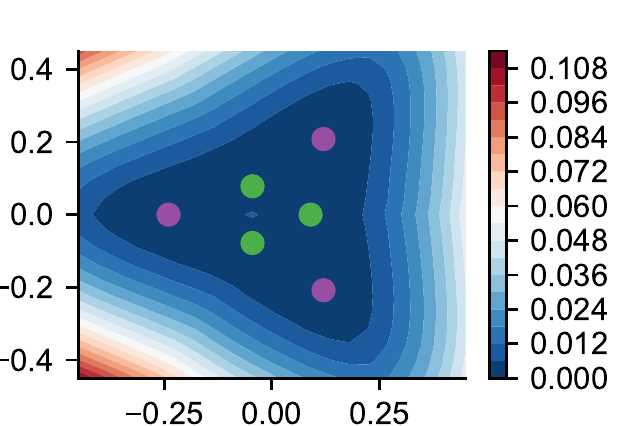}
  \end{center}
  \vspace{-10pt}
\end{minipage}
\hspace{-5pt}
\begin{minipage}{.25\textwidth}
  \centering
  \begin{center}
    \includegraphics[width=1.\textwidth]{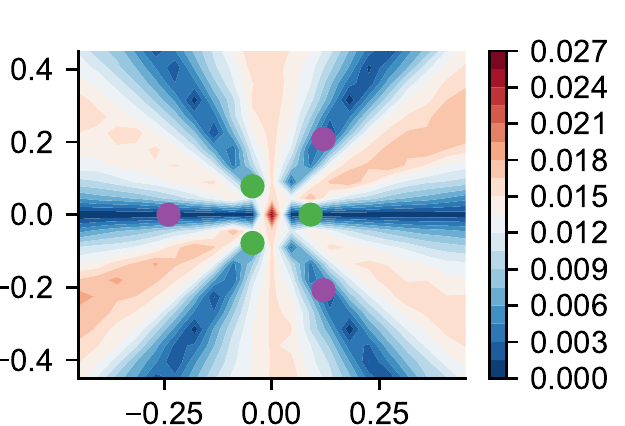}
  \end{center}
  \vspace{-10pt}
\end{minipage}
\hspace{-5pt}
\begin{minipage}{.25\textwidth}
  \centering
  \begin{center}
    \includegraphics[width=1.\textwidth]{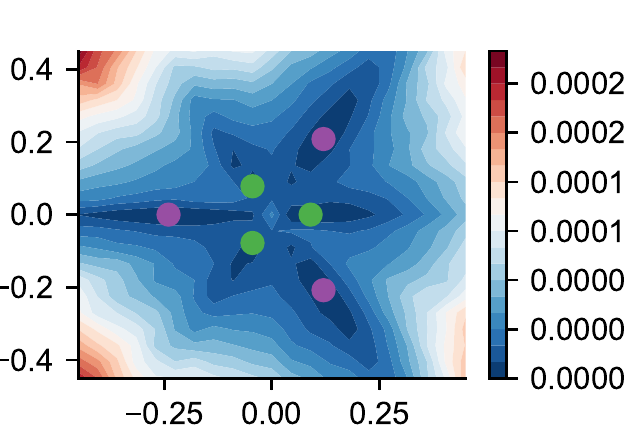}
  \end{center}
  \vspace{-10pt}
\end{minipage}
\hspace{-0pt}
  \caption{Empirical variance $\mathbb{V}[f(x)]$ of our kernel models $f^\text{lin}$, $f^{\text{lin-a}}$, $f^{\text{lin-c}}$ and $f^{\text{lin-i}}$ on a toy regression problem, consisting in regressing against a value of 1 on the purple and -1 on the green datapoints. \textit{Left:} $\mathbb{V}[f^{\text{lin}}(x)]$ as a superposition of the three isolated components right of it. \textit{Center left:} $\mathbb{V}[f^{\text{lin-a}}(x)]$ which correlates with the distance to the datapoints. 
  \textit{Center right:} $\mathbb{V}[f^{\text{lin-c}}(x)]$ which correlates with the angle to the datapoints.
  \textit{Right:} $\mathbb{V}[f^{\text{lin-i}}(x)]$ which depends on distance and angle to the datapoints.
  }
  \label{fig:toy_problem}
  \vspace{-10pt}
\end{figure*}

An important question at this point is to which extent our analysis for linearly trained models applies to a fully and non-linearly trained deep ensemble. Indeed, if the discrepancy between the predictive variance of a linearly trained ensemble and its non-linear counterpart is of a larger order of magnitude than the higher-order correction in the variance term, the latter can be 'erased' by training. Building on top of previous work, we show that, under the assumption of an empirically supported conjecture \cite{Geiger_2020}, for one hidden layer networks trained on the Mean Squared Error (MSE) loss, this discrepancy is asymptotically dominated by the refined predictive variance terms of the linearly trained ensemble we described in Section \ref{refine_predvar}.

\begin{proposition}
\label{prop:noise_survive}
Let $f$ be a neural network with identical width of all hidden layers, $h_1=h_2=...=h_{L-1}=h$, and such that the derivative of the non-linearity $\phi'$ is bounded and Lipschitz continuous on $\mathbb{R}$. Let the training data $(\mathcal{X},\mathcal{Y})$ contained in some compact set, such that the NTK of $f$ on $\mathcal{X}$ is invertible. Let $f_t$ (resp. $f_t^{\text{lin}}$)  be the model (resp. linearized model) trained on the MSE loss with gradient flow at timestep $t$ with some learning rate.
Assuming
\begin{equation}
    \sup_t\lVert \Theta_{\theta_0}-\Theta_{\theta_t}\rVert_F = \mathcal{O}(\frac{1}{h})
\end{equation}
Then,
$\forall x, \forall\delta > 0,  \exists C,H: \forall h > H$,  
\begin{equation}
    \mathbb{P}\big[\sup_t\lVert f_t^{lin}(x)-f_t(x)\rVert_2 \leq \frac{C}{h}\big]\geq 1-\delta.
\end{equation}
In particular, for one hidden layer networks, after training,
\begin{align}
\label{theoretical_bound}
| \hat{\mathbb{V}}(f(x)) - \hat{\mathbb{V}}(f^{\text{lin}}(x)) |
= \mathcal{O}_p(\hat{\mathbb{V}}\big[[\mathcal{Q}_{\theta_0}(x, \mathcal{X})-\bar{\mathcal{Q}}(x, \mathcal{X})](\mathcal{Y}-f(\mathcal{X}, \theta_0))\big])
\end{align}
where $\hat{\mathbb{V}}$ denotes the empirical variance with some fixed sample size.
\end{proposition}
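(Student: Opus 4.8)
The plan is to prove the two claims in sequence: first the uniform‑in‑time trajectory bound $\sup_t\lVert f_t^{lin}(x)-f_t(x)\rVert_2=\mathcal{O}(1/h)$ with high probability, and then convert this into the variance statement for the converged one‑hidden‑layer model. For the first part I would start from the gradient‑flow dynamics of both models. Writing $r_t=f_t(\mathcal{X})-\mathcal{Y}$ and $r_t^{lin}=f_t^{lin}(\mathcal{X})-\mathcal{Y}$ for the training residuals, the MSE gradient flow gives $\dot r_t=-\Theta_{\theta_t}(\mathcal{X},\mathcal{X})\,r_t$ and $\dot r_t^{lin}=-\Theta_{\theta_0}(\mathcal{X},\mathcal{X})\,r_t^{lin}$, both launched from the same initial residual $r_0=f(\mathcal{X},\theta_0)-\mathcal{Y}$. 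The two systems differ only in that one uses the frozen kernel $\Theta_{\theta_0}$ and the other the time‑varying kernel $\Theta_{\theta_t}$. Since $(\mathcal{X},\mathcal{Y})$ lies in a compact set and the NTK on $\mathcal{X}$ is invertible, the empirical $\Theta_{\theta_0}(\mathcal{X},\mathcal{X})$ converges in probability to the positive‑definite $\Theta_\infty(\mathcal{X},\mathcal{X})$; hence for every $\delta>0$ there is an $H$ so that for $h>H$, with probability at least $1-\delta$, its smallest eigenvalue exceeds a fixed $\lambda_0/2>0$ while simultaneously the assumed $\sup_t\lVert\Theta_{\theta_t}-\Theta_{\theta_0}\rVert_F\leq C/h$ holds. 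I carry out the rest on this high‑probability event.

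On that event both residuals decay exponentially (the nonlinear one because the closeness assumption keeps $\Theta_{\theta_t}(\mathcal{X},\mathcal{X})$ positive definite along the trajectory), so $\lVert r_t\rVert\leq e^{-\lambda' t}\lVert r_0\rVert$ for some $\lambda'>0$. The difference $\Delta_t=r_t-r_t^{lin}$ obeys $\dot\Delta_t=-\Theta_{\theta_0}(\mathcal{X},\mathcal{X})\Delta_t-(\Theta_{\theta_t}(\mathcal{X},\mathcal{X})-\Theta_{\theta_0}(\mathcal{X},\mathcal{X}))\,r_t$ with $\Delta_0=0$; applying Duhamel's formula and bounding the propagator by $e^{-\lambda_0(t-s)/2}$, the source term is at most $(C/h)\,e^{-\lambda' s}\lVert r_0\rVert$, and the convolution of two decaying exponentials is uniformly bounded in $t$, giving $\sup_t\lVert\Delta_t\rVert\leq (C'/h)\lVert r_0\rVert$. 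Feeding this into the test‑point equation $\frac{d}{dt}(f_t(x)-f_t^{lin}(x))=-\Theta_{\theta_0}(x,\mathcal{X})\Delta_t-(\Theta_{\theta_t}(x,\mathcal{X})-\Theta_{\theta_0}(x,\mathcal{X}))\,r_t$ and integrating, both contributions are $\mathcal{O}(1/h)$ times an integrable decaying factor, yielding $\sup_t\lVert f_t(x)-f_t^{lin}(x)\rVert_2\leq C/h$. Since $\lVert r_0\rVert=\mathcal{O}_p(1)$, absorbing it into the constant establishes the probability bound.

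For the second part I specialize to one hidden layer and evaluate at convergence $t\to\infty$, writing $\epsilon=f(x)-f^{lin}(x)$, which the first part shows is $\mathcal{O}_p(1/h)$. The decomposition $\hat{\mathbb{V}}(f(x))=\hat{\mathbb{V}}(f^{lin}(x)+\epsilon)=\hat{\mathbb{V}}(f^{lin}(x))+\hat{\mathbb{V}}(\epsilon)+2\,\widehat{\mathbb{C}ov}(f^{lin}(x),\epsilon)$ gives, by Cauchy--Schwarz, $|\hat{\mathbb{V}}(f(x))-\hat{\mathbb{V}}(f^{lin}(x))|\leq\hat{\mathbb{V}}(\epsilon)+2\sqrt{\hat{\mathbb{V}}(f^{lin}(x))\,\hat{\mathbb{V}}(\epsilon)}$. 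Here $\hat{\mathbb{V}}(\epsilon)=\mathcal{O}_p(1/h^2)$, while $\hat{\mathbb{V}}(f^{lin}(x))$ is dominated by $\mathbb{V}^a(x)=\mathcal{O}(1)$ from Proposition \ref{prop:var_f_lin}, so the cross term is $\mathcal{O}_p(1/h)$ and dominates, giving $|\hat{\mathbb{V}}(f(x))-\hat{\mathbb{V}}(f^{lin}(x))|=\mathcal{O}_p(1/h)$; the empirical variance with fixed sample size preserves these orders in $h$. Finally, Proposition \ref{prop:var_disentangle} shows $\hat{\mathbb{V}}[[\mathcal{Q}_{\theta_0}(x,\mathcal{X})-\bar{\mathcal{Q}}(x,\mathcal{X})](\mathcal{Y}-f(\mathcal{X},\theta_0))]$ is of exact order $1/h$ (its rescaling by $\sqrt h$ converges to $Z(x)$ with positive variance), so the discrepancy is $\mathcal{O}_p$ of this refined variance term.

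The main obstacle I anticipate is the covariance term $\widehat{\mathbb{C}ov}(f^{lin}(x),\epsilon)$: smallness of $\epsilon$ alone does not control it, since $f^{lin}(x)$ carries the $\mathcal{O}(1)$ functional variance $\mathbb{V}^a$. It is precisely the pairing of this $\mathcal{O}(1)$ variance with the $\mathcal{O}(1/h)$ nonlinear correction, through Cauchy--Schwarz, that produces an $\mathcal{O}(1/h)$ discrepancy --- the same order as the refined terms $\mathbb{V}^c,\mathbb{V}^i$ --- which is the conceptual crux: these terms survive nonlinear training, whereas any genuinely finer ($\mathcal{O}(h^{-2})$) structure such as $\mathbb{V}^{res}$ can be erased. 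A secondary technical point is justifying the uniform‑in‑time exponential decay of the nonlinear residual $r_t$ from the kernel‑closeness assumption rather than assuming it, and ensuring the Gr\"onwall constants do not degrade as $h\to\infty$.
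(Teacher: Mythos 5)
Your proposal is correct and follows essentially the same route as the paper's proof: work on a high-probability event where the empirical NTK is well-conditioned and the kernel-drift assumption holds, run a Gr\"onwall/Duhamel comparison of the two gradient-flow trajectories to get the $\mathcal{O}(1/h)$ bound first on the training residuals and then at the test point, transfer this to the empirical variances, and finally lower-bound the refined variance term via the $\sqrt{h}$-rescaled limit $Z(x)$ from Proposition \ref{prop:var_disentangle}. The only cosmetic differences are that the paper imports the training-residual bound from Lee et al.'s Eq.~(S118) rather than re-deriving it by Duhamel, and controls the variance discrepancy through its Lemma \ref{lemma_dom} (a difference-of-squares factoring) where you use the variance decomposition plus Cauchy--Schwarz --- the same arguments in substance.
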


The proof can be found in Appendix \ref{linearized_error}. While only the bound $\sup_t\lVert \Theta_{\theta_0}-\Theta_{\theta_t}\rVert_F = \mathcal{O}(\frac{1}{\sqrt{h}})$ has been proven in previous works \cite{lee_wide}, many empirical studies including those in the present work (see Appendix Fig. \ref{fig:kernel_scaling_time}, Table \ref{table:kernel_scaling_time}) have shown that the bound decreases faster in practice, on the order of $\mathcal{O}(h^{-1})$ \cite{Geiger_2020, lee_wide}. Note that this result suggests the approximation provided in Proposition \ref{prop:var_f_lin} is \textit{as good as it gets} for describing the predictive variance of non-linearly trained ensembles: the higher order terms would be of a smaller order of magnitude than the non-linear correction to the training, rendering any finer approximation pointless. 

\section{Disentangling deep ensemble variance in practice}

 \begin{figure*}
\centering
\begin{minipage}{.45\textwidth}
  \centering
  \begin{center}\includegraphics[width=1.\textwidth]{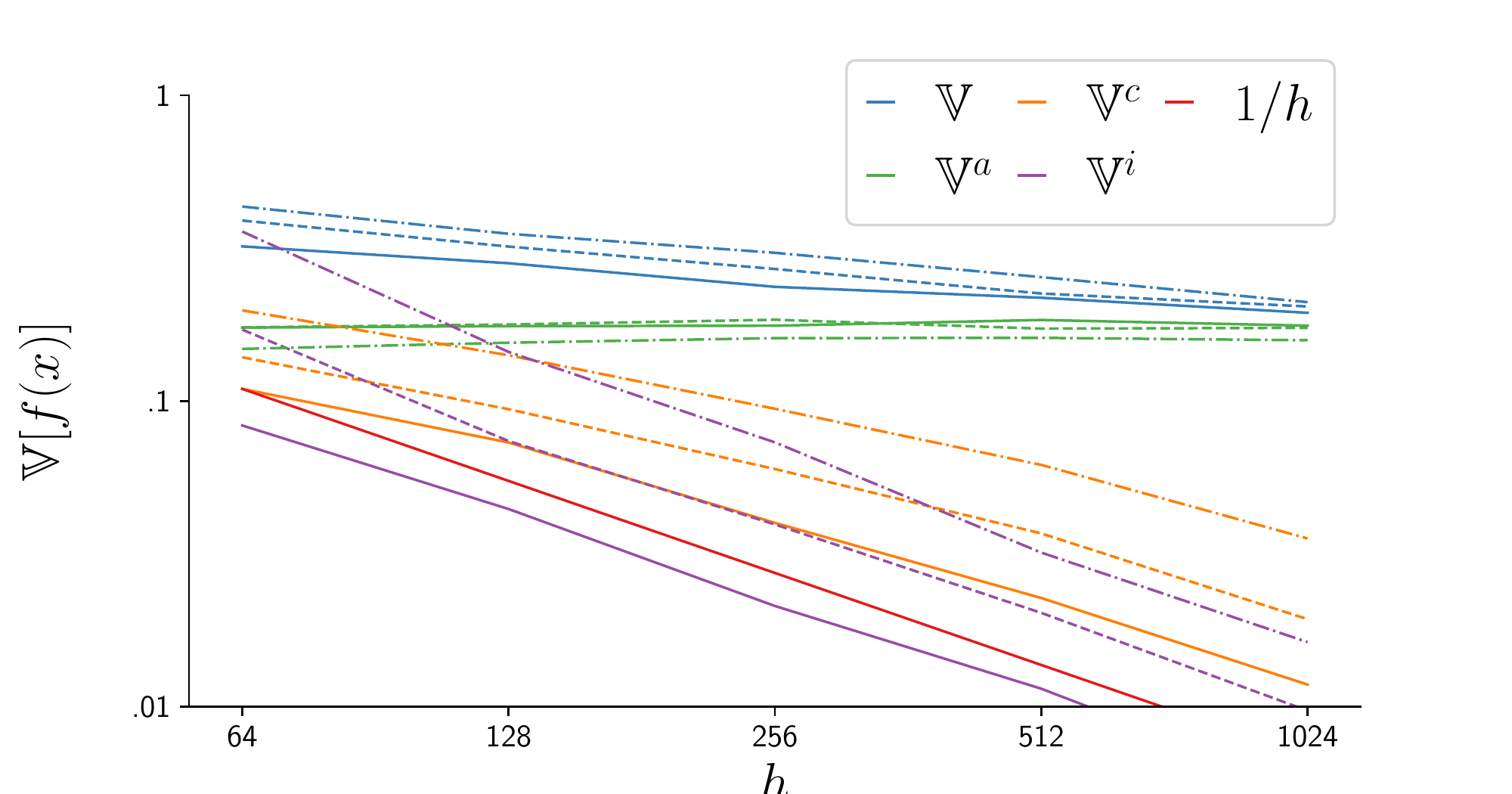}
  \end{center}
\end{minipage}
\hspace{0pt}
\begin{minipage}{.45\textwidth}
  \centering
  \begin{center}
    \includegraphics[width=1.\textwidth]{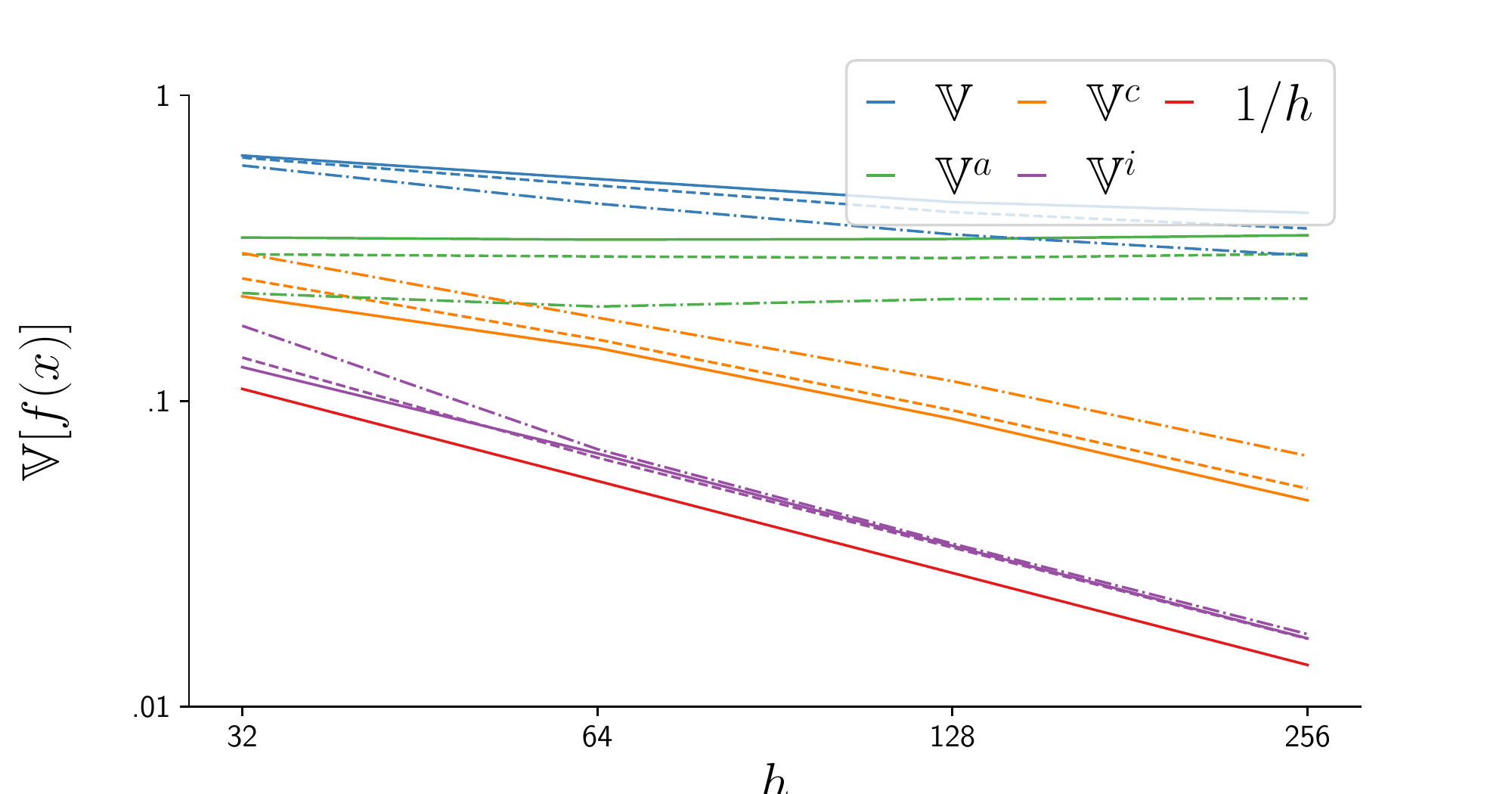}
  \end{center}
\end{minipage}
\begin{minipage}{.44\textwidth}
  \centering
  \begin{center}
    \includegraphics[width=1.\textwidth]{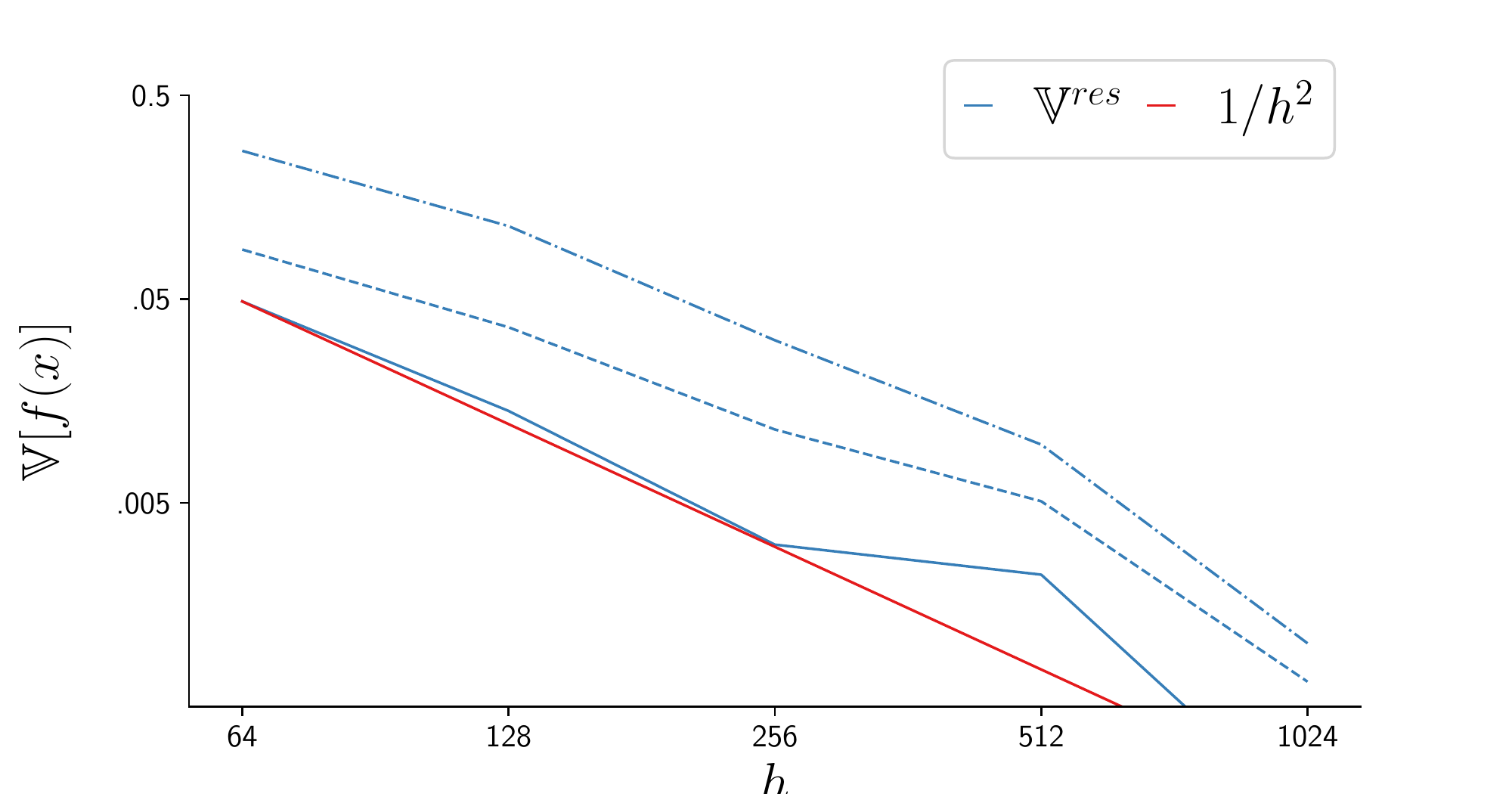}
  \end{center}
\end{minipage}
\hspace{4pt}
\begin{minipage}{.44\textwidth}
  \centering
  \begin{center}\includegraphics[width=1.\textwidth]{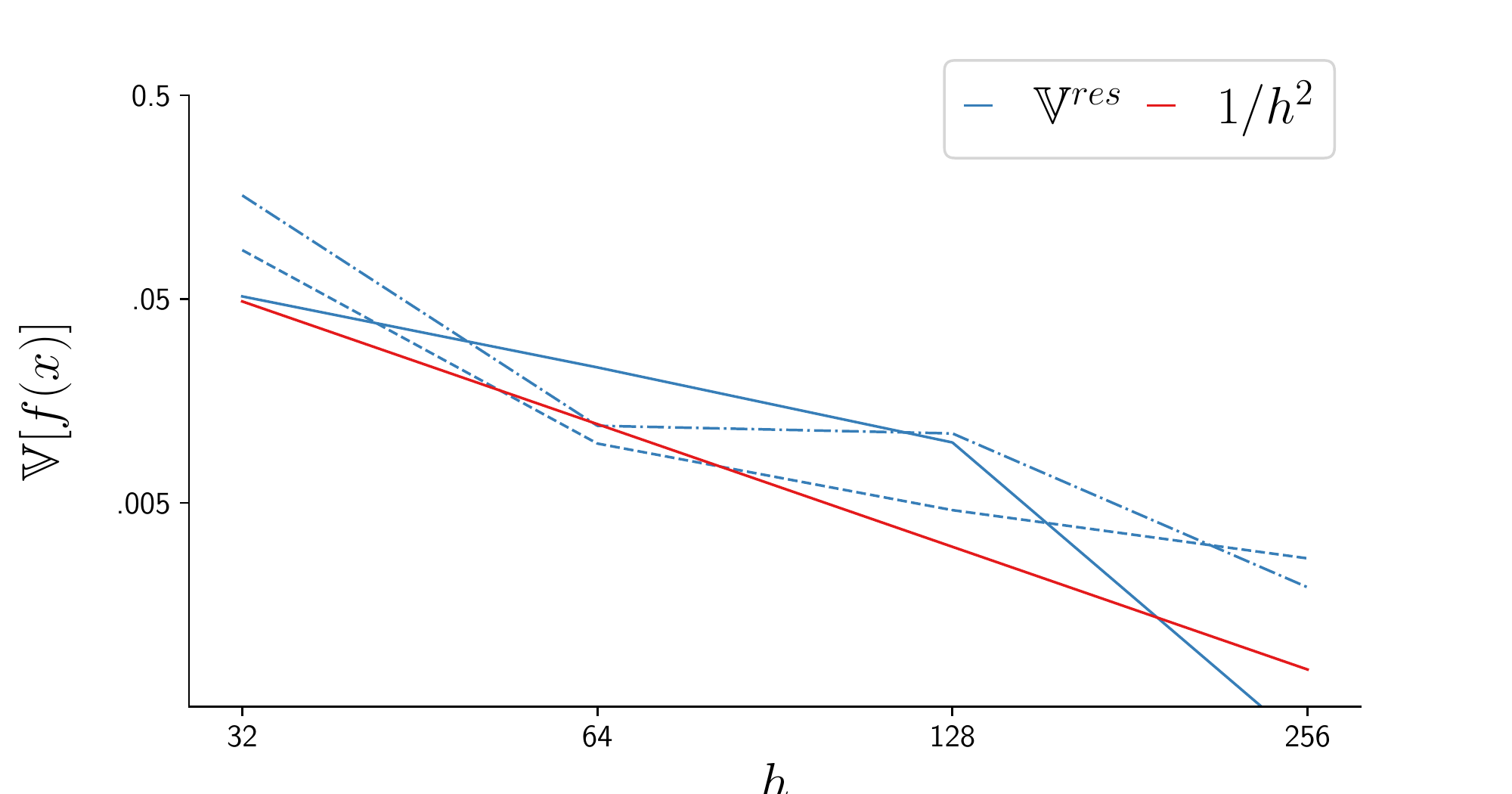}
  \end{center}
\end{minipage}
\hspace{0pt}
  \caption{Empirical predictive variance $\mathbb{V},\mathbb{V}^{a},\mathbb{V}^{i}, \mathbb{V}^{c}$ of an ensemble of 100 models. \textit{Upper Left:} 
  Linearized feed forward networks of various depths and widths trained on a subset of MNIST. \textit{Upper Right:} Linearized convolutional neural networks of various depth and widths trained on a subset of CIFAR10. \textit{Lower Left (resp. Right):} $1/{h^2}$ scaling of $\mathbb{V}^{res}$ of an ensemble of MLPs (resp. CNN) of various depths and widths trained on a subset of MNIST (resp. CIFAR10). Although the theoretical result on the scaling of $1/h$ of the variance terms influenced by the kernel noise as well as $1/{h^2}$ of the residual holds only for depth $L=2$ (line plots), the same scaling is observed for deeper networks as suggested by our informal result ($L=3$ in dashed lines, $L=5$ in dashed-dotted lines). All plots are plotted in the log-log scale.}
  \label{fig:scaling}
  \vspace{-0.5cm}
\end{figure*}

The goal of this Section is to validate our theoretical findings in experiments. First, we aim to show qualitatively and quantitatively that the variance of linearly trained neural networks is well approximated by the decomposition introduced in Proposition \ref{prop:var_f_lin}. To do so, we investigate ensembles of linearly trained models and analyze their behavior in toy models and on common computer vision classification datasets. We then extend our analyses to \textit{fully-trained} non-linear deep neural networks optimized with (stochastic) gradient descent in parameter space.
Here, we confirm empirically the strong influence of the variance description of linearly trained models in these less restrictive settings while being trained to very low training loss. Therefore we showcase the improved understanding of deep ensembles through their linearly trained counterpart and highlight the practical relevance of our study by observing significant OOD detection performance differences of models when removing noise sources in various settings.

\subsection{Disentangling noise sources in \textit{kernel} models}

To isolate the different terms in Proposition \ref{prop:var_f_lin}, we construct, from a given initialization $\theta_0$ with the associated linearized model  $f^{lin}$, three additional linearly trained models: 
\begin{equation*}
\label{eq:lin}
\begin{split}
 & f^{\text{lin-c}}(x) = \mathcal{Q}_{\theta_0}(x, \mathcal{X})\mathcal{Y}\\
  & f^{\text{lin-a}}(x)= f(x,\theta_0) + \bar{\mathcal{Q}}(x, \mathcal{X})(\mathcal{Y}-f(\mathcal{X},\theta_0)) \\
  & f^{\text{lin-i}}(x) =  g(x,\theta_0) +  \bar{\mathcal{Q}}(x, \mathcal{X})(\mathcal{Y}-g(\mathcal{X},\theta_0)) \\
\end{split}
\end{equation*}
where $g(x,\theta_0)=\Theta_{\theta_0}(x,\mathcal{X})\bar{\Theta}(\mathcal{X},\mathcal{X})^{-1}f(\mathcal{X},\theta_0)$. Note that the predictive variance over random initialization of these functions corresponds to respectively $\mathbb{V}^c, \mathbb{V}^a, \mathbb{V}^i$ as defined in Section \ref{refine_predvar}.  

As one can see, we can simply remove the initialization noise from $ f^{\text{lin}}$ by subtracting the initial (noisy) function $f(x,\theta_0)$ before training resulting in a \textit{centered} model $f^{\text{lin-c}}$. Equivalently, we can remove noise that originates from the kernel by using the empirical \textit{average} over kernels resulting in model $f^{\text{lin-a}}$. Finally, we can isolate $f^{\text{lin-i}}$ by the same averaging trick as in $f^{\text{lin-a}}$ but use as functional noise $g(x,\theta_0)$ which can be precomputed and added to $f^{\text{lin-c}}$ before training. Note that we neglect the terms involving covariance terms and focus on the parts which are easy to isolate, for linearly trained as well as for standard models. This will later allow us to study practical ways to subtract important parts of the predictive distribution for neural networks leading for example to significant OOD detection performance differences. Now we explore the differences and similarities of these disentangled functions and their respective predictive distributions.

\subsubsection{Visualizations on a star-shaped toy dataset}

To qualitatively visualize the different terms, we construct a two-way star-shaped regression problem on a 2d-plane depicted in Figure \ref{fig:toy_problem}. After training an ensemble we visualize its predictive variance on the input space. Our first goal is to visualize qualitative differences in the predictive variance of ensembles consisting of $f^{\text{lin}}$ and the 3 disentangled models from above. We train a large ensemble of size $300$ where each model is a one-layer ReLU neural network with hidden dimension 512 and 1 hidden layer. As suggested analytically for one hidden layer ReLU networks (see Appendix  \ref{ana_rect_angle}), for example $\mathbb{V}[f^{\text{lin-c}}(x)]$ depends on the angle of the datapoints while $\mathbb{V}[f^\text{lin}(x)]$ depicts a superposition of the 3 isolated variances. While the ReLU activation does not satisfy the Lipschitz-continuity assumption of Proposition \ref{prop:noise_survive}, we use it to illustrate and validate our analytical description of the inductive biases induced by the different variance terms. We use the Softplus activation which behaved similarly to ReLU in the experiments in the next Section.

\subsubsection{Disentangling \textit{linearly trained} / \textit{kernel} ensembles for MNIST and CIFAR10}

Next, we move to a quantitative analysis of the asymptotic behavior of the various variance terms, as we increase the hidden layer size. In Figure \ref{fig:scaling}, we analyze the predictive variance of the kernel models based on MLPs and Convolutional Neural Networks (CNN) for various depths and widths and on subsets of MNIST \cite{lecun_mnist_2010} and CIFAR10. As before, we construct a binary classification task through a MSE loss with dataset size of $N=100$ and confirm, shown in Figure \ref{fig:scaling}, that $\mathbb{V}^{c}$, $\mathbb{V}^{i}$ decay by $1/h$ over all of our experiments. Crucially, we see that they contribute to the overall variance $\mathbb{V}$ even for relatively large widths. We further observe a decay in $1/{h^2}$ of the residual term as predicted by Proposition \ref{prop:var_f_lin}. As in all of our experiments, the variance magnitude and therefore the influence on $f^{\text{lin}}$ of the disentangled parts is highly architecture and dataset-dependent. Note that the small size of the datasets comes from the necessity to compute the inverse of the kernels for every ensemble member, see Appendix \ref{appendix_21} for a additional analysis on larger datasets and scaling plots of $\mathbb{V}^{cor}$.

\begin{table}[htbp]
\setlength{\tabcolsep}{4pt}
\small
\centering
  \caption{Test set accuracy and AUROC for deep ensembles of size 10 of MLPs ($h=1024$, $L=3$) and CNNs ($h=256$, $L=3$)  trained on a subset (N=1000) and on of full MNIST and CIFAR10. We indicate small standard deviations $\sigma < 0.005$ obtained over ensemble size (E) with $\pm.00$. In all experiments, the various disentangled models show significant differences in behavior. All linearly trained models follow the gradient descent models behavior tightly. When optimizing with SGD, isolating initial noise sources still affect the ensemble behavior significantly and can lead to improved OOD detection as well as test set accuracy.}
  \vspace{-0.4cm}
  \vspace{10pt}
   \begin{tabular}{l|c|ccc|c|ccc}
    \toprule
    \multicolumn{1}{l|}{Model} &\multicolumn{4}{l|}{CNN, CIFAR10, N=1000, E=10}   & \multicolumn{4}{l}{MLP, MNIST, N=1000, E=30} \\
    \midrule
       &Test (\%)  & $\text{SVHN}$ & $\text{LSUN}$ & $\text{iSUN}$ &Test (\%) & \text{FM} & \text{EM} & \text{KM} \\
     \midrule
    $f^{\text{lin}}$  &  36.43$^{\pm .90}$  & .532$^{\pm .006}$ & .809$^{\pm .004}$ & .783$^{\pm .004}$ &  91.53$^{\pm .40}$ & .962$^{\pm .006}$ & .922$^{\pm .000}$& .982$^{\pm .001}$	\\
    
    $f^{\text{lin-c}}$  &  37.2$^{\pm .44}$ & .567$^{\pm .006}$ & .693$^{\pm .001}$ & .674$^{\pm .004}$ & 89.67$^{\pm .15}$ & .935$^{\pm .005}$ & .881$^{\pm .003}$& .967$^{\pm .002}$	\\ 
    
    $f^{\text{lin-a}}$  &  30.90$^{\pm .53}$ & .510$^{\pm .006}$ & .764$^{\pm .003}$ & .738$^{\pm .000}$ &  91.27$^{\pm .06}$ & .978$^{\pm .003}$ & .922$^{\pm .001}$& .987$^{\pm .000}$	\\ 
    
    $f^{\text{lin-i}}$  &  32.85$^{\pm .21}$ & .591$^{\pm .003}$ & .683$^{\pm .001}$ & .660$^{\pm .000}$ & 91.60$^{\pm .42}$& .970$^{\pm .004}$& .908$^{\pm .001}$& .983$^{\pm .002}$	\\ 
    
     \midrule
     $f^{\text{gd}}$ &  39.70$^{\pm .52}$ & .516$^{\pm .002}$ & .789$^{\pm .003}$ & .763$^{\pm .004}$ &  91.43$^{\pm .49}$ & .971$^{\pm .005}$ & .924$^{\pm .001}$& .986$^{\pm .001}$	\\
     
    $f^{\text{gd-c}}$ &  37.47$^{\pm .49}$ & .562$^{\pm .004}$ & .691$^{\pm .004}$ & .670$^{\pm .002}$ & 89.67$^{\pm .06}$ & .937$^{\pm .005}$ & .884$^{\pm .003}$& .968$^{\pm .002}$	\\
    
    $f^{\text{gd-a}}$ &  30.53$^{\pm 1.15}$ & .509$^{\pm .004}$ & .758$^{\pm .005}$ &  .734$^{\pm .003}$ & 90.73$^{\pm .32}$ & .978$^{\pm .003}$ & .922$^{\pm .001}$& .987$^{\pm .000}$	\\ 
    
    $f^{\text{gd-i}}$  &   31.20$^{\pm .14}$ & .583$^{\pm .000}$ & .656$^{\pm .003}$ & .638$^{\pm .003}$ & 90.65$^{\pm .35}$& .977$^{\pm .003}$& .913$^{\pm .002}$& .987$^{\pm .002}$	\\ 
    \midrule
    \midrule
    \multicolumn{1}{l|}{Model} &\multicolumn{4}{l|}{CNN, CIFAR10, N=50000, E=5} & \multicolumn{4}{l}{MLP, MNIST, N=50000, E=5}   \\
    \midrule 
       &  Test (\%) & $\text{SVHN}$ & $\text{LSUN}$ & $\text{iSUN}$ &Test (\%) & \text{FM} & \text{EM} & \text{KM} \\
         \midrule
       $f^{\text{sgd}}$  &  62.68$^{\pm .36}$ & .557$^{\pm .01}$ & .884$^{\pm .00}$ & .864$^{\pm .00}$ &   95.70$^{\pm .12}$ & .974$^{\pm .005}$ & .930$^{\pm .001}$& .991$^{\pm .001}$	\\
    $f^{\text{sgd-c}}$  &  57.03$^{\pm .14}$ & .554$^{\pm .00}$ & .791$^{\pm .00}$ & .781$^{\pm .00}$ &  94.43$^{\pm .01}$  & .924$^{\pm .016}$ & .873$^{\pm .006}$& .962$^{\pm .004}$	\\
    $f^{\text{sgd-a}}$&  58.83$^{\pm .22}$ & .455$^{\pm .00}$ & .864$^{\pm .00}$ &  .845$^{\pm .00}$ &  97.48$^{\pm .13}$  & .988$^{\pm .002}$ & .943$^{\pm .001}$& .995$^{\pm .001}$	\\
     \bottomrule 
  \end{tabular}
    \vspace{-0.1cm}
  \label{table:auroc}
\end{table}

In Table \ref{table:auroc}, we quantify the previously observed qualitative difference of the various predictive variances by evaluating their performance on out-of-distribution detection tasks, where high predictive variance is used as a proxy for detecting out-of-distribution data. We focus our attention on analysing $\mathbb{V}[f^{\text{lin-c}}(x)]$ and $\mathbb{V}[f^{\text{lin-a}}(x)]$, as they are the variance terms containing purely the functional and kernel noise, respectively. As an evaluation metric, we follow numerous studies and compute the area under the receiver operating characteristics curve (AUROC, c.f. Appendix \ref{appendix_21}). We fit a linearized ensemble on a larger subset of the standard 10-way classification MNIST and CIFAR10 datasets using MSE loss. When training our ensembles on MNIST, we test and average the OOD detection 
 performance on FashionMNIST (FM) \cite{xiao2017fashionmnist}, E-MNIST (EM) \cite{cohen_emnist_2017} and K-MNIST (KM) \cite{clanuwat_deep_2018}. When training our ensembles on CIFAR10, we compute the AUROC for SVHN \cite{netzer_reading_2011}, LSUN \cite{yu_lsun_2015}, TinyImageNet (TIN) and CIFAR100 (C100), see Appendix Table \ref{table:all_cifar_auroc} for the variance magnitude and AUROC values for all datasets.

The results show significant differences in variance magnitude and AUROC values. While we do not claim competitive OOD performance, we aim to highlight the differences in behavior of the isolated functions developed above: we see for instance that for (MLP, MNIST, N=1000), $f^{\text{lin-a}}$ generally performs better than $f^{\text{lin}}$ in OOD detection. Indeed, the overall worse performance of $\mathbb{V}[f^{\text{lin-c}}(x)]$ seems to be affecting that of  $\mathbb{V}[f^{\text{lin}}(x)]$ which contains both terms. On the other hand, we see that for the setup (CNN, CIFAR10, N=1000) $\mathbb{V}[f^{\text{lin}}(x)]$ is not well described by this interpolation argument, which highlights the influence of the other variance terms described in Proposition \ref{prop:var_f_lin}. Furthermore, the OOD detection capabilities of each function seem to be highly dependent on the particular data considered: Ensembles of $f^{\text{lin-c}}$ are relatively good at identifying SVHN data as OOD, while being poor at identifying LSUN and iSUN data. These observations highlight the particular inductive bias of each variance term for OOD detection on different datasets. 

We further report the test set generalization of the ensemble mean of different functions, highlighting the diversity in the predictive mean of these models as well. Note that for $N>=1000$ we trained the ensembles in linear fashion with gradient flow (which coincides with the kernel expression) up until the MSE training error was smaller than $0.01$.

\subsection{Does the refined variance description generalize to standard gradient descent in practice?}

In this Section, we start with empirical verification of Proposition \ref{prop:noise_survive} and show that the bound in equation  \ref{theoretical_bound} holds in practice. Given this verification, we then propose equivalent disentangled models as those previously defined but in the non-linear setting, and 1) show significant differences in their predictive distribution but also 2)   investigate to which extent improvements in OOD detection translate from kernel / linearly to fully non-linearly trained models. We stress that we do not consider early stopped models and aim to connect the kernel with the gradient descent models faithfully.

\subsubsection{Survival of the kernel noise after training}
To validate  Proposition \ref{prop:noise_survive}, we first introduce $f^{\text{gd}}(x) =  {f(x,} {\theta_t)}$, a model trained with standard gradient descent of $t$ steps i.e. $\theta_t = \theta_0 - \sum_{i=0}^{t-1}\eta \nabla_{\theta} f(\mathcal{X}, \theta_i)(\mathcal{Y} - f(\mathcal{X}, \theta_i))$. To empirically verify Proposition \ref{prop:noise_survive}, we introduce the following ratio
\begin{align}
    \mathcal{R}(f) &= \exp \Big(\mathbb{E}_{x \sim \mathcal{X}'}\Big(\log [ \frac{\| \hat{\mathbb{V}}[f^{\text{lin}}(x)]-\hat{\mathbb{V}}[f^{\text{gd}}(x)] \|}{\|\hat{\mathbb{V}}^{c}(x) + \hat{\mathbb{V}}^{i}(x)\|}] \Big)\Big)
\end{align}
where the empirical variances are computed over random initialization, and the expectation over some data distribution which we choose to be the union of the test-set and the various OOD datasets.
Given a datapoint $x$, the term inside the $\log$ measures the ratio between the discrepancy of the variance between the linearized and non-linear ensemble, against the refined variance terms. $\mathcal{R}(f)$ is then the geometric mean of this ratio over the whole dataset. Proposition \ref{prop:noise_survive} predicts that the ratio remains bounded as the width increases, suggesting that the refined terms contribute to the final predictive variance of the non-linear model in a non negligible manner. We empirically verify this prediction for various depths in Fig. \ref{fig:linearization} and Appendix Figure \ref{fig:app_linearization}, for functions trained on subsets MNIST and CIFAR10. Note that for all our experiments we also empirically verify the assumption from Proposition  \ref{prop:noise_survive} (see Appendix Figure \ref{fig:kernel_scaling_time}, Table \ref{table:kernel_scaling_time}).

\begin{wrapfigure}{r}{0.5\textwidth}
\vspace{-0.6cm}
    \centering
    \includegraphics[width = 0.4\textwidth]{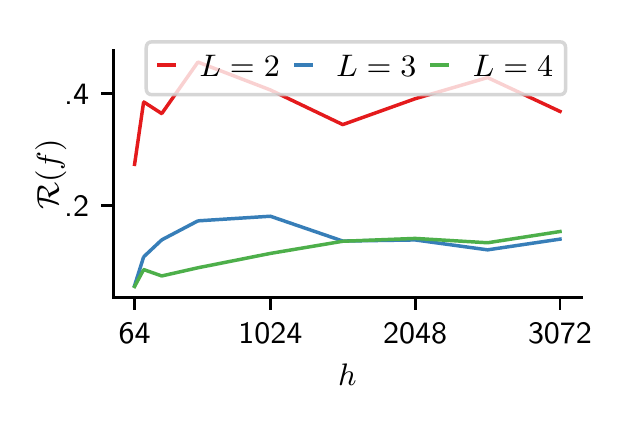}
    \vspace{-0.5cm}
    \caption{$\mathcal{R}(f)$ of MLPs with multiple widths and depths ($L \in \{2,3,4\}$) trained on a subset (100) of MNIST . As predicted, we observe $\mathcal{R}(f)$ bounded as we increase the width in support of our theoretical analysis.}
    \vspace{-0.25cm}
\label{fig:linearization}
\end{wrapfigure} 

\subsubsection{Disentangling noise sources in  \textit{gradient descent} non-linear models}
\label{gd_models}

Motivated by the empirical verification of Proposition \ref{prop:noise_survive}, we now aim to isolate different noise sources in non-linear models trained with gradient descent. Starting from a non-linear network $f^{\text{gd}}$, we follow the same strategy as before and silence the functional initialization noise by \textit{centering} the network (referred as $f^{\text{gd-c}}(x)$) by simply subtracting the function at initialization. On the other hand, we remove the kernel noise with a simple trick: We first sample a random weight $\theta^c_0$ once, and use it as the weight initialization for all ensemble members. A function noise is added by first removing the function initialization from $\theta^c_0$, and adding that of a second random network which is not trained. The resulting functions (referred as $f^{\text{gd-a}}(x)$) will induce and ensemble which will only differ in their functional initialization while having the same Jacobian 
\begin{equation*}
\begin{split} 
    & f^{\text{gd-c}}(x) =  {f(x, \theta_t) - f(x, \theta_0)}, \\
     &  f^{\text{gd-a}}(x) =  f(x, \theta_t^c) - f(x, \theta_0^c) + {f(x, \theta_0)}.\\
\end{split}
\end{equation*}
We furthermore introduce $f^{\text{gd-i}}(x)$, the non linear counterpart to $f^{\text{lin-i}}(x)$, which we construct similarly to $f^{\text{gd-a}}(x)$ but using $g(x,\theta_0, \theta_0^c)=\Theta_{\theta_0}(x,\mathcal{X})\Theta_{\theta_c}(\mathcal{X},\mathcal{X})^{-1}f(\mathcal{X},\theta_0)$ as the function initialization instead of $f(x, \theta_0)$ (see Section \ref{refine_predvar} and Appendix \ref{SM:custom_init} for the justification). Unlike $f^{\text{gd-a}}$ and $f^{\text{gd-c}}$, constructing $f^{\text{gd-i}}$ requires the inversion of large matrices due to the way $g$ is defined, a challenging task for realistic settings. While its practical use is thus limited, we introduce it to illustrate the correspondence of correspondence of the predictive variance of linearized vs non-linear deep ensemble.

Given these simple modifications of $f^{\text{gd}}$, we rerun the experiments conducted for the linearly trained models for moderate dataset sizes (N=1000). We observe close similarities in the OOD detection capabilities as well as predictive variance between the introduced non-linearly trained ensembles and their linearly trained counterparts. We further train these models on the full MNIST dataset (N=50000) for which we show the same trend in Appendix Table \ref{table:auroc_mnist_50k}. We also include the ensemble' performance when trained on the full CIFAR10 dataset. Intriguingly, the relative performance of the ensemble is somewhat preserved in both settings between N=1000 and N=50000, even when training with SGD, promoting the use of quick, linear training on subset of data as a proxy for the OOD performance of a fully trained deep ensemble.

Similar to the case of (MLP, MNIST, N=1000/50000), we observe that $f^{\text{gd}}$ ensemble performance is an interpolation of $f^{\text{gd-c}}$ and $f^{\text{gd-a}}$ which interestingly performs often favorably, on different OOD data. To understand if the noise introduced by SGD impacts the predictive distribution of our disentangled ensembles, we compared the behavior of $f^{\text{gd}}$ and $f^{\text{sgd}}$ in the lower data regime of $N=1000$. 
Intriguingly, we show in Appendix Table \ref{table:sgd_vs_gd} that no significant empirical difference between GD and SGD models can be observed and hypothesize that noise sources discussed in this study are more important in our approximately linear training regimes. To speed up experiments we used (S)GD with momentum (0.9) in all experiments of this subsection.

\subsubsection{Removing noise of models possibly far away from the linear regime}

Finally, we investigate the OOD performance of the previously introduced model variants $f^{\text{sgd}}, f^{\text{sgd-c}}$ and $f^{\text{sgd-d}}$ in more realistic settings. To do so we train the commonly used WideResNet 28-10 \cite{zagoruyko_wide_2016} on CIFAR10 with BatchNorm \cite{ioffe_batch_2015} Layers and cross-entropy (CE) loss with batchsize of 128, without data augmentation (see Table \ref{Tab:wrn1}). These network and training algorithm choices are considered crucial to achieving state-of-the-art and superior performance compared to their linearly trained counterparts. Strikingly, we notice that our model variants, which each isolate a different \textit{initial} noise source, significantly affect the OOD capabilities of the \textit{final} models when the training loss is virtually 0 - as in all of our experiments. This indicates that the discussed noise sources influence the ensemble's predictive variance long throughout training. We provide similar results for CIFAR100 and FashionMNIST in Table \ref{Tab:wrn2} and \ref{Tab:wrn3} of the Appendix B. We stress that we do not claim that our theoretical assumptions hold in this setup.

\begin{table}[htbp]
 \label{Tab:wrn1}
 \setlength{\tabcolsep}{4pt}
\small
\centering
  \caption{Test set accuracy and AUROC for WRN 28-10 ensembles of size 8 trained on CIFAR10 on the cross entropy (CE) or MSE loss. While the models perform similarly on test set, a significant advantage of $f^{\text{sgd-c}}$ in OOD detection is observed across most OOD datasets. Standard deviations $\sigma$ computed over 5 seeds are indicated with $\pm$. In bold are values that outperform $f^{\text{sgd}}$ with $p<0.2$.}
     \vspace{-0.25cm}
  \vspace{10pt}
   \begin{tabular}{l|c|c|ccccc}
    \toprule
       Model & Loss  &Test (\%) & $\text{C100}$ & $\text{SVHN}$ & $\text{LSUN}$ &
       $\text{TIN}$  &$\text{iSUN}$ \\
    \midrule
$f^{\text{sgd}}$ &CE   &89.36$^{\pm 0.36}$ & 0.830$^{\pm 0.001}$ & 0.900$^{\pm 0.002}$ & 0.891$^{\pm 0.002}$ & 0.860$^{\pm 0.001}$ & 0.883$^{\pm 0.001}$ \\
$f^{\text{sgd-c}}$ &CE   &89.56$^{\pm 0.30}$ & \textbf{0.831$^{\pm 0.003}$} & 0.899$^{\pm 0.004}$ & \textbf{0.895$^{\pm 0.003}$} & \textbf{0.862$^{\pm 0.003}$ }& \textbf{0.885$^{\pm 0.003}$} \\
$f^{\text{sgd-a}}$ &CE &89.01$^{\pm 0.32}$ & 0.827$^{\pm 0.002}$ & 0.894$^{\pm 0.003}$ & 0.887$^{\pm 0.003}$ & 0.855$^{\pm 0.004}$ & 0.879$^{\pm 0.002}$ \\
    \midrule
$f^{\text{sgd}}$ &MSE &77.94$^{\pm 0.22}$ & 0.739$^{\pm 0.004}$ & 0.863$^{\pm 0.006}$ & 0.823$^{\pm 0.007}$ & 0.795$^{\pm 0.007}$ & 0.813$^{\pm 0.008}$ \\
$f^{\text{sgd-c}}$ & MSE  &77.88$^{\pm 0.30}$ & 0.739$^{\pm 0.001}$ & \textbf{0.880$^{\pm 0.006}$} & \textbf{0.829$^{\pm 0.005}$} & \textbf{0.807$^{\pm 0.006}$ }& 0.813$^{\pm 0.005}$ \\
$f^{\text{sgd-a}}$ &MSE &75.34$^{\pm 0.18}$ & 0.707$^{\pm 0.004}$ & 0.841$^{\pm0.011}$ & 0.784$^{\pm 0.003}$ & 0.763$^{\pm0.010}$ & 0.761$^{\pm 0.002}$ \\
     \bottomrule 
  \end{tabular}
  \vspace{-0.45cm}
\end{table}



\section{Conclusion}

The generalization on in-and out-of-distribution data of deep neural network ensembles is poorly understood. This is particularly worrying since deep ensembles are widely used in practice when trying to asses if data is out-of-distribution. In this study, we try to provide insights into the sources of noise stemming from initialization that influence the predictive distribution of trained deep ensembles. By focusing on large-width models we are able to characterize two distinct sources of noise and describe an analytical  approximation of the predictive variance in some restricted settings. 
We then show theoretically and empirically how parts of this refined predictive variance description in the linear training regime survive and impact the predictive distribution of non-linearly trained deep ensembles. This allows us to extrapolate insights of the tractable linearly trained deep ensembles into the non-linear regime which can lead to improved out-of-distribution detection of deep ensembles by eliminating potentially unfavorable noise sources. Although our theoretical analysis relies on the closeness to linear gradient descent which has shown to result in less powerful models in practice, we hope that our surprising empirical success of noise disentanglement sparks further research into using the lens of linear gradient descent to understand the mysteries of deep learning.

\label{submission}

\begin{ack}
 Seijin Kobayashi was supported by the Swiss National Science Foundation (SNF) grant CRSII5\_173721. Pau Vilimelis Aceituno was supported by the ETH Postdoctoral Fellowship program (007113). Johannes von Oswald was funded by the Swiss Data Science Center (J.v.O. P18-03). We thank Christian Henning, Frederik Benzing and Yassir Akram for helpful discussions. Seijin Kobayashi and Johannes von Oswald are grateful for Angelika Steger's and João Sacramento's overall support and guidance.
\end{ack}

\newpage

\bibliographystyle{unsrtnat}
\bibliography{lib}
\newpage


\appendix

\section{Appendix}

\subsection{Proof of Proposition \ref{prop:noise_survive}}

\begin{lemma}
\label{lemma_sam}
Let $(X_n)_{n \in \mathbb{N}}$ be a sequence of random variables with infinite mean and variance. Let us assume that there exists a random variable $X$ with finite mean and variance such that $X_n \overset{d}{\to} X$, where $d$ denotes convergence in distribution.
Given the number of samples $K$, let the empirical mean and variance random variables $(\hat{\mu}_n)_{n \in \mathbb{N}}, (\hat{\sigma}_n)_{n \in \mathbb{N}}$ defined as
\begin{align*}
    \forall n \in \mathbb{N}, \quad &\hat{\mu}_n = \frac{1}{K}\sum_{k=1}^K X_{n,k} \\
    &\hat{\sigma}_n = \frac{1}{K-1}\sum_{k=1}^K (X_{n,k}-\hat{\mu}_n)^2
\end{align*}

where the $X_{n,k}$ are i.i.d samples. Then we have 
\begin{align}
    \hat{\mu}_n \overset{d}{\to} \hat{\mu} \text{ \quad as well as \quad }\hat{\sigma}_n \overset{d}{\to} \hat{\sigma}
\end{align}
where $\hat{\mu},\hat{\sigma}$ are resp. the empirical mean and variance of the limiting distribution $X$.
\end{lemma}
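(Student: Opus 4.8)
The plan is to recognize that, for a \emph{fixed} sample size $K$, both $\hat{\mu}_n$ and $\hat{\sigma}_n$ are continuous functions of the random vector $(X_{n,1},\dots,X_{n,K})$ of $K$ i.i.d.\ copies of $X_n$, and then to push the hypothesis $X_n \overset{d}{\to} X$ through the continuous mapping theorem. The crucial point is that the infinite mean and variance of $X_n$ play no role: we never let $K$ grow, so $\hat{\mu}_n$ and $\hat{\sigma}_n$ are finite almost surely regardless of the true moments, and convergence in distribution is a statement about laws only, requiring no moment control.

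First I would upgrade the marginal convergence $X_n \overset{d}{\to} X$ to joint convergence of the sample vectors, namely $(X_{n,1},\dots,X_{n,K}) \overset{d}{\to} (X_1,\dots,X_K)$, where $X_1,\dots,X_K$ are i.i.d.\ copies of $X$. This follows from the independence structure via characteristic functions: by independence the characteristic function of the $n$-th vector factorizes as $\prod_{k=1}^K \phi_{X_n}(t_k)$, and L\'evy's continuity theorem gives $\phi_{X_n}(t) \to \phi_X(t)$ pointwise, so the product converges to $\prod_{k=1}^K \phi_X(t_k)$, the characteristic function of the limiting i.i.d.\ vector. The multivariate L\'evy continuity theorem then yields the claimed joint convergence.

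Next I would write $\hat{\mu}_n = g(X_{n,1},\dots,X_{n,K})$ and $\hat{\sigma}_n = v(X_{n,1},\dots,X_{n,K})$ with $g(x_1,\dots,x_K) = \frac{1}{K}\sum_k x_k$ and $v(x_1,\dots,x_K) = \frac{1}{K-1}\sum_k (x_k - g(x))^2$. Both $g$ and $v$ are polynomials, hence continuous on all of $\mathbb{R}^K$, so the continuous mapping theorem applies without any exceptional set. Applying it to the joint convergence above gives $\hat{\mu}_n \overset{d}{\to} g(X_1,\dots,X_K) = \hat{\mu}$ and $\hat{\sigma}_n \overset{d}{\to} v(X_1,\dots,X_K) = \hat{\sigma}$, which are precisely the empirical mean and variance of $K$ i.i.d.\ draws from the limit law $X$.

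I do not expect a genuine obstacle here; the statement is essentially a packaging of the continuous mapping theorem. The only point requiring care is the first step — justifying that marginal convergence of the i.i.d.\ copies lifts to joint convergence of the vectors — since convergence in distribution does not in general combine across independent coordinates without an argument, and the characteristic-function factorization is the clean way to make this rigorous. One should also note explicitly that finiteness of $\hat{\mu}$ and $\hat{\sigma}$ is automatic for fixed $K$, so the conclusion is meaningful even though $\mathbb{E}[X_n]$ and $\mathbb{V}[X_n]$ are infinite.
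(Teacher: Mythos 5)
Your proof is correct and follows exactly the route the paper intends: the paper's entire proof is the single remark that the lemma ``is a straightforward application of the continuous mapping theorem,'' which is precisely your second step. Your write-up additionally makes explicit the one detail the paper leaves implicit --- lifting the marginal convergence $X_n \overset{d}{\to} X$ to joint convergence of the i.i.d.\ sample vectors via factorization of characteristic functions and L\'evy's continuity theorem --- which is indeed the point that requires an argument before the continuous mapping theorem can be invoked.
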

 The proof is a straightforward application of the continuous mapping theorem.

\begin{lemma}
\label{lemma_dom}
Let $(X_n)_{n \in \mathbb{N}}, (Y_n)_{n \in \mathbb{N}}$ be a sequence of random variables such that $|X_n|=\mathcal{O}_p(1)$ and $|Y_n|=\mathcal{O}_p(1)$. Let us assume that there exists a real valued sequence $(a_n)_{n \in \mathbb{N}}$ such that  $|X_n-Y_n|=\mathcal{O}_p(a_n)$.
Given $K$, let the empirical variance random variables $(\hat{\sigma}_n^X)_{n \in \mathbb{N}}$ defined as
\begin{align*}
    \forall n \in \mathbb{N}, \quad \hat{\sigma}_n^X = \frac{1}{K-1}\sum_{k=1}^K (X_{n,k}-\frac{1}{K}\sum_{k=1}^K X_{n,k})^2
\end{align*}

We define similarly $(\hat{\sigma}_n^Y)_{n \in \mathbb{N}}$. Then,
\begin{align*}
|\hat{\sigma}_n^X-\hat{\sigma}_n^Y|=\mathcal{O}_p(a_n)
\end{align*}

\begin{proof}

\begin{align*}
|\hat{\sigma}_n^X-\hat{\sigma}_n^Y|=&|\frac{1}{K-1}\sum_{k'=1}^K \big[ (X_{n,k'}-\frac{1}{K}\sum_{k=1}^K X_{n,k})^2- (Y_{n,k'}-\frac{1}{K}\sum_{k=1}^K Y_{n,k})^2 \big]| \\ 
=&\frac{1}{K-1}|\sum_{k'=1}^K \big[ (X_{n,k'}-\frac{1}{K}\sum_{k=1}^K X_{n,k}-Y_{n,k'}+\frac{1}{K}\sum_{k=1}^K Y_{n,k})(X_{n,k'}-\frac{1}{K}\sum_{k=1}^K X_{n,k}+Y_{n,k'}-\frac{1}{K}\sum_{k=1}^K Y_{n,k}) \big]| \\ 
=&\frac{1}{K-1}|\sum_{k'=1}^K \big[ (X_{n,k'}-Y_{n,k'}-\frac{1}{K}\sum_{k=1}^K (X_{n,k}- Y_{n,k}))(X_{n,k'}+Y_{n,k'}-\frac{1}{K}\sum_{k=1}^K (X_{n,k}+ Y_{n,k})) \big]| \\ 
=&\mathcal{O}_p(|\sum_{k'=1}^K \big[ (X_{n,k'}-Y_{n,k'}-\frac{1}{K}\sum_{k=1}^K (X_{n,k}- Y_{n,k})) \big]|) \\ 
=&\mathcal{O}_p(\sum_{k=1}^K |X_{n,k}- Y_{n,k}|) \\ 
=&\mathcal{O}_p(a_n) \\ 
\end{align*}
\end{proof}
\end{lemma}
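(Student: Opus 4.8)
The plan is to reduce the difference of the two empirical variances to an average of products, isolate in each product one factor that carries the rate $a_n$ and one factor that is merely stochastically bounded, and then invoke the algebra of the $\mathcal{O}_p$ symbol for a \emph{fixed} number $K$ of samples. Throughout I write $\bar{X}_n = \frac{1}{K}\sum_{k=1}^K X_{n,k}$ and $\bar{Y}_n = \frac{1}{K}\sum_{k=1}^K Y_{n,k}$ for the two sample means.

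First I would expand the centered squares using the factorization $u^2 - v^2 = (u-v)(u+v)$ with $u_{k'} = X_{n,k'} - \bar{X}_n$ and $v_{k'} = Y_{n,k'} - \bar{Y}_n$. This rewrites $\hat{\sigma}_n^X - \hat{\sigma}_n^Y$ as $\frac{1}{K-1}\sum_{k'=1}^K (u_{k'}-v_{k'})(u_{k'}+v_{k'})$, where the difference factor equals $(X_{n,k'}-Y_{n,k'}) - \frac{1}{K}\sum_{k=1}^K (X_{n,k}-Y_{n,k})$ and the sum factor equals $(X_{n,k'}+Y_{n,k'}) - \frac{1}{K}\sum_{k=1}^K (X_{n,k}+Y_{n,k})$. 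This is exactly the step that converts the squared quantities into something linear in the gap $X-Y$.

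Second I would control the two factors separately. Because each sample $(X_{n,k},Y_{n,k})$ is a copy of $(X_n,Y_n)$, the hypothesis $|X_n-Y_n| = \mathcal{O}_p(a_n)$ transfers to every one of the finitely many differences $|X_{n,k}-Y_{n,k}| = \mathcal{O}_p(a_n)$; since a fixed-size sum of $\mathcal{O}_p(a_n)$ terms is again $\mathcal{O}_p(a_n)$, the difference factor is $\mathcal{O}_p(a_n)$ uniformly in $k'$. Symmetrically, $|X_n|,|Y_n| = \mathcal{O}_p(1)$ gives $|X_{n,k}|,|Y_{n,k}| = \mathcal{O}_p(1)$, so the sum factor is $\mathcal{O}_p(1)$. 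The product of an $\mathcal{O}_p(a_n)$ term and an $\mathcal{O}_p(1)$ term is $\mathcal{O}_p(a_n)$, and averaging $K$ such products with $K$ fixed preserves the rate, which delivers $|\hat{\sigma}_n^X - \hat{\sigma}_n^Y| = \mathcal{O}_p(a_n)$.

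The main thing to be careful about — really the only obstacle — is the bookkeeping of the stochastic order symbol: I must justify that for a \emph{fixed} $K$ the operations of summation and multiplication preserve $\mathcal{O}_p$ rates. This rests on the joint tightness of the vector $(X_{n,1},\dots,X_{n,K},Y_{n,1},\dots,Y_{n,K})$, which follows from marginal tightness precisely because $K$ is finite; the same manipulations would break down if $K$ were permitted to grow with $n$, so I would flag that $K$ is held constant as the essential hypothesis.
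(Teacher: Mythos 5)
Your proposal is correct and follows essentially the same route as the paper's proof: the identical factorization $u^2-v^2=(u-v)(u+v)$ of the centered squares, followed by absorbing the $\mathcal{O}_p(1)$ sum factor and bounding the difference factor by a fixed-size sum of $\mathcal{O}_p(a_n)$ terms. Your additional remarks on joint tightness and the necessity of holding $K$ fixed make explicit the bookkeeping that the paper's proof leaves implicit, but the argument is the same.
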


\subsubsection{Discrepancy between non-linear and linearly trained neural network during training}

\label{linearized_error}
We adapt Theorem H.1. from \cite{lee_wide} to show that the discrepancy between the original and the linearly trained network for the MSE loss that we consider is bounded as $\sup_{t}\|f_t^{lin}(x)-f(x,\theta_t)\|_2^2 \leq \mathcal{O}(h^{-2})$ as well. The proof is an adaptation of the one in \cite{lee_wide} with very minor differences. We piggyback on the main result in their proof which was obtained with Grönwall’s inequality, which requires the continuity of the derivative of the activation function. 

Let a neural network $f$ such the width of all hidden layers are identical, $h_1=h_2=...=h_{L-1}=h$, and such that $\phi'$ is bounded and Lipschitz continuous on $\mathbb{R}$. Let the training data $(\mathcal{X},\mathcal{Y})$ contained in some compact set, such that the NTK of $f$ on $\mathcal{X}$ is invertible. Let $f_t$ the model trained on the MSE loss with gradient flow at timestep $t$ with some learning rate.
\begin{assumption}
\label{strong_bound}
$\forall \delta > 0, \exists C,N: \forall h > H$, with probability at least $1-\delta$ over random initialization, 
\begin{equation}
    \sup_t\lVert \Theta_{\theta_0}-\Theta_{\theta_t}\rVert_F \leq \frac{C}{h}
\end{equation}
\end{assumption}

\begin{proposition}
Under assumption \ref{strong_bound}, when trained with gradient flow on the MSE loss, we have

$\forall x, \forall\delta > 0,  \exists C,H: \forall h > H$,  
\begin{equation}
    \mathbb{P}\big[\sup_t\lVert f_t^{lin}(x)-f_t(x)\rVert_2 \leq \frac{C}{h}\big]\geq 1-\delta
\end{equation}

\end{proposition}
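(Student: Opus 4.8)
The plan is to adapt the proof of Theorem H.1 in \cite{lee_wide}, substituting their $\mathcal{O}(h^{-1/2})$ control on the kernel drift by the sharper $\mathcal{O}(h^{-1})$ bound of Assumption \ref{strong_bound}, and verifying that this improvement propagates \emph{linearly} through the Grönwall argument so that the function discrepancy inherits the same rate. Since the heavy machinery (local Lipschitzness of the Jacobian, control of the weight displacement, and the high-probability lower bound on the smallest NTK eigenvalue) is already assembled in \cite{lee_wide}, the genuine content here is the rate bookkeeping.

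First I would write the coupled gradient-flow dynamics. With the training residuals $g(t) = f(\mathcal{X},\theta_t)-\mathcal{Y}$ and $g^{\text{lin}}(t) = f^{\text{lin}}(\mathcal{X})-\mathcal{Y}$, flow on the MSE loss gives $\dot g = -\Theta_{\theta_t}(\mathcal{X},\mathcal{X})\,g$ and $\dot g^{\text{lin}} = -\Theta_{\theta_0}(\mathcal{X},\mathcal{X})\,g^{\text{lin}}$, which share the initial condition $g(0)=g^{\text{lin}}(0)=f(\mathcal{X},\theta_0)-\mathcal{Y}$. Subtracting and adding $\Theta_{\theta_0}(\mathcal{X},\mathcal{X})g$ yields
\[
\frac{d}{dt}\bigl(g-g^{\text{lin}}\bigr) = -\Theta_{\theta_0}(\mathcal{X},\mathcal{X})\bigl(g-g^{\text{lin}}\bigr) - \bigl(\Theta_{\theta_t}(\mathcal{X},\mathcal{X})-\Theta_{\theta_0}(\mathcal{X},\mathcal{X})\bigr)g(t),
\]
so the discrepancy is forced entirely by the kernel drift $\Theta_{\theta_t}-\Theta_{\theta_0}$.

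Next I would invoke Grönwall / variation of constants. Invertibility of the NTK on $\mathcal{X}$ gives, with high probability, a smallest eigenvalue $\lambda_{\min}>0$, which both contracts the true residual, $\|g(s)\|\leq e^{-\lambda_{\min}s}\|g(0)\|$, and damps the homogeneous part, so
\[
\|g(t)-g^{\text{lin}}(t)\| \leq \int_0^t e^{-\lambda_{\min}(t-s)}\,\bigl\|\Theta_{\theta_s}(\mathcal{X},\mathcal{X})-\Theta_{\theta_0}(\mathcal{X},\mathcal{X})\bigr\|\,\|g(s)\|\,ds.
\]
Bounding the drift by $C/h$ (Assumption \ref{strong_bound}) and using the exponential decay of $\|g(s)\|$, the integral is bounded by $\tfrac{C}{h}\|g(0)\|\,t\,e^{-\lambda_{\min}t}$, whose supremum over $t$ is finite; hence $\sup_t\|g-g^{\text{lin}}\| = \mathcal{O}(h^{-1})$. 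This is exactly where the stronger assumption pays off: had the drift only been $\mathcal{O}(h^{-1/2})$, we would recover the classical rate. To lift the bound to a test point $x$, I would integrate $\dot f(x,\theta_t)=-\Theta_{\theta_t}(x,\mathcal{X})g(t)$ against its linearized analogue; splitting as before produces one term proportional to $\|g-g^{\text{lin}}\|$ and one to the off-diagonal drift $\|\Theta_{\theta_s}(x,\mathcal{X})-\Theta_{\theta_0}(x,\mathcal{X})\|$. The Lipschitz continuity of $\phi'$ makes the Jacobian locally Lipschitz, so the off-diagonal block drifts at the same $\mathcal{O}(h^{-1})$ rate governed by Assumption \ref{strong_bound}, and both contributions integrate to $\mathcal{O}(h^{-1})$.

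The main obstacle I anticipate is uniformity in $t$ rather than any isolated estimate: the $\sup_t$ statement demands that the exponential contraction from $\lambda_{\min}$ dominate simultaneously for all times, so the kernel-drift bound of Assumption \ref{strong_bound} must hold uniformly along the entire trajectory, and the local-Lipschitz ball used to control the Jacobian must never be exited throughout training. Collecting these events (eigenvalue lower bound, uniform kernel drift, Jacobian regularity) into a single event of probability at least $1-\delta$ then completes the argument, the novelty being solely the linear propagation of the $h^{-1}$ rate.
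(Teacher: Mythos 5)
Your proposal is correct and follows essentially the same route as the paper's proof: both adapt Theorem H.1 of \cite{lee_wide}, propagate the $\mathcal{O}(h^{-1})$ kernel-drift assumption through a Gr\"onwall/variation-of-constants bound on the training-residual discrepancy, lift the bound to a test point by splitting into a kernel-drift term and a term proportional to $\lVert g - g^{\mathrm{lin}}\rVert$, and close with a union bound over the high-probability events (eigenvalue lower bound, uniform drift, bounded initial residual). The only cosmetic difference is that you re-derive the train-set discrepancy bound from the residual ODE, whereas the paper imports it directly as equation (S118) of \cite{lee_wide}.
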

\begin{proof}
Let $g^{lin}(t)=f^{lin}_t(\mathcal{X})-\mathcal{Y}$ and  $g(t)=f_t(\mathcal{X})-\mathcal{Y}$.
Starting from equation (S118) from \cite{lee_wide}, we have
\begin{equation}
\label{stolen}
    \lVert g^{lin}(t)-g(t)\rVert_2 \leq \eta_0 t \sigma_t e^{-\lambda_0\eta_0t+\sigma_t\eta_0t}    \lVert g(0)\rVert_2 
\end{equation}
where $\sigma_t=\sup_{0\leq s \leq t}\lVert \Theta_{\theta_t} - \Theta_{\theta_0} \rVert_{op}$, $\eta_0$ the learning rate and $\lambda_0$ is the smallest eigenvalue of $\Theta_{\theta_0}$.

Because the functions are trained on MSE loss $L$, we have
\begin{align}
    \frac{d}{dt}(f_t^{lin}(x)-f_t(x)) = & \eta_0\Theta_{\theta_0}(x, \mathcal{X})L'(f_t^{lin})-\eta_0\Theta_{\theta_t}(x, \mathcal{X})L'(f_t) \\
    =& \eta_0\Theta_{\theta_0}(x, \mathcal{X})g^{lin}(t)-\eta_0\Theta_{\theta_t}(x, \mathcal{X})g(t) \\ 
    =&\eta_0(\Theta_{\theta_0}(x, \mathcal{X})- \Theta_{\theta_t}(x, \mathcal{X}))g^{lin}(t) - \eta_0\Theta_{\theta_t}(x, \mathcal{X})(g(t)-g^{lin}(t))
\end{align}

Integrating and taking the L2 norm,

\begin{align}
    \lVert f_t^{lin}(x)-f_t(x)) \rVert \leq &\eta_0\int_0^t \lVert(\Theta_{\theta_0}(x, \mathcal{X})- \Theta_{\theta_{t'}}(x, \mathcal{X})) \rVert \lVert g^{lin}(t') \rVert dt' \\ + & \eta_0\int_0^t\lVert\Theta_{\theta_{t'}}(x, \mathcal{X})\rVert\lVert g(t')-g^{lin}(t') \rVert dt' \\
    \leq &\eta_0 \lVert g(0) \rVert\int_0^t \lVert(\Theta_{\theta_0}(x, \mathcal{X})- \Theta_{\theta_{t'}}(x, \mathcal{X})) \rVert e^{-\lambda_0\eta_0t'} dt' \\
    &+\eta_0\int_0^t[\lVert\Theta_{\theta_{0}}(x, \mathcal{X})\rVert+\lVert\Theta_{\theta_{t'}}(x, \mathcal{X})-\Theta_{\theta_{0}}(x, \mathcal{X})\rVert  \\ & \cdot \lVert g(0)\rVert\eta_0 t' \sigma_{t'} e^{-\lambda_0\eta_0t'+\sigma_{t'}\eta_0t'}      dt' \\
\end{align}

where we used $\lVert g(0)\rVert=\lVert g^{lin}(t') \rVert \leq \lVert g^{lin}(0) \rVert e^{-\lambda_0\eta_0t'}$ the triangular inequality and equation
\ref{stolen}.

Because $g(0)$ converges in distribution to a mean zero gaussian distribution, and because $\Theta_{\theta_0}$ converges in probability to $\Theta_{\infty}$, we can find $H$ such that $\forall h>H$, with probability at least $1-\delta'$, 

\begin{equation}
    \lVert g(0)\rVert_2 \leq C
\end{equation}
and
\begin{equation}
    \lVert\Theta_{\theta_{0}}(x, \mathcal{X})\rVert_2 \leq C
\end{equation}
where $C>0$ is a constant.

Because the NTK at initialization converges in probaility to $\Theta_\infty$ assumed to be invertible, there exists $H'$ such that $\forall h >H'$,
\begin{equation}
    \rVert \Theta_{\theta_0} - \Theta_{\infty} \lVert_F \leq \frac{\lambda_{min}}{2}
\end{equation}
Where $\lambda_{min}$ is the smallest eigenvalue of $\Theta_{\infty}$.
Thus $\rVert \Theta_{\theta_0} - \Theta_{\infty} \lVert_{op} \leq \frac{\lambda_{min}}{2}$, and so $\lambda_0> \frac{\lambda_{min}}{2}$

From assumption \ref{strong_bound}, let us fix $H'',C'$ such that $\forall h >H''$, with probability at least $1-\delta'$
\begin{equation}
    \sigma_t=\sup_{0\leq t' \leq t}\lVert \Theta_{\theta_t'} - \Theta_{\theta_0} \rVert_{op}\leq \sup_{0\leq s \leq t}\lVert \Theta_{\theta_t} - \Theta_{\theta_0} \rVert_{F}\leq \frac{C'}{h}
\end{equation}
and 
\begin{equation}
    \sup_{0\leq t' \leq t} \lVert \Theta_{\theta_t'}(x, \mathcal{X}) - \Theta_{\theta_0}(x, \mathcal{X}) \rVert_{2}\leq \frac{C'}{h}
\end{equation}

And therefore $\forall h> \max(H',H'',\frac{2C'}{\lambda{min}})$, with probability at least $1-\delta'$, $\sigma_t<\lambda_0$, and therefore  $ \int_0^t t' e^{-\lambda_0\eta_0t'+\sigma_{t'}\eta_0t'}$ is bounded by some $C''$.

Putting everything together, $\forall n> \max(H,H',H'',\frac{2C'}{\lambda{min}})$, with probability at least $1-3\delta'$,

\begin{align}
    \lVert f_t^{lin}(x)-f_t(x)) \rVert \leq & \eta_0 C\int_0^t \frac{C'}{h} e^{-\lambda_{min}\eta_0t'} dt' +\eta_0[C+\frac{C'}{h}]C\eta_0  \frac{C'}{h} C'' \leq \frac{K}{h} \\
\end{align}

with $K$ some constant. By taking $\delta'=\frac{\delta}{3}$ we get the result that $\lVert f_t^{lin}(x)-f_t(x)) \rVert = \mathcal{O}_p(\frac{1}{h})$

Finally, using Lemma \ref{lemma_dom} and the fact that $f^{lin}(x)$ and $f(x)$ are bounded with high probability since they both converge in distribution to a gaussian with finite variance, we have, at the end of training,
\begin{align*}
| \hat{\mathbb{V}}(f(x)) - \hat{\mathbb{V}}(f^{\text{lin}}(x)) |
= \mathcal{O}_p(\frac{1}{h})
\end{align*}
for some finite sample empirical variance.

It remains to show that 
$\forall x, \forall\delta,  \exists C>0,H>0: \forall h > H$, 
\begin{equation}
    \mathbb{P}\big[\frac{1}{h} \leq C \hat{\mathbb{V}}\big[(\mathcal{Q}_{\theta_0}(x, \mathcal{X})-\bar{\mathcal{Q}}(x, \mathcal{X}))(\mathcal{Y}-f(\mathcal{X}, \theta_0))\big]\big]\geq 1-\delta
\end{equation}
i.e.
\begin{equation}
    \mathbb{P}\big[\frac{1}{C} \leq  \hat{\mathbb{V}}\big[\sqrt{h}(\mathcal{Q}_{\theta_0}(x, \mathcal{X})-\bar{\mathcal{Q}}(x, \mathcal{X}))(\mathcal{Y}-f(\mathcal{X}, \theta_0))\big]\big]\geq 1-\delta
\end{equation}

Following Proposition \ref{prop:var_disentangle} and Lemma \ref{lemma_sam}, we have
\begin{align*}
    \hat{\mathbb{V}}\big[\sqrt{h}(\mathcal{Q}_{\theta_0}(x, \mathcal{X})-\bar{\mathcal{Q}}(x, \mathcal{X}))(\mathcal{Y}-f(\mathcal{X}, \theta_0))\big] \overset{d}{\to} \hat{\mathbb{V}}\big[ Z(x) \big]
\end{align*}

where $Z$ is a linear combination of 2 chi-square distribution with finite and no-zero moments, which proves the result.

\end{proof}

\subsection{Delta method}
\label{SM:prood_delta}
\subsubsection{Proof of Proposition \ref{prop:var_disentangle}}

We start with the special case of a single hidden layer neural network. We provide the following Lemma, which is a slight variation of the Delta method.

\begin{lemma}
\label{lemma:delta}
Let $X_h\in \mathbb{R}^n, Y_h\in \mathbb{R}^n$ be two sequences of multivariate independent random variables that satisfy $X_h \overset{dist.}{\to} \mathcal{N}(\mu, \Sigma_1)$ and $\sqrt{h}(Y_h - \bar{Y}) \overset{dist.}{\to} \mathcal{N}(0, \Sigma_2)$ in distribution for some constant $\bar{Y}$. Let a function $g: \mathbb{R}^n\rightarrow \mathbb{R}^n$ with continuous partial derivative. Then,

\begin{equation}
     \sqrt{h}\big[g(Y_h)^TX_h-g(\bar{Y})^TX_h\big] \overset{dist.}{\to} Z
\end{equation}

such that $Z$ is a linear combination of 2 Chi-square distributions, and
\begin{align}
     \mathbb{E}[Z]&=0 \\
     \mathbb{V}[Z]&=Tr(\nabla^T g(\bar{Y})\Sigma_2\nabla g(\bar{Y})\Sigma_1) + Tr(\nabla^T g(\bar{Y})\Sigma_2\nabla g(\bar{Y})\mu\mu^T)
\end{align}
\end{lemma}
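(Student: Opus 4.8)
The plan is to treat this as a multivariate delta method computation, carefully tracking the two different scaling regimes of $X_h$ and $Y_h$. The quantity of interest is $\sqrt{h}\bigl[g(Y_h)^T X_h - g(\bar Y)^T X_h\bigr]$, so the first step is to Taylor-expand $g$ around $\bar Y$. Since $\sqrt{h}(Y_h - \bar Y)$ converges in distribution to a finite Gaussian, we have $Y_h - \bar Y = \mathcal{O}_p(h^{-1/2})$, and the continuous partial derivatives let us write
\begin{equation*}
g(Y_h) = g(\bar Y) + \nabla g(\bar Y)(Y_h - \bar Y) + R_h,
\end{equation*}
where the remainder satisfies $R_h = o_p(\|Y_h - \bar Y\|) = o_p(h^{-1/2})$. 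Substituting this into the target expression and multiplying by $\sqrt{h}$ gives
\begin{equation*}
\sqrt{h}\bigl[g(Y_h)^T X_h - g(\bar Y)^T X_h\bigr] = \bigl[\nabla g(\bar Y)\,\sqrt{h}(Y_h - \bar Y)\bigr]^T X_h + \sqrt{h}\,R_h^T X_h.
\end{equation*}
The plan is to show the remainder term vanishes and identify the limit of the leading term.

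For the leading term, I would set $U_h \coloneqq \nabla g(\bar Y)\sqrt{h}(Y_h - \bar Y)$, which converges in distribution to $U \sim \mathcal{N}(0, \nabla^T g(\bar Y)\Sigma_2 \nabla g(\bar Y))$ by the continuous mapping theorem applied to the linear map. Since $X_h$ and $Y_h$ are independent, the pair $(U_h, X_h)$ converges jointly in distribution to $(U, X)$ with $U \sim \mathcal{N}(0, \nabla^T g\,\Sigma_2\,\nabla g)$ and $X \sim \mathcal{N}(\mu, \Sigma_1)$ \emph{independent}, so by the continuous mapping theorem the inner product $U_h^T X_h \overset{d}{\to} U^T X \eqqcolon Z$. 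The remainder $\sqrt{h}\,R_h^T X_h = (\sqrt{h}R_h)^T X_h$ has $\sqrt{h}R_h = o_p(1)$ while $X_h = \mathcal{O}_p(1)$, so it is $o_p(1)$ and drops out by Slutsky's theorem. This establishes the convergence $Z$.

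The remaining work is computing the moments of $Z = U^T X$ and verifying it is a linear combination of two chi-square variables. For the mean, independence gives $\mathbb{E}[Z] = \mathbb{E}[U]^T\mathbb{E}[X] = 0$ since $\mathbb{E}[U]=0$. For the variance, conditioning on $U$ (or a direct second-moment expansion) yields $\mathbb{V}[Z] = \mathbb{E}[(U^TX)^2] = \mathbb{E}[U^T X X^T U]$, and using independence together with $\mathbb{E}[XX^T] = \Sigma_1 + \mu\mu^T$ gives $\mathbb{E}[U^T(\Sigma_1 + \mu\mu^T)U] = \mathrm{Tr}\bigl((\Sigma_1 + \mu\mu^T)\,\mathbb{E}[UU^T]\bigr)$, which with $\mathbb{E}[UU^T] = \nabla^T g(\bar Y)\Sigma_2\nabla g(\bar Y)$ and cyclicity of the trace gives exactly the stated formula. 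To see the chi-square structure, I would diagonalize the Gaussian forms: writing $U = A^{1/2}\xi$ and $X = \mu + \Sigma_1^{1/2}\zeta$ for independent standard normals, $U^TX$ is a bilinear form in jointly Gaussian variables, which can be reduced via simultaneous diagonalization to a signed combination of squared normals — i.e. a linear combination of chi-square variables. The main obstacle I anticipate is not the convergence argument but cleanly justifying the chi-square decomposition in the degenerate case where $\nabla g(\bar Y)\Sigma_2\nabla g(\bar Y)$ is rank-deficient (which it generically is, since here $g$ maps through a kernel inverse), so care is needed to state the "linear combination of 2 chi-squares" claim at the appropriate rank rather than assuming full rank.
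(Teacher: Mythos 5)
Your proposal is correct and follows essentially the same route as the paper's proof: the paper invokes the multivariate delta method directly (which you unpack via the Taylor expansion and Slutsky step), then uses independence for joint convergence, the continuous mapping theorem on the dot product to identify $Z$ as a product of independent Gaussians, and the identical trace computation for the moments. Your added care about the rank-deficient case of $\nabla^T g(\bar Y)\Sigma_2\nabla g(\bar Y)$ is a reasonable refinement of the paper's somewhat informal ``linear combination of 2 Chi-square distributions'' claim, but it does not change the substance of the argument.
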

\begin{proof}
By applying the multivariate delta method, we have
\begin{equation}
\sqrt{h}\big[g(Y_h)-g(\bar{Y})\big] \overset{dist.}{\to} \mathcal{N}(0, \nabla^T g(\bar{Y})\Sigma_2\nabla g(\bar{Y}))
\end{equation}

Given the independence assumption of $X_h$ and $Y_h$, we have the independence of  $X_h$ and $\sqrt{h}\big[g(Y_h)^T-g(\bar{Y})^T\big]$, and therefore $(X_h, \sqrt{h}\big[g(Y_h)^T-g(\bar{Y})^T\big])$ converge in distribution to the Cartesian product of their respective limiting random variables. Using the continuity of the dot-product operation, and applying again the continuous mapping theorem, we have

\begin{equation}
\sqrt{h}\big[g(Y_h)-g(\bar{Y})\big]^T X_h \overset{dist.}{\to}  Z=\mathcal{G}_1^T\mathcal{G}_2
\end{equation}
where $\mathcal{G}_1, \mathcal{G}_2$ are normally distributed multivariate random variables with (mean, covariance) resp. $(0,\nabla^T g(\bar{Y})\Sigma_2\nabla g(\bar{Y}))$  and $(\mu,\Sigma_1)$. 

Note that if the $X_h$ are constant, or converge to a constant value, the limiting distribution $Z$ is a Gaussian distribution. In general however, given the independence of $\mathcal{G}_1$ and $\mathcal{G}_2$, $Z$ as the product of 2 independent Gaussians is a linear combination of two Chi-square distributions.

Finally, we have
\begin{align}
     \mathbb{E}[Z]&=0 \\
     \mathbb{V}[Z]&=\mathbb{E}[\mathcal{G}_2^T\mathcal{G}_1\mathcal{G}_1^T\mathcal{G}_2] \\
     &=\mathbb{E}[Tr(\mathcal{G}_2^T\mathcal{G}_1\mathcal{G}_1^T\mathcal{G}_2)]  \\
     &=\mathbb{E}[Tr(\mathcal{G}_1\mathcal{G}_1^T\mathcal{G}_2\mathcal{G}_2^T)]  \\
     &=Tr(\mathbb{E}[\mathcal{G}_1\mathcal{G}_1^T\mathcal{G}_2\mathcal{G}_2^T])  \\
     &=Tr(\mathbb{E}[\mathcal{G}_1\mathcal{G}_1^T]\mathbb{E}[\mathcal{G}_2\mathcal{G}_2^T])  \\
     &=Tr(\nabla^T g(\bar{Y})\Sigma_2\nabla g(\bar{Y})[\Sigma_1+\mu\mu^T])  
\end{align}
which concludes the lemma.

\end{proof}

Let us now prove Proposition \ref{prop:var_disentangle}. For one hidden layer networks, given a width $h$, it is straightforward to see (see \ref{ntk_moments}) that the empirical NTK $\Theta_{h}$ (whereby the weight initialization is a random variable) is the sum of $h$ i.i.d. random variables which mean equals the infinite width NTK $\Theta_{\infty}$, i.e

$\forall (\mathcal{X}, \mathcal{X}'),$
\begin{align}
    \Theta_{h}(\mathcal{X}, \mathcal{X}')&=\frac{1}{h}\sum_i \hat{\Theta}^i(\mathcal{X}, \mathcal{X}') \\
    \hat{\Theta}^i(\mathcal{X}, \mathcal{X}') &\sim_{i.i.d} \hat{\Theta}(\mathcal{X}, \mathcal{X}') \\
    \mathbb{E}[\hat{\Theta}(\mathcal{X}, \mathcal{X}')] &= \Theta_{\infty}(\mathcal{X}, \mathcal{X}')
\end{align}

\begin{proposition}
For one hidden layer networks, 
\begin{equation*}
\sqrt{h}[\mathcal{Q}_{\theta_0}(x, \mathcal{X})-\bar{\mathcal{Q}}(x, \mathcal{X})](\mathcal{Y}-f(\mathcal{X}, \theta_0)) \overset{d}{\to} Z
\end{equation*}
where Z is the linear combination of 2 Chi-Square distributions, and 
\begin{align*}
\mathbb{E}[Z]=&0 \\
\mathbb{V}[Z] =& \mathbb{V}^{c}_1(x) + \mathbb{V}^{i}_1(x) \\
\mathbb{V}^{c}_1(x)=&\mathbb{V}[\hat{\Theta}(x,\mathcal{X})\bar{\Theta}(\mathcal{X},\mathcal{X})^{-1}\mathcal{Y}] +\mathbb{V}[\bar{\mathcal{Q}}(x, \mathcal{X})\hat{\Theta}(\mathcal{X},\mathcal{X})\bar{\Theta}(\mathcal{X},\mathcal{X})^{-1}\mathcal{Y}] \\
&-2\mathbb{C}ov[\bar{\mathcal{Q}}(x, \mathcal{X})\hat{\Theta}(\mathcal{X},\mathcal{X})\bar{\Theta}(\mathcal{X},\mathcal{X})^{-1}\mathcal{Y},\hat{\Theta}(x, \mathcal{X})\bar{\Theta}(\mathcal{X},\mathcal{X})^{-1}\mathcal{Y}], \\
\mathbb{V}^{i}_1(x)  =& \mathbb{V}[\hat{\Theta}(x, \mathcal{X})\bar{\Theta}(\mathcal{X},\mathcal{X})^{-1}f(\mathcal{X}, \theta_0)] +\mathbb{V}[\bar{\mathcal{Q}}(x, \mathcal{X})\hat{\Theta}(\mathcal{X},\mathcal{X})\bar{\Theta}(\mathcal{X},\mathcal{X})^{-1}f(\mathcal{X}, \theta_0)] \\
&-2\mathbb{C}ov[\bar{\mathcal{Q}}(x, \mathcal{X})\hat{\Theta}(\mathcal{X},\mathcal{X})\bar{\Theta}(\mathcal{X},\mathcal{X})^{-1}f(\mathcal{X}, \theta_0),\hat{\Theta}(x, \mathcal{X})\bar{\Theta}(\mathcal{X},\mathcal{X})^{-1}f(\mathcal{X}, \theta_0)].
\end{align*}
\end{proposition}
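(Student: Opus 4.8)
The plan is to recognize the statement as a direct instance of Lemma \ref{lemma:delta}, the variant of the delta method proved above, and then to unfold its abstract variance formula into the two explicit terms $\mathbb{V}^{c}_1$ and $\mathbb{V}^{i}_1$. First I would fix the correspondence: take $X_h := \mathcal{Y}-f(\mathcal{X},\theta_0)$ and let $Y_h$ collect the random NTK blocks $\Theta_{\theta_0}(x,\mathcal{X})$ and $\Theta_{\theta_0}(\mathcal{X},\mathcal{X})$. For a one hidden layer network the decomposition recalled above exhibits $\Theta_{\theta_0}$ as an empirical average $\frac{1}{h}\sum_i \hat{\Theta}^i$ of i.i.d. terms with mean $\bar{\Theta}=\Theta_{\infty}$, so the multivariate central limit theorem gives $\sqrt{h}(Y_h-\bar{Y})\overset{dist.}{\to}\mathcal{N}(0,\Sigma_2)$ with $\Sigma_2=\mathbb{C}ov(\hat{\Theta})$. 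Simultaneously the NNGP limit gives $f(\mathcal{X},\theta_0)\overset{dist.}{\to}\mathcal{N}(0,\mathcal{K}(\mathcal{X},\mathcal{X}))$, hence $X_h\overset{dist.}{\to}\mathcal{N}(\mu,\Sigma_1)$ with $\mu=\mathcal{Y}$ and $\Sigma_1=\mathcal{K}(\mathcal{X},\mathcal{X})$. The decorrelation assumption stated just after the Proposition is exactly what furnishes the independence of $X_h$ and $Y_h$ required by the Lemma.

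Next I would set $g(Y)=\Theta(x,\mathcal{X})\Theta(\mathcal{X},\mathcal{X})^{-1}$, viewed as a map of the NTK blocks, so that $g(Y_h)=\mathcal{Q}_{\theta_0}(x,\mathcal{X})$ and $g(\bar{Y})=\bar{\mathcal{Q}}(x,\mathcal{X})$. Since $\bar{\Theta}(\mathcal{X},\mathcal{X})$ is invertible, matrix inversion is smooth in a neighborhood of $\bar{Y}$ and $g$ has continuous partial derivatives there, so Lemma \ref{lemma:delta} applies verbatim and yields $\sqrt{h}[\mathcal{Q}_{\theta_0}(x,\mathcal{X})-\bar{\mathcal{Q}}(x,\mathcal{X})](\mathcal{Y}-f(\mathcal{X},\theta_0))\overset{dist.}{\to}Z=\mathcal{G}_1^T\mathcal{G}_2$, a product of two independent Gaussians, hence a linear combination of two Chi-square distributions with $\mathbb{E}[Z]=0$.

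It then remains to turn $\mathbb{V}[Z]=Tr(\nabla^T g(\bar{Y})\Sigma_2\nabla g(\bar{Y})[\Sigma_1+\mu\mu^T])$ into the claimed decomposition. For this I compute the directional derivative of $g$ via the matrix-inverse derivative identity: a perturbation $\Delta\Theta$ of the blocks maps to the linear expression $v(\Delta\Theta):=\Delta\Theta(x,\mathcal{X})\bar{\Theta}(\mathcal{X},\mathcal{X})^{-1}-\bar{\mathcal{Q}}(x,\mathcal{X})\Delta\Theta(\mathcal{X},\mathcal{X})\bar{\Theta}(\mathcal{X},\mathcal{X})^{-1}$, which is precisely $\mathcal{G}_1$ in the limit. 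Because $v$ is linear and $\Theta_{\theta_0}$ is an i.i.d. average, the $h$-rescaled covariance $h\,\mathbb{C}ov(v(\Theta_{\theta_0}-\bar{\Theta}))$ collapses to $\mathbb{C}ov(v(\hat{\Theta}))$, the covariance of a single NTK term. Splitting $\Sigma_1+\mu\mu^T=\mathcal{K}(\mathcal{X},\mathcal{X})+\mathcal{Y}\mathcal{Y}^T$, the $\mathcal{Y}\mathcal{Y}^T$ contribution equals $\mathbb{V}[v(\hat{\Theta})^T\mathcal{Y}]$, and expanding the variance of the difference $\hat{\Theta}(x,\mathcal{X})\bar{\Theta}^{-1}\mathcal{Y}-\bar{\mathcal{Q}}\hat{\Theta}(\mathcal{X},\mathcal{X})\bar{\Theta}^{-1}\mathcal{Y}$ reproduces $\mathbb{V}^{c}_1$. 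The $\mathcal{K}(\mathcal{X},\mathcal{X})$ contribution equals $\mathbb{E}_f[f^T\mathbb{C}ov(v(\hat{\Theta}))f]=\mathbb{V}[v(\hat{\Theta})^T f(\mathcal{X},\theta_0)]$ under independence, where the cross term drops because $\mathbb{E}[v(\hat{\Theta})]=v(\bar{\Theta})=0$, and its expansion gives $\mathbb{V}^{i}_1$.

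The routine CLT and smoothness checks are standard; the two delicate points I expect to be the main obstacle are establishing the joint and independent convergence of the NTK noise and the functional noise, which leans essentially on the decorrelation assumption together with the separate limit theorems, and carrying out the trace bookkeeping so that the split of $Tr(\nabla^T g(\bar{Y})\Sigma_2\nabla g(\bar{Y})[\Sigma_1+\mu\mu^T])$ lines up exactly with the stated variance and covariance terms. In particular, getting the i.i.d.-rescaling step right, so that $h$-scaled covariances of averages reduce to single-term covariances of $\hat{\Theta}$, is what makes the abstract delta-method variance coincide with the explicit $\mathbb{V}^{c}_1$ and $\mathbb{V}^{i}_1$.
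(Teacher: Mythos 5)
Your proposal is correct and follows essentially the same route as the paper's own proof: both apply the delta-method Lemma \ref{lemma:delta} with $X_h=\mathcal{Y}-f(\mathcal{X},\theta_0)$ (so $\mu=\mathcal{Y}$, $\Sigma_1=\mathcal{K}(\mathcal{X},\mathcal{X})$) and with $g$ the map $\Theta\mapsto\Theta(x,\mathcal{X})\Theta(\mathcal{X},\mathcal{X})^{-1}$, invoke the i.i.d.\ decomposition of the one-hidden-layer NTK plus the CLT for $\sqrt{h}(\Theta_h-\Theta_\infty)$, use the assumed independence of the kernel and functional noise, and then unfold the trace formula via the matrix-inverse derivative $\Delta\Theta\mapsto\Delta\Theta(x,\mathcal{X})\bar{\Theta}^{-1}-\bar{\mathcal{Q}}\Delta\Theta(\mathcal{X},\mathcal{X})\bar{\Theta}^{-1}$ into $\mathbb{V}^{c}_1+\mathbb{V}^{i}_1$. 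The only cosmetic difference is that the paper writes $g(W,v)=v^TW^{-1}$ with separate block gradients $\nabla_v g_k$ and $\nabla_W g_k$, whereas you compute the directional derivative of the blocks jointly; the resulting bookkeeping is identical.
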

\begin{proof}

 Following the Central Limit Theorem, we have the following convergence in distribution:

\begin{equation}
    \sqrt{h}\big[\Theta_{h}(\mathcal{X}, \mathcal{X}')-\Theta_{\infty}(\mathcal{X}, \mathcal{X}')\big] \overset{dist.}{\to} \mathcal{N}(0,\Sigma)
\end{equation}

where $\Sigma$ is the covariance matrix between the entries of $\hat{\Theta}(\mathcal{X}, \mathcal{X}')$.

Let the function 
\begin{equation}
g(W,v)=v^TW^{-1}
\end{equation}
for any invertible block matrix $W$, and vector $v$.

Note that $g(\Theta_h(\mathcal{X}, \mathcal{X}),\Theta_h(\mathcal{X},x))(\mathcal{Y}-f(\mathcal{X}, \theta_0))$ is the prediction of a linearly trained neural network evaluated on $x$ trained on $\mathcal{X}$, given a functional initialization $f(., \theta_0)$ and NTK $\Theta_h$. We wish to estimate the asymptotic behavior of the expectation and variance of this quantity in the limit of $h\to\infty$. However, these moments are not always defined because the support of $\Theta_h(\mathcal{X}, \mathcal{X})$ contains non invertible instances of the gram schmidt matrix (e.g. all weights initialized at 0), which induces divergent moments. However, because of the convergence in probability of $\Theta_h$ to $\Theta_\infty$ (which is invertible by assumption), the event of such singularities becomes rarer as $h$ increases, and the delta method allows us to get the asymptotic expectation and variance.

Using the fact that $g$ has continuous first partial derivatives, and the independence of $f(\mathcal{X}, \theta_0)$ and $\Theta_h$, following Lemma \ref{lemma:delta},

\begin{equation}
    \sqrt{h}\big[g(\Theta_h(\mathcal{X}, \mathcal{X}),\Theta_h(\mathcal{X}, x))f(\mathcal{X}, \theta_0)-g(\Theta_\infty(\mathcal{X}, \mathcal{X}),\Theta_\infty(\mathcal{X}, x))f(\mathcal{X}, \theta_0)\big] \overset{dist.}{\to} Z
\end{equation}
with Z being the linear combination of 2 Chi-Square distributions, and
\begin{align}
\mathbb{E}[Z]&=0 \\
\label{foobar}
\mathbb{V}[Z]&= Tr\Big(\Sigma\mathcal{K}(\mathcal{X},\mathcal{X})\Big) + Tr\Big(\Sigma\mathcal{Y}\mathcal{Y}^T\Big) \\
\end{align}
where, by vectorizing matrices and using $g_k$ the $k$-th entry of the value of $g$, 
\begin{align*}
    \Sigma_{i,j}&=\nabla_W^T g_i(\Theta_\infty(\mathcal{X}, \mathcal{X}),\Theta_\infty( \mathcal{X},x)) \mathbb{C}ov[vect(\hat{\Theta}(\mathcal{X}, \mathcal{X}))] \nabla_W g_j(\Theta_\infty(\mathcal{X}, \mathcal{X}),\Theta_\infty(\mathcal{X}, x)) \\
    & +  \nabla_v^T g_i(\Theta_\infty(\mathcal{X}, \mathcal{X}),\Theta_\infty(\mathcal{X}, x))\mathbb{C}ov[\hat{\Theta}(\mathcal{X}, x)] \nabla_v g_j(\Theta_\infty(\mathcal{X}, \mathcal{X}),\Theta_\infty(\mathcal{X}, x)) \\
    &+ \nabla_W^T g_i(\Theta_\infty(\mathcal{X}, \mathcal{X}),\Theta_\infty( \mathcal{X}, x))\mathbb{C}ov[vect(\hat{\Theta}(\mathcal{X}, \mathcal{X})),\hat{\Theta}(\mathcal{X}, x)] \nabla_v g_j(\Theta_\infty(\mathcal{X}, \mathcal{X}),\Theta_\infty(\mathcal{X}, x)) \\
    &+ \nabla_v g_i(\Theta_\infty(\mathcal{X}, \mathcal{X}),\Theta_\infty(\mathcal{X}, x)) \mathbb{C}ov[vect(\hat{\Theta}(\mathcal{X}, \mathcal{X})),\hat{\Theta}(\mathcal{X}, x)] \nabla_W^T g_j(\Theta_\infty(\mathcal{X}, \mathcal{X}),\Theta_\infty( \mathcal{X}, x))
\end{align*}

Using $\nabla_v g_k(W,v)=W^{-1}u_k$ and $\nabla_W g_k=vect(-W^{-T}vu_k^TW^{-T})$ , where $u_k$ is the vector 0 everywhere except for the $k$-th position which is 1, the expression can be rewritten as

\begin{align*}
\Sigma  = & \mathbb{C}ov[\Theta_\infty(\mathcal{X},\mathcal{X})^{-1}\hat{\Theta}(\mathcal{X},\mathcal{X})^T\Theta_\infty(\mathcal{X},\mathcal{X})^{-1}\Theta_\infty(\mathcal{X},x)]\\
&+\mathbb{C}ov[\Theta_\infty(\mathcal{X},\mathcal{X})^{-1}\hat{\Theta}(\mathcal{X}, x)] \\
&-2\mathbb{C}ov[\Theta_\infty(\mathcal{X},\mathcal{X})^{-1}\hat{\Theta}(\mathcal{X},\mathcal{X})^T\Theta_\infty(\mathcal{X},\mathcal{X})^{-1}\Theta_\infty(\mathcal{X},x),\Theta_\infty(\mathcal{X},\mathcal{X})^{-1}\hat{\Theta}(\mathcal{X},x)] 
\end{align*}

Finally, we can notice that the following expression equals that of \ref{foobar}
\begin{align*}
&\mathbb{V}[\hat{\Theta}(\mathcal{X}, x)^T\Theta_\infty(\mathcal{X},\mathcal{X})^{-1}f(\mathcal{X}, \theta_0)] \\
&+  \mathbb{C}ov[\Theta_\infty(\mathcal{X},x)^T\Theta_\infty(\mathcal{X},\mathcal{X})^{-1}\hat{\Theta}(\mathcal{X},\mathcal{X})^T\Theta_\infty(\mathcal{X},\mathcal{X})^{-1}f(\mathcal{X}, \theta_0)]\\
&-2\Theta_\infty(\mathcal{X},x)^T\Theta_\infty(\mathcal{X},\mathcal{X})^{-1}\mathbb{C}ov[\hat{\Theta}(\mathcal{X},\mathcal{X})^T\Theta_\infty(\mathcal{X},\mathcal{X})^{-1}f(\mathcal{X}, \theta_0),\hat{\Theta}(\mathcal{X},x)^T\Theta_\infty(\mathcal{X},\mathcal{X})^{-1}f(\mathcal{X}, \theta_0)] \\
&+ \mathbb{V}[\hat{\Theta}(\mathcal{X}, x)^T\Theta_\infty(\mathcal{X},\mathcal{X})^{-1}\mathcal{Y}] \\
&+ \mathbb{C}ov[\Theta_\infty(\mathcal{X},x)^T\Theta_\infty(\mathcal{X},\mathcal{X})^{-1} \hat{\Theta}(\mathcal{X},\mathcal{X})^T\Theta_\infty(\mathcal{X},\mathcal{X})^{-1}\mathcal{Y}] \\
&-2\Theta_\infty(\mathcal{X},x)^T\Theta_\infty(\mathcal{X},\mathcal{X})^{-1}\mathbb{C}ov[\hat{\Theta}(\mathcal{X},\mathcal{X})^T\Theta_\infty(\mathcal{X},\mathcal{X})^{-1}\mathcal{Y},\hat{\Theta}(\mathcal{X},x)^T\Theta_\infty(\mathcal{X},\mathcal{X})^{-1}\mathcal{Y}]
\end{align*}

which concludes the proof by using $\mathbb{E}[\hat{\Theta}] = \Theta_\infty$.
\end{proof}

\subsubsection{Approximation in the general case}
\label{SM:informal_delta}
In the general case, we can no longer apply the central limit theorem to asymptotically describe the NTK as a gaussian. 
Nonetheless, the delta method is often used in a form that is essentially identical to that above, but without the asymptotically normal assumption, so long as the fluctuation of the variable around the mean vanishes, i.e.  $\|\Theta_{\theta_0}(\{\mathcal{X},x\},\{\mathcal{X},x\})-\bar{\Theta}(\{\mathcal{X},x\},\{\mathcal{X},x\})\|_F=o_p(1)$.

Using the identity $M^{-1}=\bar{M}^{-1}+\bar{M}^{-1}(\bar{M}-M)\bar{M}^{-1}+\bar{M}^{-1}\big[(\bar{M}-M)\bar{M}^{-1}\big]^2\bar{M}M^{-1}$ for any pair of invertible matrices $M, \bar{M}$, we can rewrite $f^{lin}$ as

\begin{align*}
    f^{\text{lin}}(x) = &f(x) + \Theta_{\theta_0}(x)\Theta_{\theta_0}^{-1}(\mathcal{Y}-f)\\
    = &f(x) + \Theta_{\theta_0}(x)[\bar{\Theta}^{-1}+\bar{\Theta}^{-1}(\bar{\Theta}-\Theta_{\theta_0})\bar{\Theta}^{-1}+\bar{\Theta}^{-1}\big[(\bar{\Theta}-\Theta_{\theta_0})\bar{\Theta}^{-1}\big]^2\bar{\Theta}\Theta_{\theta_0}^{-1}](\mathcal{Y}-f)\\
    = &f(x) + \bar{\Theta}(x)[\bar{\Theta}^{-1}+\bar{\Theta}^{-1}(\bar{\Theta}-\Theta_{\theta_0})\bar{\Theta}^{-1}+\bar{\Theta}^{-1}\big[(\bar{\Theta}-\Theta_{\theta_0})\bar{\Theta}^{-1}\big]^2\bar{\Theta}\Theta_{\theta_0}^{-1}](\mathcal{Y}-f)\\
    &+[\Theta_{\theta_0}(x)-\bar{\Theta}(x)][\bar{\Theta}^{-1}+\bar{\Theta}^{-1}(\bar{\Theta}-\Theta_{\theta_0})\bar{\Theta}^{-1}+\bar{\Theta}^{-1}\big[(\bar{\Theta}-\Theta_{\theta_0})\bar{\Theta}^{-1}\big]^2\bar{\Theta}\Theta_{\theta_0}^{-1}](\mathcal{Y}-f)\\
    = &f(x) + \bar{\Theta}(x)\bar{\Theta}^{-1}(\mathcal{Y}-f) \\
    &+ \bar{\Theta}(x)\bar{\Theta}^{-1}(\bar{\Theta}-\Theta_{\theta_0})\bar{\Theta}^{-1}(\mathcal{Y}-f)+(\Theta_{\theta_0}(x)-\bar{\Theta}(x))\bar{\Theta}^{-1}(\mathcal{Y}-f) \\
    &+ \big[\Theta_{\theta_0}(x)-\bar{\Theta}(x)\bar{\Theta}^{-1}\Theta_{\theta_0}\big]\bar{\Theta}^{-1}(\bar{\Theta}-\Theta_{\theta_0})\bar{\Theta}^{-1}(\mathcal{Y}-f) \\
    &+ o_p(\|\Theta_{\theta_0}(\{\mathcal{X},x\},\{\mathcal{X},x\})-\bar{\Theta}(\{\mathcal{X},x\},\{\mathcal{X},x\})\|_F^2)\\
\end{align*}

where we note $\Theta_{\theta_0}=\Theta_{\theta_0}(\mathcal{X}, \mathcal{X}), \Theta_{\theta_0}(x)=\Theta_{\theta_0}(\mathcal{X}, x)$ (resp. for $\bar{\Theta}, f$) for ease of notation.

For sufficiently large width, with high probability the remainder term will be negligible. Keeping the empirical mean and variance in mind, we can now take the expectation and variance ignoring the rare singularities. 

\begin{align*}
    \mathbb{E}[f^{\text{lin}}(x)]
     \approx&\bar{\Theta}(x)\bar{\Theta}^{-1}\mathcal{Y}+ \mathbb{E}\big[[\Theta_{\theta_0}(x)-\bar{\Theta}(x)\bar{\Theta}^{-1}\Theta_{\theta_0}][\bar{\Theta}^{-1}(\bar{\Theta}-\Theta_{\theta_0})]\big]\bar{\Theta}^{-1}\mathcal{Y} \\
\end{align*}
\begin{align*}
\mathbb{V}[f^{\text{lin}}(x)]\approx&\bar{\mathcal{K}}(x,x)+\bar{\Theta}(x)\bar{\Theta}^{-1}\bar{\mathcal{K}}(\mathcal{X},\mathcal{X})\bar{\Theta}^{-1}\bar{\Theta}(x)^T -2\bar{\Theta}(x)\bar{\Theta}^{-1}\bar{\mathcal{K}}(\mathcal{X},x) \\ 
&+\mathbb{V}[(\Theta_{\theta_0}(x)-\bar{\Theta}(x))\bar{\Theta}^{-1}(\mathcal{Y}-f)] \\
&+\mathbb{V}[\bar{\Theta}(x)\bar{\Theta}^{-1}(\bar{\Theta}-\Theta_{\theta_0})\bar{\Theta}^{-1}(\mathcal{Y}-f)]\\
&+2\mathbb{C}ov[\bar{\Theta}(x)\bar{\Theta}^{-1}(\bar{\Theta}-\Theta_{\theta_0})\bar{\Theta}^{-1}(\mathcal{Y}-f),(\Theta_{\theta_0}(x)-\bar{\Theta}(x))\bar{\Theta}^{-1}(\mathcal{Y}-f)] \\
&-2\mathbb{C}ov[f(x)-\bar{\Theta}(x)\bar{\Theta}^{-1}f,\big[\Theta_{\theta_0}(x)-\bar{\Theta}(x)\bar{\Theta}^{-1}\Theta_{\theta_0}\big]\bar{\Theta}^{-1}(\bar{\Theta}-\Theta_{\theta_0})\bar{\Theta}^{-1}f] \\
=&\bar{\mathcal{K}}(x,x)+\bar{\mathcal{Q}}(x, \mathcal{X})\bar{\mathcal{K}}(\mathcal{X},\mathcal{X})\bar{\mathcal{Q}}(x, \mathcal{X})^T -2\bar{\mathcal{Q}}(x, \mathcal{X})\bar{\mathcal{K}}(\mathcal{X},x) \\ 
&+\mathbb{V}[\Theta_{\theta_0}(x)\bar{\Theta}^{-1}(\mathcal{Y}-f)] \\
&+\mathbb{V}[\bar{\mathcal{Q}}(x, \mathcal{X})\Theta_{\theta_0}\bar{\Theta}^{-1}(\mathcal{Y}-f)]\\
&-2\mathbb{C}ov[\bar{\mathcal{Q}}(x, \mathcal{X})\Theta_{\theta_0}\bar{\Theta}^{-1}(\mathcal{Y}-f),\Theta_{\theta_0}(x)\bar{\Theta}^{-1}(\mathcal{Y}-f)] \\
&-2\mathbb{E}\big[[\Theta_{\theta_0}(x)-\bar{\mathcal{Q}}(x, \mathcal{X})\Theta_{\theta_0}][\bar{\Theta}^{-1}(\bar{\Theta}-\Theta_{\theta_0})\bar{\Theta}^{-1}]\big] [\bar{\mathcal{K}}(\mathcal{X},x) -\bar{\mathcal{K}}(\mathcal{X},\mathcal{X})\bar{\mathcal{Q}}(x, \mathcal{X})^T] \\
=& \mathbb{V}^{a}(x) +  \mathbb{V}^{c}(x) + \mathbb{V}^{i}(x) +  \mathbb{V}^{cor}(x)
\end{align*}

Assuming the fluctuation of $\Theta_{\theta_0}$ around its mean is  in the order of $\mathcal{O}(h^{-\frac{1}{4}})$ (see \ref{SM:fluctuation_init}), we have $\mathbb{V}^{c}(x), \mathbb{V}^{i}(x),  \mathbb{V}^{cor}(x)$ all of order $\mathcal{O}(\frac{1}{h})$. While the variance of the true residual might not be finite for the same reason as why $\mathbb{V}[f(x)]$ is not, expanding the approximation to one order higher yields $\mathbb{V}^{res}(x) \approx \mathcal{O}(\frac{1}{h^2})$.

\subsubsection{Fluctuation of the NTK initialization}
\label{SM:fluctuation_init}
For one hidden layer networks, we can bound the fluctuation of the NTK at initialization using the Central Limit Theorem, which yields 
$\mathbb{V} [\Theta_{\theta_0}] = \mathcal{O}(\frac{1}{\sqrt{h}})$.

While we do not provide a proof, a similar heuristic argument presented in Appendix C. of \citep{Geiger_2020} can be used to argue for the same bound in the general case of arbitrary depth networks.

\subsection{Other remarks}

\subsubsection{Interpretation of variance terms}
\label{SM:custom_init}

Given a centered functional initialization $g$ and an NTK $\bar{\Theta}$, a fully trained neural network has the functional expression
\begin{equation}
f(x) = g(x) + \bar{\Theta}(x, \mathcal{X})\bar{\Theta}(\mathcal{X}, \mathcal{X})^{-1}(\mathcal{Y}-g(\mathcal{X})).
\end{equation}
The predictive variance is then 
\begin{align}
\mathbb{V}[f(x)] =& \mathbb{E}[(g(x) - \bar{\Theta}(x, \mathcal{X})\bar{\Theta}(\mathcal{X}, \mathcal{X})^{-1}g(\mathcal{X}))^2] \\
=&\mathbb{E}[g(x)^2 + \bar{\Theta}(x, \mathcal{X})\bar{\Theta}(\mathcal{X}, \mathcal{X})^{-1}g(\mathcal{X})g(\mathcal{X})^T\bar{\Theta}(\mathcal{X}, \mathcal{X})^{-1}\bar{\Theta}(x, \mathcal{X})^T \\
&- 2\bar{\Theta}(x, \mathcal{X})\bar{\Theta}(\mathcal{X}, \mathcal{X})^{-1}g(\mathcal{X})g(x)] \\
=&\mathbb{V}[g(x)] + \bar{\Theta}(x, \mathcal{X})\bar{\Theta}(\mathcal{X}, \mathcal{X})^{-1}\mathbb{C}ov[g(\mathcal{X})]\bar{\Theta}(\mathcal{X}, \mathcal{X})^{-1}\bar{\Theta}(x, \mathcal{X})^T \\
&- 2\bar{\Theta}(x, \mathcal{X})\bar{\Theta}(\mathcal{X}, \mathcal{X})^{-1}\mathbb{C}ov[g(\mathcal{X}),g(x)] \\
\end{align}

For any $\Theta_{\theta_0}$, and $f$ centered and decorrelated, by defining $g(x)=\Theta_{\theta_0}(x,\mathcal{X})\bar{\Theta}(\mathcal{X},\mathcal{X})^{-1}f(\mathcal{X},\theta_0)$, we get

\begin{align}
\mathbb{V}[f(x)] & = \mathbb{V}[\Theta_{\theta_0}(x, \mathcal{X})\bar{\Theta}(\mathcal{X},\mathcal{X})^{-1}f(\mathcal{X}, \theta_0)] +\mathbb{V}[\bar{\mathcal{Q}}(x, \mathcal{X})\Theta_{\theta_0}(\mathcal{X},\mathcal{X})\bar{\Theta}(\mathcal{X},\mathcal{X})^{-1}f(\mathcal{X}, \theta_0)] \\
&-2\mathbb{C}ov[\bar{\mathcal{Q}}(x, \mathcal{X})\Theta_{\theta_0}(\mathcal{X},\mathcal{X})\bar{\Theta}(\mathcal{X},\mathcal{X})^{-1}f(\mathcal{X}, \theta_0),\Theta_{\theta_0}(x, \mathcal{X})\bar{\Theta}(\mathcal{X},\mathcal{X})^{-1}f(\mathcal{X}, \theta_0)].
\end{align}
 which is identical to $\mathbb{V}^i(x)$ from Proposition 
\ref{prop:var_disentangle}.

\subsubsection{Noise correlation}
\label{SM:noise_correlation}

In general, the noise in the NTK and in the functional initialization are related, as they come from the same weight initialization $\theta_0$. Therefore, the analytical expression in Proposition \ref{prop:var_f_lin} would have another covariance term. This covariance term disappears as we consider the models $f^{\text{gd-c}}$ and $f^{\text{gd-a}}$ defined in Section \ref{gd_models}, which still manage to describe the predictive variance of the full model $f$ as the interpolation of the two. 

We can nevertheless construct a model such that the 2 noises are decorrelated for validating the taylor expansion from Prop \ref{prop:var_disentangle}, by sampling 2 initialization independently, $\theta_0^f$ and $\theta_0^k$, and defining the model as such:

\begin{equation}
  f^{\text{lin-d}}(x)= f(x,\theta_0^f) + \mathcal{Q}_{\theta_0^k}(x,\mathcal{X})(\mathcal{Y}-f(\mathcal{X},\theta_0^f)) 
\end{equation}

We use such $f^{\text{lin-d}}$ to compute the predictive variance in Fig. \ref{fig:scaling} and  \ref{fig:app_scaling}.

\subsection{Moments of the Neural Tangent Kernel}
\label{ntk_moments}

We consider the MLP defined in Section 2. We will consider the case where the output dimension $h_L$ is 1 for ease of notation, but the derivations can be trivially extended into the multiple dimension case. The NTK is defined as:

$\forall x,x' \in \mathbb{R}^d, $
\begin{equation}
\Theta(x,x') := \nabla_\theta f(x) \nabla_\theta f(x')^T = \sum_{l=1}^{L} \sum_{i=1}^{h_l} \big[\nabla_{w_{l,i}} f(x) \nabla_{w_{l,i}} f(x')^T + \frac{\partial f}{\partial b_{l,i}}(x) \frac{\partial f}{\partial b_{l,i}}(x')\big]
\end{equation} 

where $w_{l,i}$ is the $i$-th column of $W_l$, and $b_{l,i}$ the $i$-th element of $b_l$.

Denoting by ${z_{l,i}}$ the $i$-th pre-activation in layer $l$ for the input $x$, we have
\begin{equation*}
\nabla_{w_{l,i}} f =\frac{\partial f}{\partial z_{l,i}}  \nabla_{w_{l,i}} z_{l,i} = \frac{\partial f}{\partial b_{l,i}} \frac{\sigma_w}{\sqrt{h_{l-1}}}x_{l-1}^T
\end{equation*}

And thus

\begin{equation}
\begin{split}
\Theta(x,x') & = \sum_{l=1}^{L} \sum_{i=1}^{h_l} (\frac{\sigma_w^2}{h_{l-1}}x_{l-1}^Tx'_{l-1} +1)\frac{\partial f}{\partial b_{l,i}}(x) \frac{\partial f}{\partial b_{l,i}}(x') \\
& = 1+\frac{\sigma_w^2}{h_{L-1}}x_{L-1}^Tx'_{L-1} + \sum_{l=1}^{L-1} (\frac{\sigma_w^2}{h_{l-1}}x_{l-1}^Tx'_{l-1} +1) \sum_{i=1}^{h_l} \phi'(z_{l,i})\phi'(z_{l,i}') \frac{\partial f}{\partial x_{l,i}}(x) \frac{\partial f}{\partial x_{l,i}}(x')
\end{split}
\end{equation}

where the $x'_{l}$, $z'_l$ are the counterparts of $x_l$ and $z_l$ evaluated at $x'$.

For the single hidden layer case (e.g. $L=2$), the above expression can be simplified into

\begin{equation}
\Theta(x,x') = 1 + \frac{\sigma_w^2}{h}\sum_{i=1}^{h} \big[\phi(z_i)\phi(z'_i) + (1 + \frac{\sigma_w^2}{d}x^Tx') w_{2,0,i}^2 \phi'(z_i)\phi'(z'_i) \big]
\end{equation}
where $h=h_1, d=h_0$, $w_{2,0,i}$ the $i$-th element of $w_{2,0}$, and $z_i=z_{1,i}$.

\subsubsection{NTK first moment for 1 hidden layer MLP}

For a single hidden MLP, we have
\begin{align}
\label{eq:ntk_mean}
\mathbb{E} [\Theta(x,x')] =1 + \mathbb{E} \big[\phi(z)\phi(z')\big]  + (1 + \frac{\sigma_w^2}{d}x^Tx') \mathbb{E}\big[ \phi'(z)\phi'(z') \big] 
\end{align}

which is identical to the infinite width deterministic NTK \cite{jacot2018neural}.
 
\subsubsection{NTK second moment for 1 hidden layer MLP}

Here, we assume the network to be a 1 hidden layer MLP. We then have 

\begin{equation}
\Theta(x,x') = 1 + \frac{\sigma_w^2}{h}\sum_{i=1}^{h} \big[\phi(z_i)\phi(z'_i) + (1 + \frac{\sigma_w^2}{d}x^Tx') w_{2,0,i}^2 \phi'(z_i)\phi'(z'_i) \big]
\end{equation}

\newcommand{\indep}{\perp \!\!\! \perp}

and thus, using $\mathbb{C}ov(z_i, z_j)=0$, $\mathbb{C}ov(w_{2,0,i}, w_{2,0,j})=0$ for any $i \neq j$
  
$\forall x,x',x'',x''' \in \mathbb{R}^d, $
\begin{equation}
\begin{split}
&\mathbb{C}ov [\Theta(x,x'), \Theta(x'',x''')]\\
=& \frac{\sigma_w^4}{h^2}\sum_{i=1}^{h}\mathbb{C}ov \big[\phi(z_i)\phi(z'_i) + (1 + \frac{\sigma_w^2}{d}x^Tx') w_{2,0,i}^2 \phi'(z_i)\phi'(z'_i),  \phi(z''_i)\phi(z'''_i) \\ & + (1 + \frac{\sigma_w^2}{d}x''^Tx''') w_{2,0,i}^2 \phi'(z''_i)\phi'(z'''_i)\big]
\end{split}
\end{equation}

Since the $z_i$ and $w_{2,0,i}$ are further identically distributed, by denoting by $z$, $w$ the random variables respectively drawn from the same distributions,

\begin{equation}
\begin{split}
&\mathbb{C}ov [\Theta(x,x'), \Theta(x'',x''')]\\
=& \frac{\sigma_w^4}{h}\mathbb{C}ov \big[\phi(z)\phi(z') + (1 + \frac{\sigma_w^2}{d}x^Tx') w^2 \phi'(z)\phi'(z'),  \phi(z'')\phi(z''') \\&+ (1 + \frac{\sigma_w^2}{d}x''^Tx''') w^2 \phi'(z'')\phi'(z''')\big] \\ 
=& \frac{\sigma_w^4}{h}\mathbb{C}ov \big[\phi(z)\phi(z'), \phi(z'')\phi(z''') \big] \\
&  + (1 + \frac{\sigma_w^2}{d}x^Tx')(1 + \frac{\sigma_w^2}{d}x''^Tx''') \frac{\sigma_w^4}{h}\mathbb{C}ov \big[w^2 \phi'(z)\phi'(z'),w^2 \phi'(z'')\phi'(z''')\big] \\
&  +(1 + \frac{\sigma_w^2}{d}x''^Tx''')\frac{\sigma_w^4}{h}\mathbb{C}ov \big[\phi(z)\phi(z'), w^2 \phi'(z'')\phi'(z''') \big] \\ & + (1 + \frac{\sigma_w^2}{d}x^Tx')\frac{\sigma_w^4}{h}\mathbb{C}ov \big[\phi(z'')\phi(z'''), w^2 \phi'(z)\phi'(z') \big]  \\ 
=& \frac{\sigma_w^4}{h}\mathbb{C}ov \big[\phi(z)\phi(z'), \phi(z'')\phi(z''') \big] \\
& +  (1 + \frac{\sigma_w^2}{d}x^Tx')(1 + \frac{\sigma_w^2}{d}x''^Tx''') \frac{\sigma_w^4}{h}\big[ 3\mathbb{C}ov [\phi'(z)\phi'(z'),\phi'(z'')\phi'(z''')]  \\  + &2\mathbb{E} [\phi'(z)\phi'(z')]\mathbb{E}[\phi'(z'')\phi'(z''')]\big]  \\ 
& +(1 + \frac{\sigma_w^2}{d}x''^Tx''')\frac{\sigma_w^4}{h}\mathbb{C}ov \big[\phi(z)\phi(z'), \phi'(z'')\phi'(z''') \big]   \\ & + (1 + \frac{\sigma_w^2}{d}x^Tx')\frac{\sigma_w^4}{h}\mathbb{C}ov \big[\phi(z'')\phi(z'''),  \phi'(z)\phi'(z') \big]
\end{split}
\end{equation}

In particular, we have
\begin{equation}
\label{eq:ntk_var}
\begin{split}
\mathbb{V} [\Theta(x,x')] =& \frac{\sigma_w^4}{h}\mathbb{V} \big[\phi(z)\phi(z')\big] \\
&+ (1 + \frac{\sigma_w^2}{d}x^Tx')^2 \frac{\sigma_w^4}{h} \big[3\mathbb{V} [\phi'(z)\phi'(z') ] + 2\mathbb{E} [\phi'(z)\phi'(z')]^2\big] \\
&+ \frac{2\sigma_w^4}{h}(1 + \frac{\sigma_w^2}{d}x^Tx')\mathbb{C}ov \big[\phi(z)\phi(z'), \phi'(z)\phi'(z')\big] \\
\end{split}
\end{equation}
\begin{equation}
\label{eq:ntk_covar}
\begin{split}
\mathbb{C}ov [\Theta(x,x'), \Theta(x,x)] =&\frac{\sigma_w^4}{h}\mathbb{C}ov \big[\phi(z)\phi(z'), \phi(z)^2 \big]  \\
& + (1 + \frac{\sigma_w^2}{d}x^Tx')(1 + \frac{\sigma_w^2}{d}x^Tx) \frac{\sigma_w^4}{h}\big[ 3\mathbb{C}ov [\phi'(z)\phi'(z'),\phi'(z)^2]  \\ & + 2\mathbb{E} [\phi'(z)\phi'(z')]\mathbb{E}[\phi'(z)^2]\big] \\
&  +(1 + \frac{\sigma_w^2}{d}x^Tx')\frac{\sigma_w^4}{h}\mathbb{C}ov \big[\phi(z)^2,\phi'(z)\phi'(z') \big] \\
& + (1 + \frac{\sigma_w^2}{d}x^Tx)\frac{\sigma_w^4}{h}\mathbb{C}ov \big[\phi(z)\phi(z'), \phi'(z)^2 \big] 
\end{split}
\end{equation}

\subsubsection{Special case of ReLU activation}
\label{ana_rect_angle}

We now give the analytical expression of the first and second moments of the NTK for the 1 hidden layer MLP ReLU activation that are required to compute the predictive variance for a single training data setting. For simplicity, we assume the bias to be initialized to 0.

$\forall z \in \mathbb{R},$
\begin{align}
\phi(z)=\mathbbm{1}_{z>0} z \\
\phi'(z)=\mathbbm{1}_{z>0} 
\end{align}

Following the previous notation, given $x$, the hidden activations are i.i.d. random variables $z=\frac{\sigma_w}{\sqrt{d}}w_1^Tx$ where $w_1$ is a univariate standard Gaussian random variable. $\sigma_w$ is typically chosen to be $\sqrt{2}$ for ReLU activations.

We can rewrite a multivariate standard gaussian random variable as $w_1=r.u$ where $r=||w_1||_2$ is a real valued random variable distributed such that its squared value follows the Chi-squared distribution of degree $dim(w_1)=d$ and $u=\frac{w_1}{||w_1||_2}$ is a multivariate random variable uniformly distributed on the unit sphere. The 2 random variables are furthermore independent.

Let $x,x' \in \mathbb{R}^d$. We denote by $\theta=\arccos(\frac{x^Tx'}{||x||||x'||})$ the angle between the vectors. We define $S_{x,x'}=\{u \in \mathbb{R}^d s.t. ||u||_2=1,u^Tx>0, u^Tx'>0\}$.

 We then have
\begin{equation}
\begin{split}
\phi(z)\phi(z') &= \mathbbm{1}_{z>0} z \mathbbm{1}_{z'>0} z' \\
&=\frac{2}{d}\mathbbm{1}_{z>0}\mathbbm{1}_{z'>0} w_{1}^Tx w_{1}^Tx' \\
&=\frac{2}{d}\mathbbm{1}_{u \in S_{x,x'}} r^2 u^Tx u^Tx' \\
&=\frac{2}{d}\mathbbm{1}_{u \in S_{x,x'}} r^2 u_\parallel^Tx u_\parallel^Tx' \\
&=\frac{2||x||||x'||}{d} r^2 ||u_\parallel||^2 \mathbbm{1}_{\beta \in [-\frac{\pi-\theta}{2},\frac{\pi-\theta}{2}]} \cos(\beta+\frac{\theta}{2}) \cos(\beta-\frac{\theta}{2}) \\
\end{split}
\end{equation}

Where $ u_\parallel$ is the component of $u$ which is in the 2-dimensional subspace spanned by $x,x'$ if $\theta \neq 0$, and any 2-dimensional subspace including $x$ otherwise. $\beta=sign(u_{||}^Ty).\arccos(u_{||}^T\frac{v}{||v||})$, with $v=\frac{x}{||x||}+\frac{x'}{||x'||}$ and $y$ a unit vector in the subspace orthogonal to $v$, is its angle in the subspace, uniformly distributed on $[-\pi, \pi]$. $\phi(z)\phi(z')$ is thus the product of 3 independent distribution, a random variable from a Chi-squared distribution of degree d, another one which depends on $\beta$, and finally on $||u_\parallel||$.

Furthermore we have

\begin{equation}
\begin{split}
\phi'(z)\phi'(z') &= \mathbbm{1}_{z>0} \mathbbm{1}_{z'>0}\\
&=\mathbbm{1}_{u \in S_{x,x'}} \\
&=\mathbbm{1}_{\beta \in [-\frac{\pi-\theta}{2},\frac{\pi-\theta}{2}]}
\end{split}
\end{equation}

$\phi'(z)\phi'(z')$ is thus a Bernouilli distribution of probability $p=\frac{\pi-\theta}{2\pi}$.

Let us now compute the various quantities required for the predictive variance:

\begin{equation}
\begin{split}
\mathbb{E} \big[\phi(z)\phi(z')\big] =&2 \frac{||x||||x'||}{d} \mathbb{E} [r^2]\mathbb{E} [||u_\parallel||^2] \mathbb{E} [\mathbbm{1}_{\beta \in [-\frac{\pi-\theta}{2},\frac{\pi-\theta}{2}]} \cos(\beta+\frac{\theta}{2}) \cos(\beta-\frac{\theta}{2})] \\
=&||x||||x'|| C_d \frac{1}{2\pi}[(\pi-\theta)\cos(\theta) + \sin(\theta)] \\
\end{split}
\end{equation}

where we used $\cos(\beta+\frac{\theta}{2}) \cos(\beta-\frac{\theta}{2}) = \frac{1}{2}[\cos(2\beta) + \cos(\theta)]$ and $\mathbb{E}(\chi^2_d)=d$.

\begin{equation}
\begin{split}
\mathbb{V} \big[\phi(z)\phi(z')\big] =& 4\frac{||x||^2||x'||^2}{d^2} \mathbb{E} [r^4]\mathbb{E} [||u_\parallel||^4] \mathbb{E} [\mathbbm{1}_{\beta \in [-\frac{\pi-\theta}{2},\frac{\pi-\theta}{2}]} \cos^2(\beta+\frac{\theta}{2}) \cos^2(\beta-\frac{\theta}{2})] \\
&-4\frac{||x||^2||x'||^2}{d^2} \mathbb{E} [r^2]^2\mathbb{E} [||u_\parallel||^2]^2 \mathbb{E} [\mathbbm{1}_{\beta \in [-\frac{\pi-\theta}{2},\frac{\pi-\theta}{2}]} \cos(\beta+\frac{\theta}{2}) \cos(\beta-\frac{\theta}{2})]^2 \\
=& \frac{||x||^2||x'||^2}{d^2} (2d+d^2)C'_d \frac{1}{4\pi}\big[\frac{3}{2}\sin(2\theta)+(\pi-\theta)(\cos(2\theta)+2)\big] \\
&-||x||^2||x'||^2 C_d^2 \frac{1}{4\pi^2}[(\pi-\theta)\cos(\theta) + \sin(\theta)]^2 \\
\end{split}
\end{equation}

where we used $\mathbb{V}(\chi^2_d)=2d$ and $C_d=\mathbb{E} [||u_\parallel||^2]=\frac{2}{d}$, $C'_d=\mathbb{E} [||u_\parallel||^4]=\frac{8}{2d+d^2}$.

Likewise, 
\begin{align}
\mathbb{E} \big[\phi'(z)\phi'(z')\big] =& \frac{\pi-\theta}{2\pi} \\
\mathbb{V} \big[\phi'(z)\phi'(z')\big] =& \frac{\pi-\theta}{2\pi}(1-\frac{\pi-\theta}{2\pi})
\end{align}

as given by the Bernouilli distribution.

Finally,
\begin{equation}
\begin{split}
\mathbb{C}ov \big[\phi(z)\phi(z'), \phi'(z)\phi'(z') \big] =&\mathbb{E} \big[\phi(z)\phi(z')\big](1- \mathbb{E}[\phi'(z)\phi'(z')]) \\
\end{split}
\end{equation}

and by using $\mathbbm{1}_{u \in S_{x,x'}}\mathbbm{1}_{u \in S_{x,x}}=\mathbbm{1}_{u \in S_{x,x'}}$ as well as $\mathbb{E}[\phi'(z)^2]=\frac{1}{2}$,

\begin{equation}
\begin{split}
\mathbb{C}ov \big[\phi(z)\phi(z'), \phi'(z)^2 \big] =&\mathbb{E} \big[\phi(z)\phi(z')\big](1- \mathbb{E}[\phi'(z)^2]) \\
\end{split}
\end{equation}

\begin{equation}
\begin{split}
\mathbb{C}ov \big[\phi(z)^2, \phi'(z)\phi'(z') \big] =&\mathbb{E} \big[\frac{2}{d} r^2 (u_\parallel^Tx)^2 \mathbbm{1}_{u \in S_{x,x'}} \big] - \mathbb{E} \big[\phi(z)^2\big]\mathbb{E} \big[ \mathbbm{1}_{u \in S_{x,x'}} \big] \\
 =&2\|x\|^2C_d\mathbb{E} \big[\mathbbm{1}_{\beta \in [-\frac{\pi-\theta}{2},\frac{\pi-\theta}{2}]} \cos^2(\beta-\frac{\theta}{2}) \big] - \frac{\|x\|^2C_d}{2}\frac{\pi-\theta}{2\pi} \\
 =&\frac{\|x\|^2C_d}{4\pi}\big[  2(\pi-\theta) +  \sin(2\theta) - (\pi-\theta) \big]\\
 =&\frac{\|x\|^2C_d}{4\pi}\big[  (\pi-\theta) +  \sin(2\theta) \big]\\
\end{split}
\end{equation}

\begin{equation}
\begin{split}
\mathbb{C}ov \big[\phi(z)\phi(z'), \phi(z)^2 \big] =&\mathbb{E} \big[\frac{4}{d^2} r^4 (u_\parallel^T x)^3u_\parallel^T x' \mathbbm{1}_{u \in S_{x,x'}} \big] - \mathbb{E} \big[\phi(z)^2\big]\mathbb{E} \big[\phi(z)\phi(z')\big] \\
 =&4\frac{2d+d^2}{d^2}\|x\|^3\|x'\|C'_d\mathbb{E} \big[\mathbbm{1}_{\beta \in [-\frac{\pi-\theta}{2},\frac{\pi-\theta}{2}]} \cos^3(\beta-\frac{\theta}{2})\cos(\beta+\frac{\theta}{2}) \big] \\
 & -\|x\|^2C_d\|x\|\|x'\| C_d \frac{1}{4\pi}[(\pi-\theta)\cos(\theta) + \sin(\theta)]  \\
 =&\frac{2d+d^2}{16d^2\pi}\|x\|^3\|x'\|C'_d\big[\sin(3\theta)+9\sin(\theta)+12(\pi-\theta)\cos(\theta)\big] \\
 & - \frac{\|x\|^3\|x'\|C_d^2}{4\pi}[(\pi-\theta)\cos(\theta) + \sin(\theta)]\\
\end{split}
\end{equation}

\begin{equation}
\begin{split}
\mathbb{C}ov \big[\phi'(z)\phi'(z'), \phi'(z)^2 \big] &=\mathbb{E} \big[\phi'(z)\phi'(z')\big](1- \mathbb{E}[\phi'(z)^2]) \\
&= \frac{\pi-\theta}{4\pi}
\end{split}
\end{equation}

Putting everything together in eq 
\ref{eq:ntk_mean},\ref{eq:ntk_var} and \ref{eq:ntk_covar}, and using

\begin{equation}
\begin{split}
\bar{\mathcal{K}}(x,x') = \mathbb{E} \big[\phi(z)\phi(z')\big] =||x||||x'|| \frac{1}{d\pi}[(\pi-\theta)\cos(\theta) + \sin(\theta)]
\end{split}
\end{equation}

gives us the analytical expression of the following variance terms: 

\begin{equation}
\begin{split}
\mathbb{V}[f^{lin-a}(x')]=&\bar{\mathcal{K}}(x',x') -2\frac{\mathbb{E}[\Theta(x',x)]}{\mathbb{E}[\Theta(x,x)]} \bar{\mathcal{K}}(x,x') +\frac{\mathbb{E}[\Theta(x',x)]^2}{\mathbb{E}[\Theta(x,x)]^2} \bar{\mathcal{K}}(x,x) \\
\end{split}
\end{equation}

\begin{equation}
\begin{split}
\mathbb{V}[f^{lin-c}(x')]=& \frac{1}{\mathbb{E}[\Theta(x,x)]^2}\mathbb{V}[\Theta(x',x)] + \frac{\mathbb{E}[\Theta(x',x)]^2}{\mathbb{E}[\Theta(x,x)]^4}\mathbb{V}[\Theta(x,x)]\\&-2\frac{\mathbb{E}[\Theta(x',x)]}{\mathbb{E}[\Theta(x,x)]^3}\mathbb{C}ov[\Theta(x,x),\Theta(x,x')]\\
\end{split}
\end{equation}

\begin{equation}
\begin{split}
\mathbb{V}[f^{lin-i}(x')]=& \frac{1}{\mathbb{E}[\Theta(x,x)]^2}\mathbb{V}[\Theta(x',x)f(x)] + \frac{\mathbb{E}[\Theta(x',x)]^2}{\mathbb{E}[\Theta(x,x)]^4}\mathbb{V}[\Theta(x,x)f(x)]\\
&-2\frac{\mathbb{E}[\Theta(x',x)]}{\mathbb{E}[\Theta(x,x)]^3}\mathbb{C}ov[\Theta(x,x)f(x),\Theta(x,x')f(x)]\\
=& \bar{\mathcal{K}}(x,x)\mathbb{V}[f^{lin-c}(x')]\\
\end{split}
\end{equation}

In particular, when $\|x\|,\|x'\| << \sqrt{d}$ , the first-order approximation of $\mathbb{V}[f^{lin-c}(x')]$ becomes a function which only depends on $\theta$, which could be seen on Fig. \ref{fig:toy_problem}. We analytically validate the expression in Fig. \ref{fig:toy_validation}.

\newpage

\section{Appendix: empirical results}
\label{appendix_21}
In this Appendix Section, we provide more data on similar experiments described in Section 2 and 3 of the manuscript. 
Generally, we conducted our experiments on 4 Linux servers with 8 Nvidia RTX 3090 GPUs with 24 GB memory each. The presented experiments are compute-intensive which led to experiments validating our theoretical propositions on rather small networks and datasets. During the development, we conducted many scans over ensemble width and depth as well as datasets over the course of several months. Despite heavily relying on PyTorch, we conducted NTK kernel experiments with the following Github codebase. We thank the authors for providing this excellent resource (\href{https://github.com/google/neural-tangents}{https://github.com/google/neural-tangents}).

Further details about our general setup and training specifications are not described in the text. Missing details may be described in the accompanied code.
\begin{itemize}
    \item We choose a learning rate $\eta=0.1$ and trained all of our models with gradient descent and momentum (0.9) for all (linearized) training experiments. Although the learning rate is relatively high, we saw that the models trained with gradient descent align very well with the kernel models.
    \item For the CNN, we always use filter size of 3 and padding. Every 2nd layer, we use a stride of 2. Before the last layer, we flatten the features and linearly project to the output. We always use the NTK initialization as introduced above.
    \item For the SGD results in Table 1, we used a batchsize of 1000.
    \item Whenever we used kernel models and a small dataset (N=100), we restricted the problem to be a binary classification problem. 
    \item For the WRN 28-10 experiments, we used learning rate $\eta=0.03$ and batchsize 128 the standard network specifications as in \href{https://github.com/hysts/pytorch_wrn}{https://github.com/hysts/pytorch\_wrn}, with momentum. We train the model for 10 epochs for the cross entropy loss, and 30 for the MSE loss.
    \item For all models trained on the MSE loss, we use as target the centered one-hot encoding of the class variable, as in \cite{yu2020enhanced}.
    \item For the AUROC computation, we used the standard method from the \texttt{SciPy} package. 
\end{itemize}

\subsection{Additional  empirical results}
The following results are presented in the Appendix:
\begin{itemize}
    \item Figure \ref{fig:kernel_scaling_time} and Table \ref{table:kernel_scaling_time}: Confirmation of the assumption used in Proposition \ref{prop:noise_survive} for MLP and CNN respectively trained on a subset of MNIST and CIFAR10.
    \item Figure \ref{fig:app_linearization}: $\mathcal{R}(f)$ for CNNs trained on a subset of CIFAR10 in support of Proposition \ref{prop:noise_survive}.
    \item Table \ref{table:all_cifar_auroc}: AUROC for all OOD datasets i.e. SVHN, LSUN, TIN, iSUN, CIFAR100.
    \item Table \ref{table:auroc_mnist_50k}: Predictive variance (on test set), test set accuracy and AUROC for kernel as well as models trained with gradient descent on full MNIST (N=50000). The same trend as for N=1000 is observed i.e. the gradient descent ensembles follow closely the linearly trained ensembles behavior.
    \item Table \ref{table:sgd_vs_gd}: Test set accuracy and AUROC for (stochastic) linearly trained models as well as models trained with (stochastic) gradient descent on a subset of MNIST (N=1000). We observe tiny differences between the stochastic and its non-stochastic counterpart.  
    \item Table \ref{Tab:wrn3}: Test set accuracy and AUROC for WRN 28-10 ensembles of size 8 trained on CIFAR100. We trained the models with the cross entropy (CE) and MSE loss, for respectively 10 and 30 epochs. For the MSE loss, the network output was regressed against the one-hot encoding of the target class, centered to be of 0 mean and rescaled by a factor 10.
    \item Table \ref{Tab:wrn3}: Test set accuracy and AUROC for an AlexNet ensembles of size 8 trained on FashionMNIST, with the cross entropy (CE) loss, for 50 epochs, with momentum.
\end{itemize}

For computing the AUROC values that play a central part in our empirical evaluation we simply collect predictions from in-distribution i.e. the test dataset of the corresponding training dataset as well as predictions from the out-of-distribution datasets which vary across setups, see above. To compute per in- and out-of-distribution pair, we compute the auroc values with the help of the publicly available \texttt{sklearn} package and its \texttt{metrics.roc\_auc\_score} function. We report the average over the pairs. 

 \begin{figure*}
\centering
\begin{minipage}{.45\textwidth}
  \centering
  \begin{center}
    \includegraphics[width=1.\textwidth]{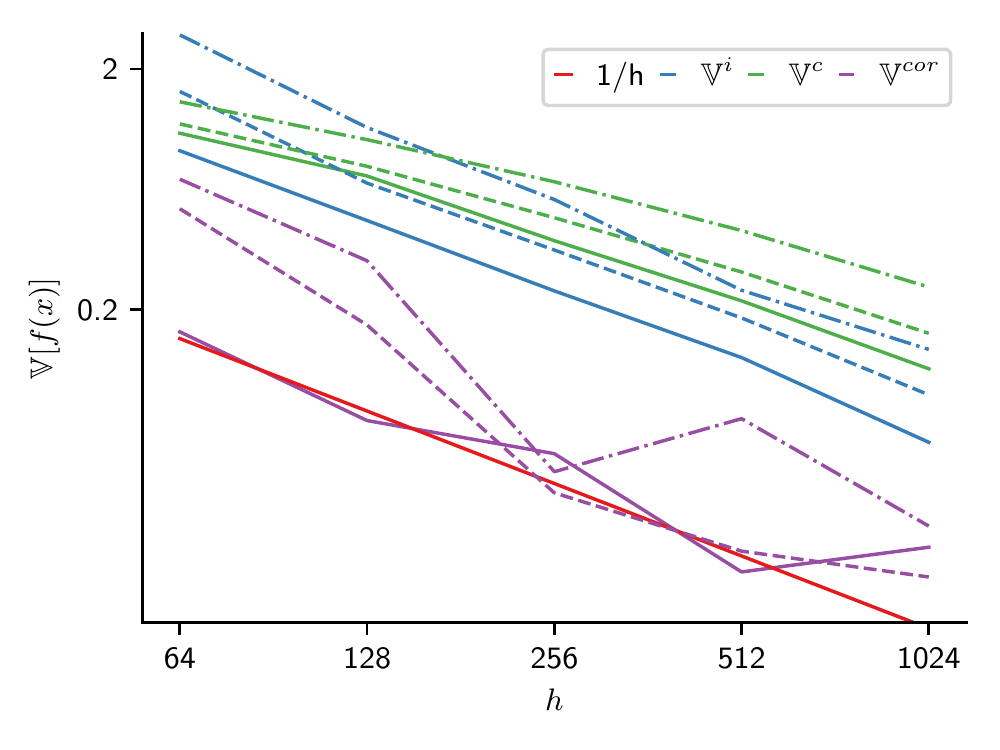}
  \end{center}
  \vspace{-15pt}
\end{minipage}
\hspace{0pt}
\begin{minipage}{.45\textwidth}
  \centering
  \begin{center}
    \includegraphics[width=1.\textwidth]{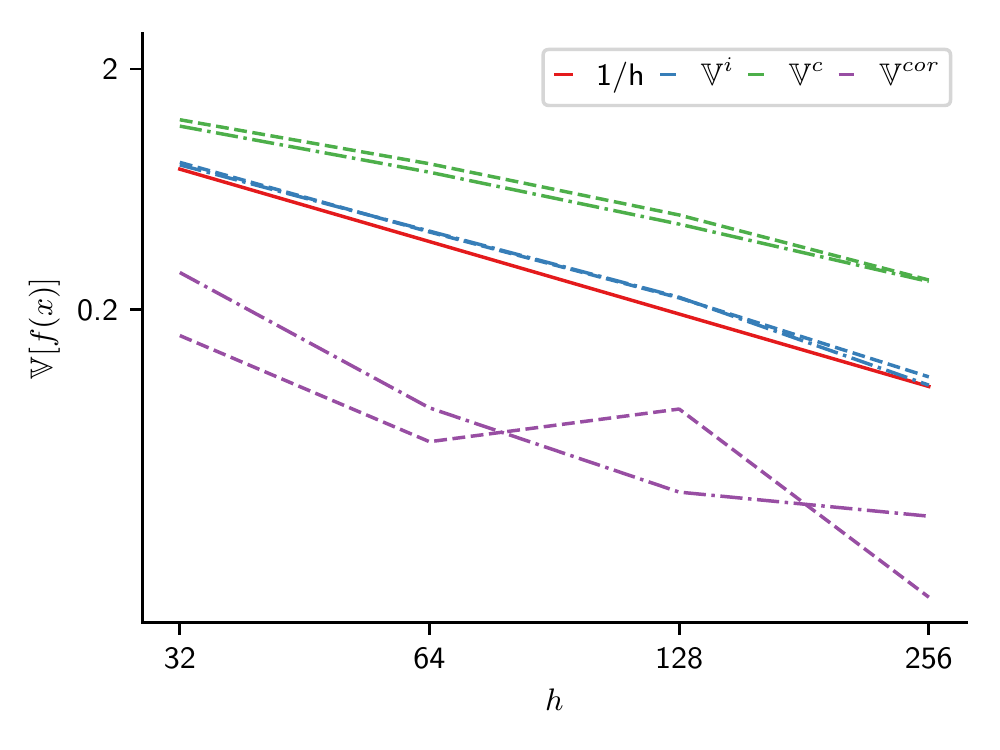}
  \end{center}
  \vspace{-15pt}
\end{minipage}
\hspace{0pt}
\begin{minipage}{.45\textwidth}
  \centering
  \begin{center}
    \includegraphics[width=1.\textwidth]{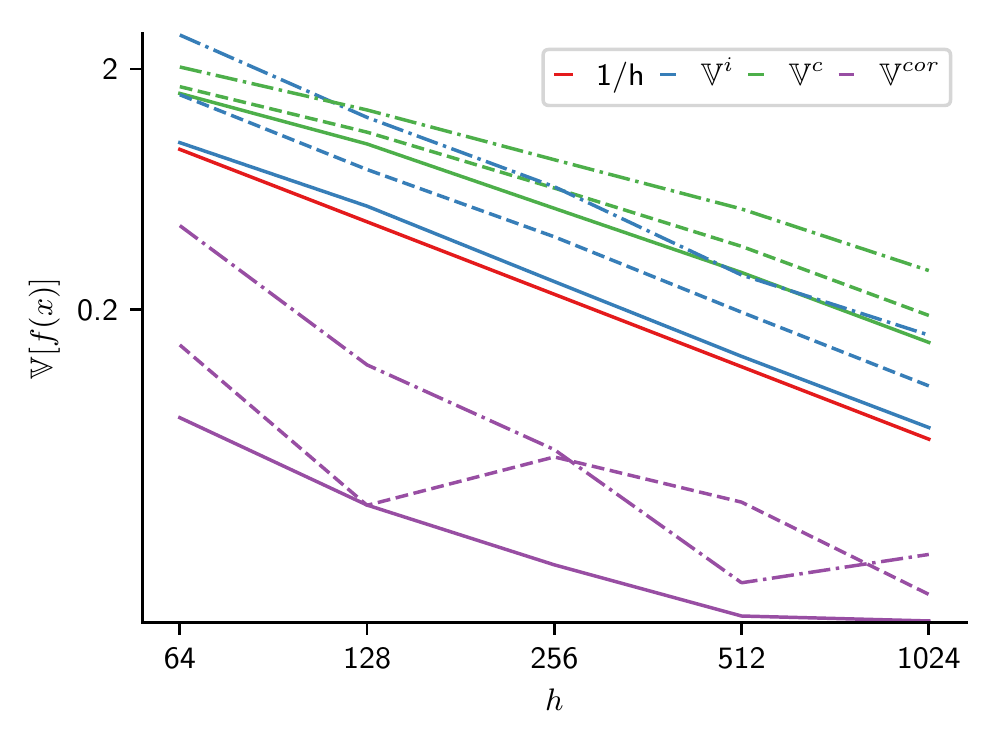}
  \end{center}
  \vspace{-20pt}
\end{minipage}
\begin{minipage}{.45\textwidth}
  \centering
  \begin{center}
    \includegraphics[width=1.\textwidth]{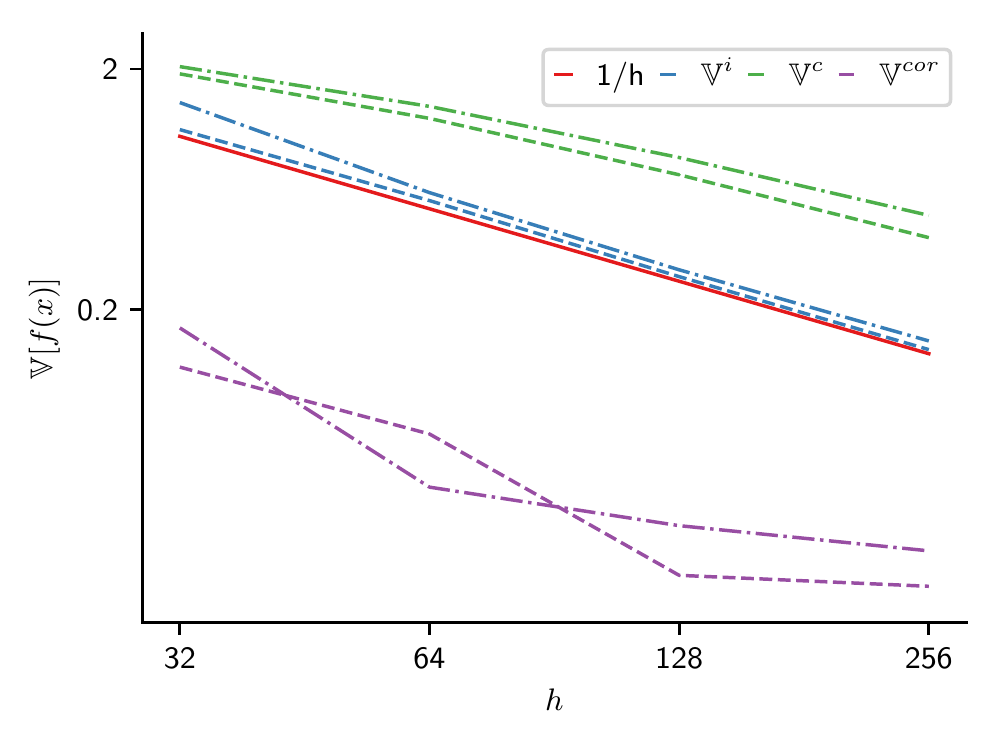}
  \end{center}
  \vspace{-20pt}
\end{minipage}
\hspace{0pt}
  \caption{Verification of the scaling of $1/h$ of the variance terms influenced by the kernel noise as well as $1/{h^2}$ of the residual. Although the theoretical result holds only for depth $L=2$ (line plots), the same scaling is observed for deeper networks as suggested by our informal result ($L=3$ in dashed lines, $L=5$ in dashed-dotted lines). \textit{Upper Row:} Predictive variance $\mathbb{V}^{i}, \mathbb{V}^{c}$, as well as $\mathbb{V}^{cor}$ of an ensemble of MLPs (left) and CNN (right) of various depths and widths trained on a subset of MNIST (N=100). \textit{Lower Row:} Predictive variance $\mathbb{V}^{i}, \mathbb{V}^{c}$ as well as $\mathbb{V}^{cor}$ of an ensemble of MLPs (left) and CNN (right) of various depths and widths trained on a subset of CIFAR10 (N=100).}
  \label{fig:app_scaling}
  \vspace{-10pt}
\end{figure*}

 \begin{figure*}
\centering
\begin{minipage}{.45\textwidth}
  \centering
  \begin{center}
    \includegraphics[width=1.\textwidth]{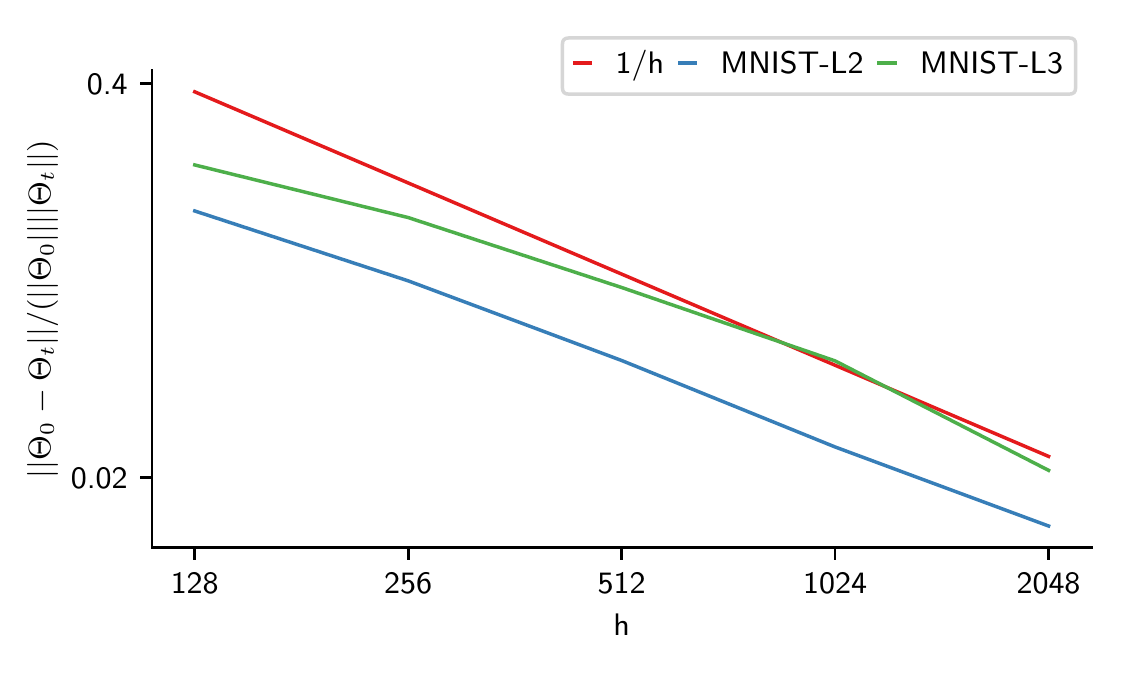}
  \end{center}
  \vspace{-15pt}
\end{minipage}
\hspace{0pt}
\begin{minipage}{.45\textwidth}
  \centering
  \begin{center}
    \includegraphics[width=1.\textwidth]{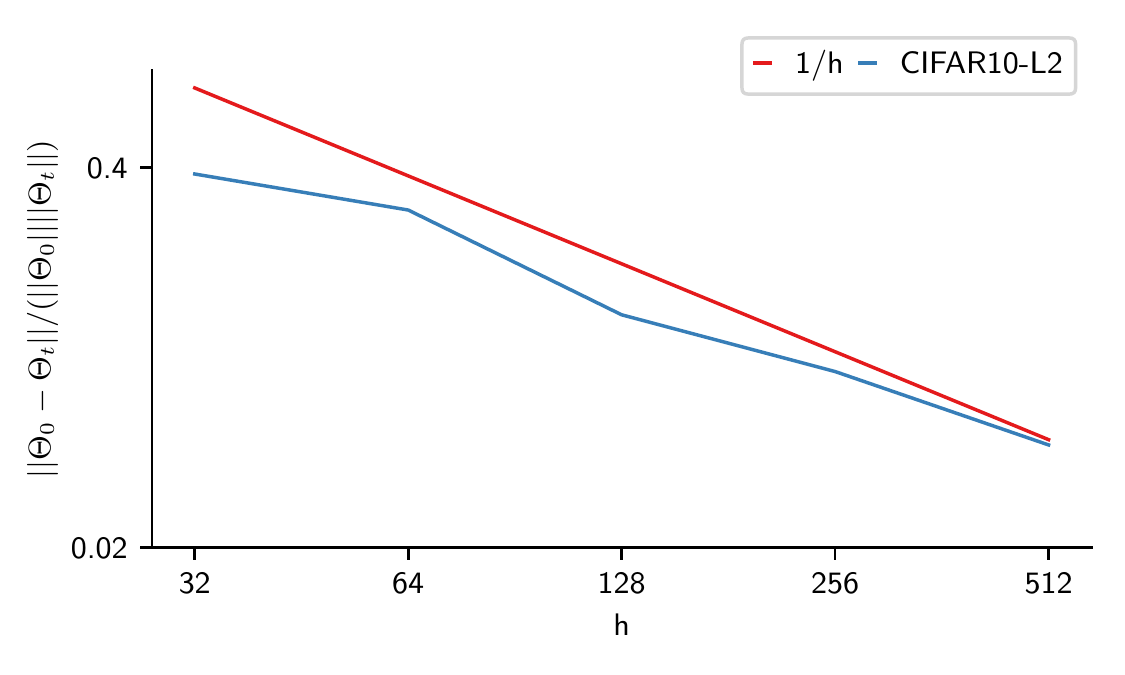}
  \end{center}
  \vspace{-15pt}
\end{minipage}
\hspace{0pt}
  \caption{Verification of scaling of $1/h$ of for the relative neural tangent kernel change. \textit{Left:}  MLPs on a subset of MNIST (N=100) for depth $L \in \{2,3\}$. \textit{Right:} CNNs on a subset of CIFAR10 (N=100) for depth $L=2$. All plots are in log-log scale.}
  \label{fig:kernel_scaling_time}
  \vspace{-10pt}
\end{figure*}
\begin{table}[htbp]
\small
\centering
  \caption{Further verification of scaling of $1/h$ of for the relative neural tangent kernel change. $\|\Theta_0-\Theta_t\|/\|\Theta_0\|$ for trained MLPs on a subset of MNIST (N=500) for depth $L \in \{2,3\}$.} 
  
  \vspace{10pt}
   \begin{tabular}{l|cccc}
    \toprule
    \multicolumn{1}{l|}{Depth} & \multicolumn{4}{l}{Width}  \\
     &512&1024&2048&4096  \\
     \midrule 
    2&0.2214&0.1406&0.0676& 0.0368\\
     \midrule
    3&0.3500&0.2218&0.1226&0.0701 \\
     \bottomrule 
  \end{tabular}
 \label{table:kernel_scaling_time}
\end{table}

\begin{table}[htbp]
\centering
  \caption{Predictive variance, test set accuracy and AUROC for deep ensembles of size 10 of CNNs ($h=256$, $L=3$)  trained an a subset (N=1000) of CIFAR10. We indicate small standard deviations $\sigma$ obtained over 3 ensembles of size $E$ with $\pm.00$. In all experiments, the various disentangled models show significant differences in behavior. All linearly trained models follow the gradient descent models behavior tightly. When optimizing with SGD, isolating initial noise sources still affect the ensemble behavior significantly and can lead to improved OOD detection as well as test set accuracy in this more realistic settings.}
  \vspace{-0.25cm}
  \vspace{10pt}
   \begin{tabular}{l|l|l|ccccc}
    \toprule
    \multicolumn{1}{l|}{Model} &\multicolumn{7}{l}{CNN, CIFAR10, N=1000, E=10, $\eta$=0.1 }  \\
    \midrule
       &$\mathbb{V}$&Test (\%)  &$\text{C100}$ & $\text{SVHN}$ & $\text{LSUN}$ & $\text{TIN}$ & $\text{iSUN}$ \\
     \midrule
    $f^{\text{lin}}$  &0.400$^{\pm 0.005}$&  36.43$^{\pm .90}$  &0.537$^{\pm .005}$& .532$^{\pm .006}$ & .809$^{\pm .004}$ &0.796$^{\pm .003}$& .783$^{\pm .004}$\\
    $f^{\text{lin-c}}$  &0.106$^{\pm 0.001}$&  37.20$^{\pm .44}$ &0.535$^{\pm .002}$& .567$^{\pm .006}$ & .693$^{\pm .001}$ &0.689$^{\pm .003}$& .674$^{\pm .004}$\\ 
    $f^{\text{lin-a}}$  &1.277$^{\pm 0.051}$&  30.90$^{\pm .53}$ &0.526$^{\pm .002}$& .510$^{\pm .006}$ & .764$^{\pm .003}$ &0.749$^{\pm .004}$& .738$^{\pm .000}$ 	\\ 
    $f^{\text{lin-i}}$  &0.443$^{\pm 0.008}$&  32.85$^{\pm .21}$ &0.531$^{\pm .001}$& .591$^{\pm .003}$ & .683$^{\pm .001}$ &0.681$^{\pm .004}$& .660$^{\pm .000}$\\ 
     \midrule
    $f^{\text{gd}}$ &0.442$^{\pm 0.004}$&  39.70$^{\pm .52}$ &0.534$^{\pm .003}$& .516$^{\pm .002}$ & .789$^{\pm .003}$ &0.774$^{\pm .001}$& .763$^{\pm .004}$ 	\\
    $f^{\text{gd-c}}$ &0.112$^{\pm 0.001}$&  37.47$^{\pm .49}$ &0.535$^{\pm .002}$& .562$^{\pm .004}$ & .691$^{\pm .004}$ &0.683$^{\pm .003}$& .670$^{\pm .002}$ \\
    $f^{\text{gd-a}}$ &1.316$^{\pm 0.045}$&  30.53$^{\pm 1.15}$ &0.527$^{\pm .002}$& .509$^{\pm .004}$ & .758$^{\pm .005}$ &0.746$^{\pm .004}$&  .734$^{\pm .003}$ 	\\ 
    $f^{\text{gd-i}}$  &0.505$^{\pm 0.004}$&   31.20$^{\pm .14}$ &0.524$^{\pm .000}$& .583$^{\pm .000}$ & .656$^{\pm .003}$ &0.654$^{\pm .007}$& .638$^{\pm .003}$ \\ 
    \midrule
    \midrule
    \multicolumn{1}{l|}{Train} &\multicolumn{7}{l}{CIFAR10, N=50000, E=5, batchsize=1000, $\eta$=0.1} \\
    \midrule 
       & $\mathbb{V}$& Test (\%) &$\text{C100}$ & $\text{SVHN}$ & $\text{LSUN}$ & $\text{TIN}$ & $\text{iSUN}$\\
         \midrule
       $f^{\text{sgd}}$&.03$^{\pm .00}$& 62.68 $^{\pm .36}$ &.557$^{\pm .00}$ &.557$^{\pm .01}$	&.884$^{\pm .00}$	&.878$^{\pm .00}$ &.864$^{\pm .00}$\\
    $f^{\text{sgd-c}}$ &.01$^{\pm .00}$& 57.03 $^{\pm .14}$& .548$^{\pm .00}$ &.554$^{\pm .00}$	&.791$^{\pm .00}$ 	&.791$^{\pm .00}$ &.781$^{\pm .00}$ \\
    $f^{\text{sgd-a}}$ &.19$^{\pm .01}$& 58.83 $^{\pm .22}$ & .536$^{\pm .00}$ &.455	$^{\pm .00}$&.864$^{\pm .00}$	&.858$^{\pm .00}$	&.845$^{\pm .00}$ \\
     \bottomrule 
    
     \bottomrule 
  \end{tabular}
 \label{table:all_cifar_auroc}
\end{table}

\begin{table}[htbp]
\small
\centering
  \caption{Test set accuracy and AUROC for deep ensembles of size 5 of MLPs ($h=1024$, $L=3$) trained on full MNIST. We indicate small standard deviations $\sigma$ obtained over 3 ensembles with $\pm.00$. In all experiments, the various disentangled models show significant differences in behavior. All linearly trained models follow the gradient descent models behavior tightly even in this regime of full MNIST. }
  \vspace{-0.25cm}
  \vspace{10pt}
   \begin{tabular}{l|c|c|cccc}
    \toprule
    \multicolumn{1}{l|}{Model} & \multicolumn{4}{l}{MLP, MNIST, N=50000, $\eta$=0.1}  \\
    \midrule
       & $\mathbb{V}$ &Test (\%) & \text{FM} & \text{EM} & \text{KM} & \\
     \midrule
       $f^{\text{sgd}}$ &.08$^{\pm .00}$ &   95.7$^{\pm .1}$ & .974$^{\pm .01}$ & .930$^{\pm .00}$& .991$^{\pm .00}$	\\
    $f^{\text{sgd-c}}$ & .01$^{\pm .00}$ & 94.4$^{\pm .0}$  & .924$^{\pm .02}$ & .873$^{\pm .01}$& .962$^{\pm .00}$	\\
    $f^{\text{sgd-a}}$ & .22$^{\pm .03}$ &97.5$^{\pm .1}$  & .988$^{\pm .00}$ & .943$^{\pm .00}$& .995$^{\pm .00}$	\\
    \midrule 
    $f^{\text{lin}}$  & .05$^{\pm .00}$ & 96.5$^{\pm .1}$ & .965$^{\pm .01}$ & .986$^{\pm .00}$& .995$^{\pm .00}$	\\
    $f^{\text{lin-c}}$ & .01$^{\pm .00}$ & 94.4$^{\pm .0}$ & .923$^{\pm .01}$ & .872$^{\pm .02}$& .965$^{\pm .00}$	\\ 
    $f^{\text{lin-a}}$  & .23$^{\pm .03}$ &97.8$^{\pm .0}$ & .987$^{\pm .00}$ & .940$^{\pm .00}$& .993$^{\pm .00}$	\\ 
     \bottomrule 
  \end{tabular}
 \label{table:auroc_mnist_50k}
\end{table}

\begin{table}[htbp]
\small
\centering
  \caption{Comparison of gradient and stochastic gradient descent for deep ensembles of MLPs ($h=1024$, $L=3$) trained on a subset (N=1000) of MNIST. For the linearly trained models, gradients for the different batches are computed at initialization and applied stochastically. The two different optimization methods lead to practically indistinguishable behavior measured through test set accuracy and AUROC. The batchsize is set to 100 for the sgd models.}
  \vspace{10pt}
   \begin{tabular}{l|c|cccc}
    \toprule
    \multicolumn{1}{l|}{Model} & \multicolumn{4}{l}{MLP, MNIST, N=1000, $\eta$=0.1}  \\
    \midrule
       & Test & \text{FM} & \text{EM} & \text{KM} & \\
     \midrule
    $f^{\text{sgd-lin}}$   & 91.60 & .968 & .922& .982	\\
    $f^{\text{sgd-lin-c}}$ & 89.70 & .930 & .879& .965	\\
    $f^{\text{sgd-lin-a}}$ & 91.20 & .980 & .923& .987	\\ 
    \midrule
    $f^{\text{sgd}}$   & 91.10 & .976 & .924& .986	\\
    $f^{\text{sgd-c}}$ & 89.70 & .932 & .882& .966	\\
    $f^{\text{sgd-a}}$ & 90.50 & .981 & .923& .988	\\ 

    \midrule
    \midrule
    $f^{\text{lin}}$   & 91.60 & .968 & .922& .982	\\
    $f^{\text{lin-c}}$ & 89.70 & .930 & .879& .965	\\
    $f^{\text{lin-a}}$ & 91.20 & .980 & .923& .987	\\ 
    \midrule
    $f^{\text{gd}}$   & 91.10 & .976 & .923& .986	\\
    $f^{\text{gd-c}}$ & 89.70 & .932 & .882& .966	\\
    $f^{\text{gd-a}}$ & 90.50 & .981 & .923& .988	\\ 
  \end{tabular}
 \label{table:sgd_vs_gd}
\end{table}

\begin{figure}
    \centering
    \includegraphics[width = 0.4\textwidth]{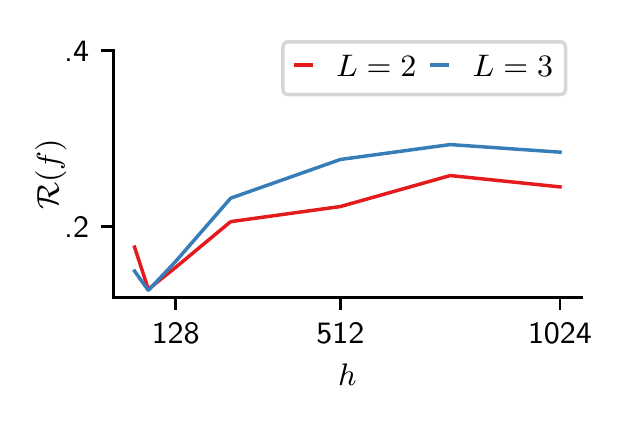}
    \vspace{-0.4cm}
    \caption{$\mathcal{R}(f)$ of CNNs with multiple widths and depths ($L \in \{2,3\}$) trained on a subset (100) on CIFAR10. As predicted, we observe that $\mathcal{R}(f)$ remains bounded as the width increases.}
    \label{fig:app_linearization}
\end{figure} 

\begin{table}[htbp]
 \label{Tab:wrn2}
 \setlength{\tabcolsep}{4pt}
\small
\centering
  \caption{Test set accuracy and AUROC for WRN 28-10 ensembles of size 8 trained on CIFAR100, with the cross entropy (CE) and MSE loss. Standard deviations $\sigma$ computed over 5 seeds are indicated with $\pm$. In bold are values that statistically significantly outperform $f^{\text{sgd}}$ with $p<0.2$.}
     \vspace{-0.25cm}
  \vspace{10pt}
   \begin{tabular}{l|c|ccccc}
    \toprule
    \multicolumn{1}{l|}{Model} &\multicolumn{5}{l}{WRN 28-10, CIFAR100, batchsize 128, $\eta$=0.03}\\
    \midrule
       &  Test (\%) & $\text{C10}$ & $\text{SVHN}$ & $\text{LSUN}$ &
       $\text{TIN}$  &$\text{iSUN}$ \\
    \midrule
$f^{\text{sgd}}(CE)$   &67.57$^{\pm 0.37}$ & 0.703$^{\pm 0.003}$ & 0.776$^{\pm 0.005}$ & 0.735$^{\pm 0.003}$ & 0.742$^{\pm 0.004}$ & 0.741$^{\pm 0.003}$ \\
$f^{\text{sgd-c}}(CE)$   &67.59$^{\pm 0.35}$ & \textbf{0.708$^{\pm 0.003}$ }& 0.778$^{\pm 0.004}$ & \textbf{0.738$^{\pm 0.004}$ }& 0.744$^{\pm 0.005}$ & 0.744$^{\pm 0.006}$ \\
$f^{\text{sgd-a}}(CE)$ &67.26$^{\pm 0.11}$ & 0.705$^{\pm 0.003}$ & 0.773$^{\pm 0.003}$ & 0.735$^{\pm 0.003}$ & 0.741$^{\pm 0.001}$ & 0.742$^{\pm 0.002}$ \\
    \midrule
$f^{\text{sgd}}(MSE)$   &63.00$^{\pm 0.15}$ & 0.704$^{\pm 0.003}$ & 0.741$^{\pm 0.005}$ & 0.715$^{\pm 0.004}$ & 0.739$^{\pm 0.005}$ & 0.722$^{\pm 0.005}$ \\
$f^{\text{sgd-c}}(MSE)$   &62.90$^{\pm 0.06}$ & 0.705$^{\pm 0.002}$ & \textbf{0.746$^{\pm 0.004}$} & \textbf{0.720$^{\pm 0.005}$} & \textbf{0.743$^{\pm 0.004}$ }& 0.725$^{\pm 0.006}$ \\
$f^{\text{sgd-a}}(MSE)$ &62.19$^{\pm 0.24}$ & \textbf{0.710$^{\pm 0.004}$} & 0.740$^{\pm 0.009}$ & \textbf{0.729$^{\pm 0.006}$} & \textbf{0.749$^{\pm 0.004}$ }& \textbf{0.730$^{\pm 0.004}$} \\
     \bottomrule 
  \end{tabular}
  \vspace{-0.25cm}
\end{table}

\begin{table}[htbp]
 \label{Tab:wrn3}
 \setlength{\tabcolsep}{4pt}
\small
\centering
  \caption{Test set accuracy and AUROC for an AlexNet ensembles of size 5 trained on FashionMNIST, with the cross entropy (CE) loss. Standard deviations $\sigma$ computed over 3 seeds are indicated with $\pm$. In bold are values that statistically significantly outperform $f^{\text{sgd}}$ with $p<0.2$.}
     \vspace{-0.25cm}
  \vspace{10pt}
   \begin{tabular}{l|c|ccc}
    \toprule
    \multicolumn{1}{l|}{Model} &\multicolumn{3}{l}{AlexNet, FMNIST, batchsize 512, $\eta$=0.01}\\
    \midrule
       &  Test (\%) & $\text{MNIST}$ & $\text{EMNIST}$ & $\text{KMNIST}$ \\
    \midrule
$f^{\text{sgd}}(CE)$   &93.22$^{\pm 0.39}$ & 0.868$^{\pm 0.014}$ & 0.856$^{\pm 0.004}$ & 0.935$^{\pm 0.006}$ \\
$f^{\text{sgd-c}}(CE)$   &93.21$^{\pm 0.13}$ & \textbf{0.883$^{\pm 0.007}$ }&\textbf{ 0.867$^{\pm .011}$} & 0.933$^{\pm 0.005}$ \\
$f^{\text{sgd-a}}(CE)$ &93.12$^{\pm 0.09}$ & \textbf{0.880$^{\pm 0.011}$} & 0.838$^{\pm 0.006}$ & 0.926$^{\pm 0.003}$ \\
     \bottomrule 
  \end{tabular}
  \vspace{-0.25cm}
\end{table}

 \begin{figure*}
\centering
\begin{minipage}{.25\textwidth}
  \centering
  \begin{center}
    \includegraphics[width=1.\textwidth]{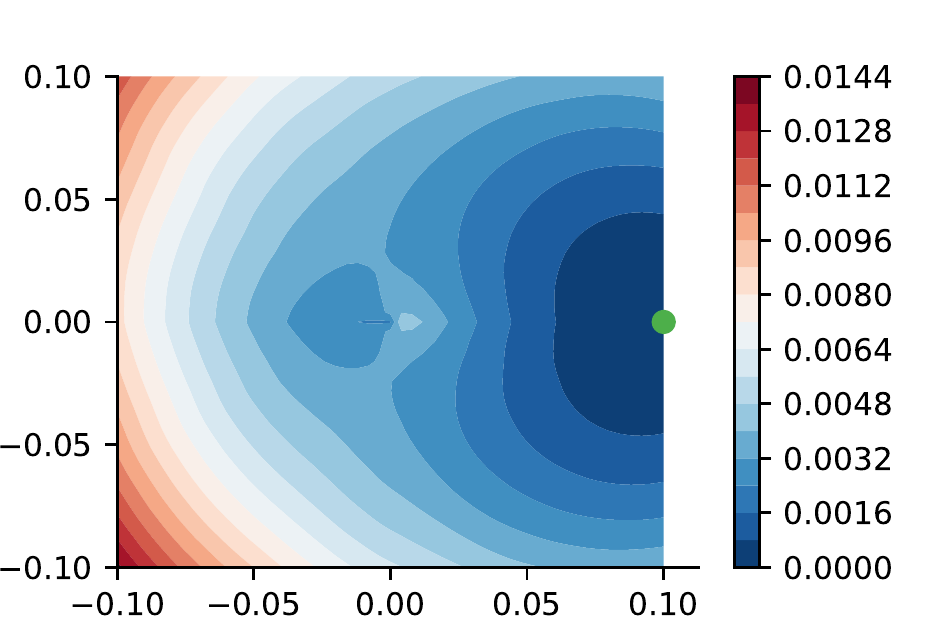}
  \end{center}
  \vspace{-10pt}
\end{minipage}
\hspace{-5pt}
\begin{minipage}{.25\textwidth}
  \centering
  \begin{center}
    \includegraphics[width=1.\textwidth]{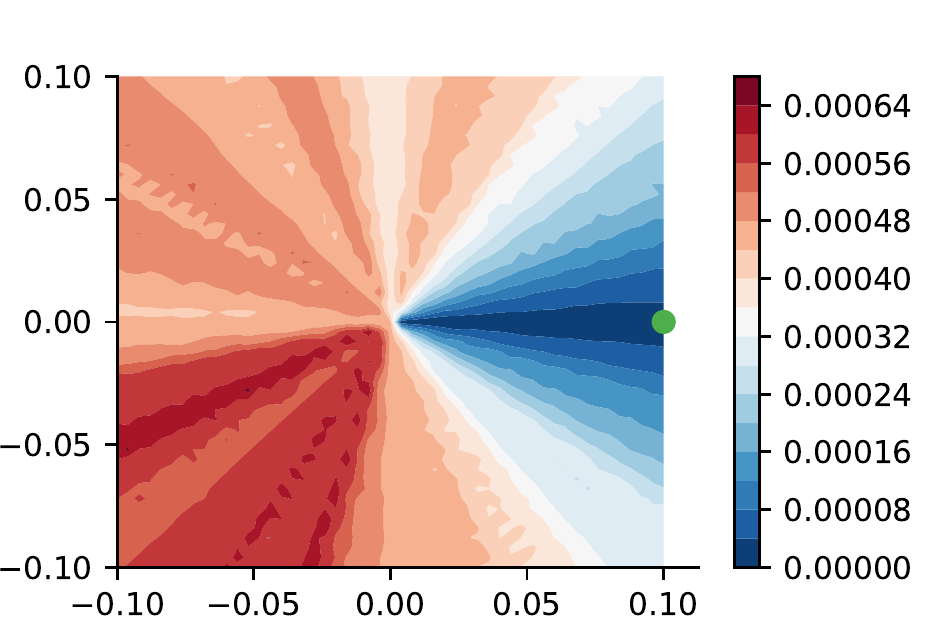}
  \end{center}
  \vspace{-10pt}
\end{minipage}
\hspace{-5pt}
\begin{minipage}{.25\textwidth}
  \centering
  \begin{center}
    \includegraphics[width=1.\textwidth]{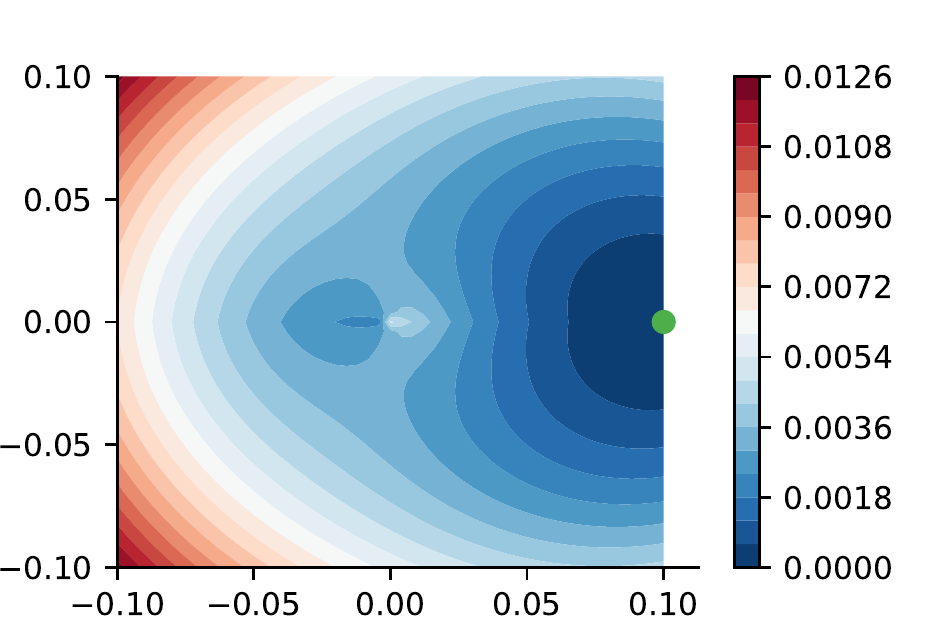}
  \end{center}
  \vspace{-10pt}
\end{minipage}
\hspace{-5pt}
\begin{minipage}{.25\textwidth}
  \centering
  \begin{center}
    \includegraphics[width=1.\textwidth]{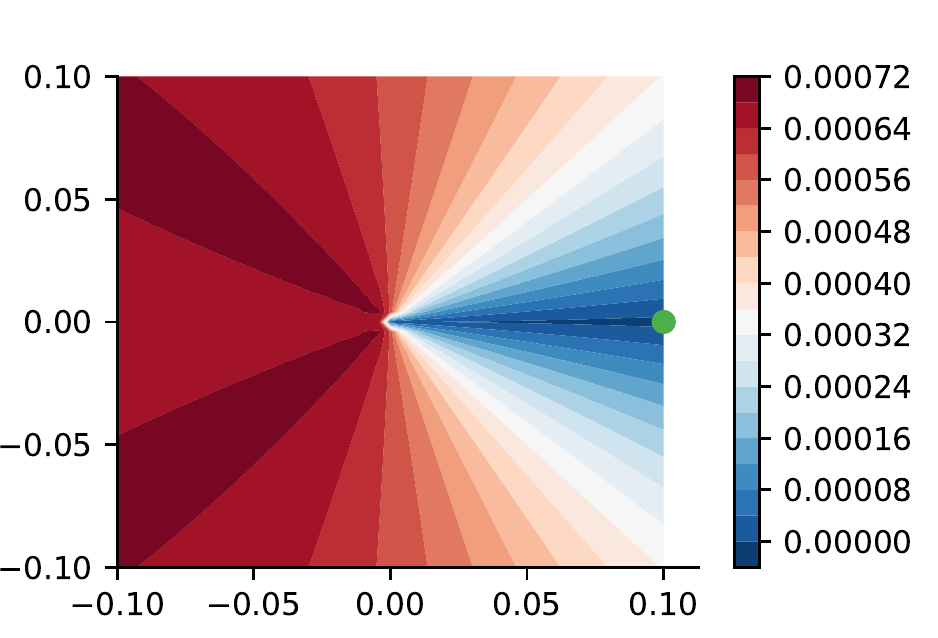}
  \end{center}
  \vspace{-10pt}
\end{minipage}
\hspace{-5pt}
  \caption{\textit{Left 2 plots}: Empirically measured $\mathbb{V}^a$ (\textit{left}) and $\mathbb{V}^c$ (\textit{center left}) using an ensemble of size 100 linearly trained 1 hidden layer MLPs with 512 hidden units trained on a single datapoint (green point) with target 1 in the 2d space. 
  \textit{Right 2 plots}: analytically computed $\mathbb{V}^a$ (\textit{center right}) and $\mathbb{V}^c$ (\textit{right})  using the derivation in \ref{ana_rect_angle}.
  }
  \label{fig:toy_validation}
  \vspace{-10pt}
\end{figure*}

\end{document}